\tikzset{
  treenode/.style = {align=center, inner sep=0pt, text centered,
    font=\sffamily},
  arn_n/.style = {treenode, circle, black, font=\sffamily\bfseries, draw=black,
    fill=white, text width=1.5em},
  arn_r/.style = {treenode, circle, black, font=\sffamily\bfseries, draw=black,
    fill=white, text width=1.0em},
  arn_x/.style = {treenode, rectangle, draw=black,
    minimum width=0.5em, minimum height=0.5em}
}
\newtheorem{theorem}{Theorem}
\newtheorem{lemma}{Lemma}
\newtheorem{corollary}{Corollary}
\newtheorem{definition}{Definition}
\newcommand{\argmin}[2]{\textrm{argmin}_{#1}~#2}
\newcommand{\E}[2]{\mathbb{E}_{#1}\left[#2\right]}
\newcommand{\Var}[2]{\textrm{Var}_{#1}\left[#2\right]}
\newcommand{\ind}[1]{\mathbbm{1}\left\{#1\right\}}
\newcommand{\rademacher}[0]{\mathfrak{R}}
\newcommand{\D}[0]{\mathcal{D}}
\newcommand{\X}[0]{\mathcal{X}}
\newcommand{\Y}[0]{\mathcal{Y}}
\newcommand{\p}[0]{\mathcal{P}}
\newcommand{\A}[0]{\mathcal{A}}
\newcommand{\Z}[0]{\mathcal{Z}}
\newcommand{\F}[0]{\mathcal{F}}
\newcommand{\phat}{\hat{p}}
\newcommand{\G}[0]{\mathcal{G}}
  \newcommand{\colornote}[3]{{\color{#1}\bf{#2 #3}\normalfont}}
  \newcommand{\colornote}[3]{}
\definecolor{darkred}{rgb}{0.7,0.1,0.1}
\definecolor{darkgreen}{rgb}{0.1,0.5,0.1}
\definecolor{cyan}{rgb}{0.7,0.0,0.7}
\definecolor{dblue}{rgb}{0.2,0.2,0.8}
\definecolor{maroon}{rgb}{0.76,.13,.28}
\definecolor{burntorange}{rgb}{0.81,.33,0}
\definecolor{royalpurple}{rgb}{0.47,.31,0.66}
\newcommand{\sysname}{\textsc{Thanos}\xspace}
\newcommand{\lspread}{L_{\text{spread}}}
\newcommand{\supcon}{L_{\text{sup}}}
\newcommand{\cnce}{L_{\text{cNCE}}}
\newcommand{\lspreadhat}{\hat{L}_{\text{spread}}}
\newcommand{\supconhat}{\hat{L}_{\text{sup}}}
\newcommand{\cncehat}{\hat{L}_{\text{cNCE}}}
\newcommand{\cmark}{\ding{51}}
\newcommand{\xmark}{\ding{55}}
  \newcommand{\num}[1]{{\color{red}\bf{#1}\normalfont}}
  \newcommand{\num}[1]{#1}
\newif\ifarxiv
\begin{document}

\title{Perfectly Balanced: Improving Transfer and Robustness of Supervised Contrastive Learning}

\author[1]{Mayee~F.~Chen$^*$}
\author[1]{Daniel~Y.~Fu\thanks{Equal Contribution.}}
\author[1]{Avanika~Narayan}
\author[1]{Michael~Zhang}
\author[2]{Zhao~Song}
\author[1]{Kayvon~Fatahalian}
\author[1]{Christopher~R\'e}

\date{}

\affil[1]{Department of Computer Science, Stanford University}
\affil[2]{Adobe Research}
\affil[1]{\footnotesize{\texttt{\{mfchen, danfu, avanikan, mzhang, kayvonf, chrismre\}@cs.stanford.edu}}}
\affil[2]{\footnotesize{\texttt{zsong@adobe.com}}}

\maketitle


\begin{abstract}

An ideal learned representation should display transferability and robustness.
Supervised contrastive learning (SupCon) is a promising method for training accurate models, but produces representations that do not capture these properties due to class collapse---when all points in a class map to the same representation.
Recent work suggests that ``spreading out'' these representations improves them, but the precise mechanism is poorly understood.
We argue that creating spread alone is insufficient for better representations, since spread is invariant to permutations within classes.
Instead, both the correct degree of spread and a mechanism for breaking this invariance are necessary.
We first prove that adding a weighted class-conditional InfoNCE loss to SupCon controls the degree of spread.
Next, we study three mechanisms to break permutation invariance: using a constrained encoder, adding a class-conditional autoencoder, and using data augmentation.
We show that the latter two encourage clustering of latent subclasses under more realistic conditions than the former.
Using these insights, we show that adding a properly-weighted class-conditional InfoNCE loss and a class-conditional autoencoder to SupCon achieves \num{11.1} points of lift on coarse-to-fine transfer across \num{5} standard datasets and \num{4.7} points on worst-group robustness on \num{3} datasets, setting state-of-the-art on CelebA by \num{11.5} points.

\end{abstract}


\section{Introduction}
\label{sec:intro}

\begin{figure*}[t]
  \includegraphics[width=6.75in]{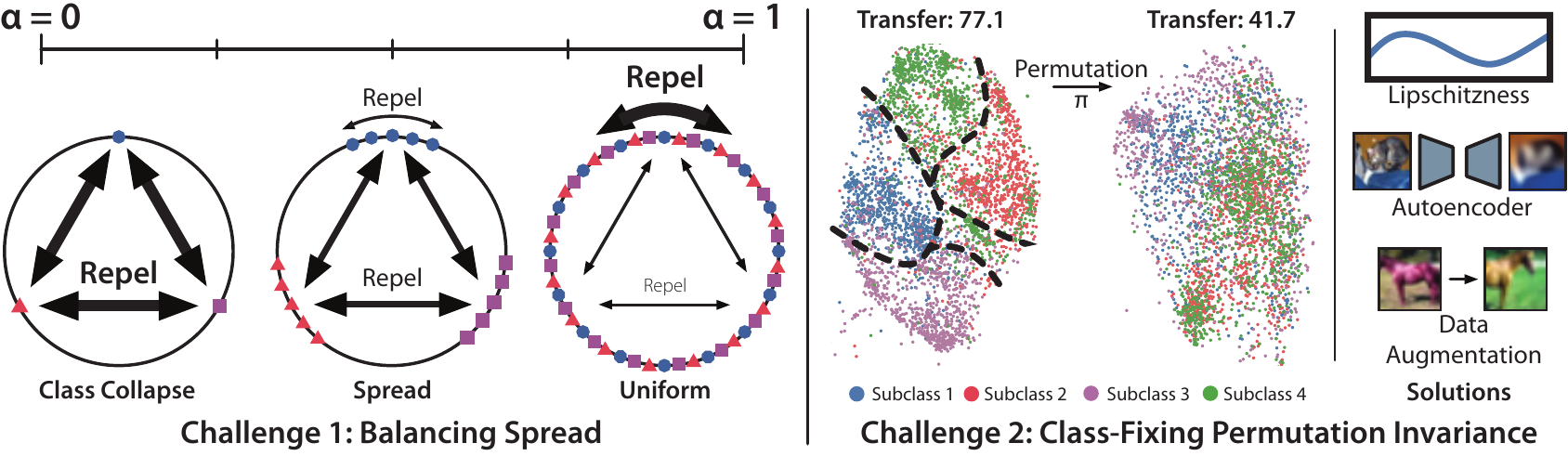}
  \centering
  \caption{
    There are two key challenges to achieving better representations with SupCon.
    Left: The first challenge is balancing multiple contrastive loss terms with competing geometries.
    We show that adding a weighted class-conditional InfoNCE term can balance the geometries and induce spread in the representation geometry.
    Right: The second challenge is that representation geometries may be invariant to class-fixing permutations.
    The two example geometries shown both have spread, but vastly different coarse-to-fine transfer performance.
    We analyze three mechanisms for addressing this challenge: constraining the encoder, adding a class-conditional autoencoder, and using data augmentation.
    Best viewed in color.
  }
  \vspace{-1.0em}
  \label{fig:banner}
\end{figure*}

Learning a representation with a favorable geometry is a critical challenge for modern machine learning.
Good geometries can engender strong downstream transfer performance and robustness to subgroup imbalances, whereas poor geometries may have low transferability and be brittle~\citep{islam2021broad, sohoni2020george}.
However, producing---or even characterizing---a good geometry can be difficult.

We focus on the challenges in doing so with supervised contrastive learning (SupCon).
SupCon is a promising method for training accurate machine learning models~\citep{khosla2020supervised}, but suffers from class collapse---wherein each point in the same class has the same representation, as in Figure~\ref{fig:banner} far left~\cite{graf2021dissecting}.
Collapsed representations cannot distinguish fine-grained details within classes---in particular \emph{latent subclasses}---resulting in poor transferability and robustness.
Modifications to SupCon that heuristically ``spread out'' its representations have shown empirical promise~\citep{islam2021broad}, but a precise understanding of spread---how separated individual points are in representation space---and how to control it is lacking.

Furthermore, spread alone is not sufficient to explain improved representations.
We observe that modifications to SupCon that increase spread are invariant to \textit{class-fixing permutations}.
That is, the loss value does not change when points of the same class are arbitrarily permuted in representation space.
For example, Figure~\ref{fig:banner} right visualizes two geometries that both have spread but differ in representation quality, as suggested by the significant gap in transfer learning performance (\num{35.4} points).
Thus, while spread may be important, another mechanism is needed to break class-fixing permutation invariance for good performance.

We argue that these are the two key challenges to improving SupCon's representations: creating the correct degree of spread, and breaking class-fixing permutation invariance.
This paper makes progress on these challenges.

\textbf{Challenge 1: Balancing Spread.}
We first prove a simple result that a class-collapsed representation cannot have good transfer performance, which motivates spread.
We then analyze whether $\lspread$, a loss function that combines SupCon with a class-conditional InfoNCE loss, can induce spread.

We find that previous approaches for analyzing contrastive losses encounter a technical challenge because SupCon and InfoNCE have incompatible optimal geometries (class collapse and uniformity on the hypersphere, respectively).
For example, \citet{wang2020understanding} analyze individual loss components in isolation, but doing so risks drawing misleading conclusions when the loss components are incompatible.
Further, finding exact solutions to optimization problems on the hypersphere is fundamentally difficult;
a classic example is the Thomson problem~\citep{thomson1897cathode}, which has evaded an exact solution after a century of study.

We bypass these problems by constructing a distribution that is neither collapsed nor uniform and analyzing its loss.
We introduce $s_f(y)$, a notion of class variance, to measure spread on this distribution. We show that this distribution has an intermediate degree of spread by deriving bounds for the weight $\alpha$ on the class-conditional InfoNCE loss within which this distribution attains lower loss than either extreme.
While this result does not fully characterize the geometry, it suggests that setting $\alpha$ properly can induce an optimal distribution with appropriate spread---which we validate with measurements on CIFAR10.

\textbf{Challenge 2: Breaking Permutation Invariance.}
Our first result demonstrates that $\lspread$ can induce spread but does not give insight on class-fixing permutation invariance.
We formally define class-fixing permutation invariance and prove that $\lspread$ is subject to it absent other interventions. 

This motivates the question: how should we break class-fixing permutation invariance?
We show that inducing an inductive bias towards clustering of latent subclasses can break permutation invariance---and more importantly, can result in good coarse-to-fine transfer performance.
We introduce $\sigma_f(z)$, a measure of subclass clustering in representation space, and show that coarse-to-fine generalization error scales with $\sigma_f(z)/s_f(y)$.

A standard approach to controlling $\sigma_f(z)$ is assuming Lipschitzness of the model.
However, Lipschitzness is a strong assumption for modern deep networks, which are powerful enough to memorize random noise~\citep{zhang2016understanding}.
In empirical measurements, we find that modern deep networks display poor Lipschitzness, and thus the Lipschitzness assumption is insufficient for inducing clustered subclass representations. 

We thus propose two alternatives that can bound $\sigma_f(z)$ under more realistic assumptions: directly encoding fine-grained details by concatenating the representations from a class-conditional autoencoder, and using data augmentation in the class-conditional InfoNCE loss.
The former only requires a ``reverse Lipschitz'' decoder to upper bound $\sigma_f(z)$, and can do so by a constant factor tighter than a general (non-conditional) autoencoder.
The latter only requires the encoder to be Lipschitz over data augmentations to induce subclass clustering---and can also explain observations from prior work~\citep{islam2021broad}.
We validate these findings by measuring Lipschitzness constants and $\sigma_f(z)/s_f(y)$ on real data; we find that these alternate assumptions are more realistic than overall Lipschitzness, and that data augmentation and autoencoders help induce subclass clustering.

\textbf{Empirical Validation}
Using our theoretical insights, we propose \sysname: adding a class-conditional InfoNCE loss and a class-conditional autoencoder to SupCon.
We evaluate \sysname\ on two tasks designed to evaluate how well it preserves subclasses:
\ifarxiv
\begin{itemize}
\else
\begin{itemize}[itemsep=0.5pt,topsep=0pt,leftmargin=*]
\fi
    \item \textbf{Coarse-to-fine transfer learning} trains a model to classify superclasses but use the representations to distinguish subclasses.
    \sysname\ outperforms SupCon by \num{11.1} points on average across \num{5} standard datasets.
    \item \textbf{Worst-group robustness} evaluates how well a model can identify underperforming sub-groups and maintain high performance on them.
    \sysname\ identifies underperforming sub-groups \num{7.7} points better than previous work~\citep{sohoni2020george} and achieves \num{4.7} points of lift on worst-group robustness across \num{3} datasets, setting state-of-the-art on CelebA by \num{11.5} points.
    \sysname\ can even outperform GroupDRO~\citep{sagawa2019groupdro}, a state-of-the-art robustness algorithm that uses \textit{ground-truth} sub-group labels.
\end{itemize}

\section{Background}
\label{sec:background}

Section~\ref{subsec:setup} presents our data model and the coarse-to-fine transfer task.
Section~\ref{subsec:contrastive_loss} presents $\lspread$, a simple variant of SupCon that adds a weighted class-conditional InfoNCE loss.
Section~\ref{subsec:background_geometry} discusses geometry of contrastive losses.

\subsection{Data Setup}\label{subsec:setup}

Input data $x \in \X$ are drawn from a distribution $\p$ with deterministic class $y = h(x)$, where $y \in \Y = \{0, \dots, K - 1\}$. We assume that the data is class-balanced such that $\Pr(y = i) = \frac{1}{K}$ for all $i \in \Y$.

Data points also belong to \emph{latent subclasses}. 
Following~\citet{sohoni2020george}, we denote a subclass as a latent discrete variable $z \in \Z$. 
$\Z$ can be partitioned into disjoint subsets $S_0, \dots, S_{K - 1}$ such that if $z \in S_k$, then its corresponding $y$ label is equal to $k$. 
For simplicity, we assume that there are two subclasses for each label $k$, e.g. $|S_k| = 2$. 
The data generating process proceeds as follows: first, the latent subclass $z$ is sampled with proportion $p(z)$. Then, $x$ is sampled from the distribution $\p_z = p(\cdot | z)$, and its corresponding deterministic label is denoted $y = S(z)$. 
Let $h_s(x): \X \rightarrow \Z$ denote $x$'s subclass.

We have a class-balanced labeled training dataset $\D = \{(x_i, y_i)\}_{i = 1}^n$ where points $(x_i, z_i, y_i)$ are drawn i.i.d, and the value of each $z_i$ is unknown during training time. Denote $\D_y = \{x \in \D: h(x) = y \}$ and $\D_z = \{x \in \D: h_s(x) = z\}$, and denote their sizes by $n_y = |\D_y|$ and $n_z = |\D_z|$.

Contrastive learning trains an encoder $f: \X \rightarrow \mathbb{R}^d$ on $\D$ that maps inputs to representations in an embedding space $\mathbb{R}^d$.

\paragraph{Coarse-to-Fine Transfer}
Coarse-to-fine transfer evaluates how well an embedding trained on coarse classes $\Y$ distinguishes fine classes (subclasses) $\Z$.
Fix a $y$ and smakeuppose that $S_y = \{z, z'\}$. 
The task is to classify $z$ versus $z'$ using the encoder $f$ learned on $\D$. 
We are given a dataset of subclass labels, $\D_{s} = \{(x_i, z_i)\}_{i = 1}^m$. Denote $\D_{s, z} = \{x \in \D_s: h_s(x) = z \}$ and $m_z = |\D_{s, z}|$.
We learn linear weights $W_z, W_{z'} \in \mathbb{R}^d$ and construct an estimate $\phat(z | f(x))$ by using softmax scores $\phat(z | f(x)) = \frac{\exp(f(x)^\top W_{h_s(x)})}{\exp(f(x)^\top W_{z}) + \exp(f(x)^\top W_{z'})}$, where $f$ is fixed. We use the mean classifier to construct $W$, following prior work~\citep{arora2019theoretical}. That is, $W_z = \frac{1}{m_z} \sum_{x \in \D_{s, z}} f(x)$, and $W_{z'}$ is similarly defined.

We evaluate the performance of coarse-to-fine transfer with a $\gamma$-margin loss, defined on a point $(x, z)$ as 
\begin{align}
    \ell_{\gamma, f}(x, z) = 1 - \ind{\phat(z | f(x)) \ge \gamma \phat(z' | f(x))}
\end{align}
for $\gamma > 1$. That is, we want the model to output the correct subclass label at least $\gamma$ times more likely than the incorrect one. Define the $\gamma$-margin generalization error on $z$ as $L_{\gamma, f}(z) = \E{x \sim \p_z}{\ell_{\gamma, f}(x, z)}$.

\subsection{A Modified Supervised Contrastive Loss}\label{subsec:contrastive_loss}

Contrastive learning trains an encoder to produce representations of the data by pushing together similar points (positive pairs) and pulling apart different points (negative pairs).
We consider $\lspread$, a weighted sum of a supervised contrastive loss $\supcon$~\cite{khosla2020supervised} and a class-conditional InfoNCE loss $\cnce$.

Let $B$ be a batch of data from $\D$.
Define $P(i, B) \texttt{=} \{x^+ \in B \backslash i: h(x^+) = h(x_i) \}$ as the points with the same label as $x_i$ and $N(i, B) = \{x^- \in B \backslash i: h(x^-) \neq h(x_i) \}$ as points with a different label. Let $a(x_i)$ be an augmentation of $x_i$, and assume that augmentations of each sample are disjoint. 
Denote $\sigma_f(x, x') \texttt{=} \exp(f(x)^\top f(x') / \tau)$ with temperature hyperparameter $\tau$.
For $\alpha \in [0, 1]$, $\lspreadhat(f, x, B)$ on $x$ belonging to $B$ is:
\begin{align*}
\lspreadhat(f, x, B) &\texttt{=} (1 - \alpha) \supconhat(f, x, B) + \alpha \cncehat(f, x, B),
\end{align*}
where
\ifarxiv
\begin{align}
\supconhat(f, x_i, B) &= -\frac{1}{|P(i, B)|} \;\;\; \sum_{\mathclap{x^+ \in P(i, B)}}\;\; \log  \frac{\sigma_f(x_i, x^+)}{ \sigma_f(x_i, x^+) + \sum_{x^- \in N(i, B)} \sigma_f(x_i, x^-)}, \label{eq:sup}  \\
\cncehat(f, x_i, B) &= -\log \frac{\sigma_f(x_i, a(x_i))}{ \sum_{x^+ \in P(i, B)} \sigma_f(x_i, x^+)}. \label{eq:cnce}
\end{align}
\else
\begin{align}
&\supconhat(f, x_i, B) = -1/|P(i, B)| \times \label{eq:sup}\\
& \quad \sum_{\mathclap{x^+ \in P(i, B)}} \log \frac{\sigma_f(x_i, x^+)}{ \sigma_f(x_i, x^+) + \sum_{x^- \in N(i, B)} \sigma_f(x_i, x^-)}, \nonumber  \\
&\cncehat(f, x_i, B) = -\log \frac{\sigma_f(x_i, a(x_i))}{ \sum_{x^+ \in P(i, B)} \sigma_f(x_i, x^+)}. \label{eq:cnce}
\end{align}
\fi
The overall loss $\lspreadhat(f, B)$ is averaged over all points in $B$. $\supcon$ is a variant of the SupCon loss~\citep{khosla2020supervised}.
$\cnce$ is a class-conditional version of the InfoNCE loss, where the positive distribution consists of augmentations and the negative distribution consists points from the same class, intuitively encouraging them to be spread apart.

\subsection{Geometries of Contrastive Losses}\label{subsec:background_geometry}

We present a series of standard theoretical assumptions for analyzing contrastive geometry, and define two important distributions---class collapse and class uniformity.

\textbf{Assumptions}
We make several standard theoretical assumptions~\citep{graf2021dissecting, wang2020understanding, robinson2020contrastive}: 1) restrict the encoder $f$'s output space to be $\mathcal{S}^{d-1}$, the unit hypersphere (i.e. normalized outputs); 2) assume that $K \le d + 1$, such that a $K-$regular simplex inscribed in $\mathcal{S}^{d-1}$ exists; 3) assume that the encoder is \emph{infinitely powerful}, meaning that any distribution on $\mathcal{S}^{d-1}$ is realizable by $f(x)$.
We define the pushforward measure of the class-conditional distribution of $p(\cdot | h(x) = y)$ via $f$ as $\mu_y$ for $y \in \Y$, where $\mu_y \in \mathcal{M}(\mathcal{S}^{d-1})$ is over all Borel probability measures on the hypersphere. 
Define $\bm{\mu} = \{ \mu_y\}_{y \in \Y}$ as the overall pushforward measure corresponding to $\p \circ f^{-1} \in \mathcal{M}(\mathcal{S}^{d-1})$.

\textbf{Class Collapse Distribution}
Define $\bm{v} = \{v_y\}_{y \in \Y} \in \mathcal{S}^{d-1}$ as the set of vectors forming the regular simplex inscribed in the hypersphere, satisfying: a) $\sum_y v_y =\vec{0}$; b) $\|v_y\|_2 = 1 \; \forall y$; and c) $\exists \; c_K \in \mathbb{R}$ s.t. $v_y^\top v_{y'} = c_K$ for $y \neq y'$. Let $\delta_{v_y}$ be the probability measure on $\mathcal{S}^{d-1}$ with all mass on $v_y$, and let $\bm{\delta_v} = \{\delta_{v_y}\}_{y \in \Y}$ be the \emph{class-collapsed measure} such that $\mu_y = \delta_{v_y}$ and $f(x) = v_y$ almost surely whenever $h(x) = y$.
\citet{graf2021dissecting} show that $\bm{\mu} = \bm{\delta_v}$ minimizes the SupCon loss.

\textbf{Class Uniform Distribution}
Denote $\sigma_{d-1}$ as the normalized surface area measure on $\mathcal{S}^{d-1}$. $\bm{\mu} = \bm{\sigma_{d-1}}$ is the \emph{class-uniform measure} when $\mu_y = \sigma_{d-1}$ for all $y \in \Y$.
\citet{wang2020understanding} show that $\sigma_{d-1}$ minimizes the InfoNCE loss.


\section{Controlling Spread}
\label{sec:spread}

In Section~\ref{subsec:supconc2f}, we demonstrate the importance of spread---having distinguishable representations of points in a class---by showing that SupCon results in poor coarse-to-fine transfer.
In Section~\ref{subsec:asymptotic}, we begin to explore whether $\lspread$ can result in more spread out geometries. We define the asymptotic form of $\lspread$ and apply the approach from~\citet{wang2020understanding} to analyze individual loss terms.
We find that the optimal geometries of the individual terms are incompatible.
In Section~\ref{subsec:spread_balance}, we analyze the asymptotic $\lspread$ as a whole using a nuanced approach that compares the loss over different geometries. We conclude that the optimal geometry is neither class-collapsed nor class-uniform for a range of $\alpha$. This result suggests that spread can be carefully controlled, and we capture this property by introducing a notion of intra-class variance, $s_f(y)$. All proofs for the paper are in Appendix~\ref{sec:app_proofs}.

\subsection{The Importance of Spread}\label{subsec:supconc2f}

SupCon exhibits class collapse on the training data and does not spread out representations in a class. We use standard generalization bounds and show that this geometry results in poor coarse-to-fine generalization error: asymptotically, the error obtains its maximum possible value.

Define $f_{SC} \in \F$ to be the encoder trained with SupCon satisfying class collapse, $f_{SC}(x) = v_y$ for all $x \in \D$ where $h(x) = y$.  
Let $f(x)[j]$ be the $j$th entry of $f(x)$. For function class $\F$, let $\F_j = \{f(\cdot)[j]: f\in \F\}$ be the elementwise class. Let $\rademacher_n(\F_j)$ denote $\F_j$'s Rademacher complexity on $n$ samples, and define $\rademacher_n(\F) = \sum_{j = 1}^d \rademacher_n(\F_j)$. 
\begin{theorem}\label{thm:supcongenerr}
For $\gamma$ where $\log \gamma \ge 8 \max\{\rademacher_{n_z}(\F), \rademacher_{n_{z'}}(\F)\}$, SupCon's coarse-to-fine error is at least 
\begin{align*}
L_{\gamma, f_{SC}}(z) &\ge 1 - \delta(n_z, \F, \gamma) - \delta(n_{z'}, \F, \gamma) - \xi(m_z \wedge m_{z'}, \gamma ), \nonumber 
\end{align*}
where $\delta(n_z, \F, \gamma) = d \exp \Big(-\frac{n_z}{32d^2} (\log \gamma - 8 \rademacher_{n_z}(\F))^2 \Big)$ bounds generalization error of $f_{SC}$ and $\xi(m_z \wedge m_{z'}, \gamma) = 4d \exp\Big(-\frac{(m_z \wedge m_{z'}) \log^2 \gamma}{32d} \Big)$ bounds the noise from $\D_s$.
\end{theorem}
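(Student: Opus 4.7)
The plan is to exploit class collapse on $\D$ to show that the two mean classifiers $W_z$ and $W_{z'}$ both concentrate on the same simplex vertex $v_y$ (with $y = S(z) = S(z')$), so that $f_{SC}(x)^\top(W_z - W_{z'})$ cannot clear the margin $\log\gamma$ for any $x \sim \p_z$. Unfolding $\ell_{\gamma, f}$, the event $\ell_{\gamma, f_{SC}}(x,z) = 1$ is equivalent to $f_{SC}(x)^\top(W_z - W_{z'}) < \log\gamma$. By Cauchy--Schwarz with $\|f_{SC}(x)\|_2 = 1$ and the triangle inequality, it suffices to prove $\|W_z - v_y\|_2 + \|W_{z'} - v_y\|_2 < \log\gamma$ with the claimed probability over $\D$ and $\D_s$, because on that event $\ell_{\gamma, f_{SC}}(x,z) = 1$ for every $x$ in the support of $\p_z$ and hence $L_{\gamma, f_{SC}}(z)$ inherits the probability of that event.

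I would then decompose each classifier deviation into a generalization piece $\bar f_z - v_y$ and a sampling piece $W_z - \bar f_z$, where $\bar f_z = \E{x \sim \p_z}{f_{SC}(x)}$. Class collapse on $\D$ forces the exact empirical identity $\frac{1}{n_z} \sum_{x \in \D_z} f_{SC}(x)[j] = v_y[j]$ for every coordinate $j$, so a uniform Rademacher bound on each $\F_j$ (using $|f(x)[j]| \le 1$) combined with the norm reduction $\|\cdot\|_2 \le \|\cdot\|_1$ and the identity $\rademacher_{n_z}(\F) = \sum_j \rademacher_{n_z}(\F_j)$ collapses the generalization deviation into the scalar form $2\rademacher_{n_z}(\F) + d\sqrt{2\log(1/\delta')/n_z}$; forcing this below $\log\gamma/4$ isolates the shift $\log\gamma - 8\rademacher_{n_z}(\F)$ in the exponent and reproduces $\delta(n_z, \F, \gamma)$ after paying a factor of $d$ in the union bound over coordinates. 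For the sampling piece, the $m_z$ i.i.d.\ samples in $\D_{s,z}$ and $|f_{SC}(x)[j]| \le 1$ give per-coordinate Hoeffding control $|W_z[j] - \bar f_z[j]| \le \log\gamma/(4\sqrt d)$ with failure probability $2\exp(-m_z \log^2\gamma/(32d))$; converting via $\|\cdot\|_2 \le \sqrt d\,\|\cdot\|_\infty$ and union-bounding over the $d$ coordinates yields $\|W_z - \bar f_z\|_2 \le \log\gamma/4$.

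Union-bounding both steps over the two subclasses $z$ and $z'$ then produces the two $\delta(n_\bullet, \F, \gamma)$ terms (one per subclass) and the single $\xi(m_z \wedge m_{z'}, \gamma)$ term (with the prefactor $4d$ coming from $2d$ coordinate-level Hoeffding bounds per subclass, and $m_z \wedge m_{z'}$ from taking the worse exponent). On the intersecting high-probability event, $\|W_z - v_y\|_2 + \|W_{z'} - v_y\|_2 < \log\gamma$, so $\ell_{\gamma, f_{SC}}(x, z) = 1$ almost surely and $L_{\gamma, f_{SC}}(z) \ge 1 - \delta(n_z, \F, \gamma) - \delta(n_{z'}, \F, \gamma) - \xi(m_z \wedge m_{z'}, \gamma)$.

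The main obstacle is the coordinate-wise bookkeeping needed to recover the precise constants: the two pieces must use different norm reductions ($\ell_1$ for the Rademacher part to absorb $\rademacher_{n_z}(\F) = \sum_j \rademacher_{n_z}(\F_j)$ cleanly into a single scalar, producing the $d^2$ inside the generalization exponent, versus $\ell_\infty$ with a $\sqrt d$ conversion for the Hoeffding part, producing only $d$ inside the sampling exponent), and the $\log\gamma/4$ budgets must be allocated so that the union bounds yield the prefactors $d$ and $4d$ stated in the theorem. The underlying Rademacher generalization and Hoeffding concentration ingredients are themselves standard; aligning the constants to match the expression for $\delta(n_z, \F, \gamma)$ and the shift $\log\gamma - 8\rademacher_{n_z}(\F)$ is where the care is required.
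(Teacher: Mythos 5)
Your proposal is correct and follows essentially the same route as the paper's proof: lower-bounding the error by showing the collapsed margin $f_{SC}(x)^\top(W_z - W_{z'})$ falls below $\log\gamma$ with high probability, splitting each classifier's deviation into a uniform Rademacher generalization term (handled coordinate-wise with an $\ell_1$ reduction and a union bound over the $d$ coordinates, yielding the $d^2$ in the exponent) and a coordinate-wise Hoeffding sampling term for $W_z$, with a $\log\gamma/4$ budget per piece. The only cosmetic differences are that you pass the triangle inequality through $v_y$ rather than through the (equal) empirical means and use a union bound where the paper multiplies independent events; the constants and final bound come out identically.
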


As $n\wedge m$ increases, error approaches $1$---its maximum value---and the model will almost surely predict the correct subclass $\gamma$ times less often than the incorrect one.
This result motivates studying whether $\lspread$ can encourage spread.

\subsection{Asymptotic $\lspread$}\label{subsec:asymptotic}

We present the asymptotic version of $\lspread$.
For a given anchor $x \sim \p$, define a positive pair $x^+ \sim p(\cdot | h(x^+) = h(x))$ from the same class and a negative pair using $x^- \sim p(\cdot | h(x^-) \neq h(x))$ from a different class. 
Let $a(x)$ be an augmentation of $x$ drawn from a distribution $p_a(\cdot | x)$, where each $p_a(\cdot | x)$ has disjoint support.

\begin{definition}
Define $\lspread(f, \alpha)$ as
\ifarxiv
\begin{align}
\lspread(f, \alpha) &= (1 - \alpha) L_{\text{align}}(f) + \alpha L_{\text{aug}}(f) + (1 - \alpha) L_{\text{diff}}(f) + \alpha L_{\text{same}}(f), \nonumber
\end{align}
\else
\begin{align}
\lspread(f, \alpha) &= (1 - \alpha) L_{\text{align}}(f) + \alpha L_{\text{aug}}(f) \\
&+ (1 - \alpha) L_{\text{diff}}(f) + \alpha L_{\text{same}}(f), \nonumber
\end{align}
\fi

where 
\begin{align*}
L_{\text{align}}(f) &=  \E{x, x^+}{\|f(x) - f(x^+) \|^2 / 2\tau} \\
L_{\text{aug}}(f) &= \E{x, a(x)}{\|f(x) - f(a(x))\|^2 / 2\tau} \\
L_{\text{diff}}(f) &=  \E{x}{\log \E{x^-}{ \exp(-\|f(x) - f(x^-)\|^2 /2\tau)}} \\
L_{\text{same}}(f) &= \E{x}{\log \E{x^+}{\exp(-\|f(x) - f(x^+)\|^2/ 2\tau)}}
\end{align*}

\label{def:asymptotic}
\end{definition}

The derivation of $\lspread(f, \alpha)$ is in Appendix~\ref{subsec:app_spread_proofs}. Next, we analyze individual terms, similar to~\citet{wang2020understanding}'s approach.
For simplicity, we present the binary setting $K = 2$. We abuse notation and use $f$ and $\bm{\mu}$, the pushforward measure of $x$ on the hypersphere, interchangeably in $\lspread(f, \alpha)$ as well as in the loss components in Definition~\ref{def:asymptotic}.

\begin{restatable}[Individual losses]{proposition}{termwise}
$L_{\text{align}}(f)$ and $L_{\text{aug}}(f)$ are minimized when $f(x) = f(x^+)$ and $f(x) = f(a(x))$ almost surely, respectively. $L_{\text{diff}}(\bm{\mu})$ is minimized when $\bm{\mu} = \bm{\delta_v}$. $L_{\text{same}}(\bm{\mu})$ is minimized when $\bm{\mu} = \bm{\sigma_{d-1}}$.

\label{prop:termwise}
\end{restatable}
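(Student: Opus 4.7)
The proof naturally splits into four pieces, one per term. The two alignment-style terms $L_{\text{align}}$ and $L_{\text{aug}}$ are immediate: each is a non-negative quantity, $\|f(x) - f(x^+)\|^2/(2\tau)$ or $\|f(x) - f(a(x))\|^2/(2\tau)$, integrated against a probability measure, so the infimum equals $0$ and is attained iff the integrand vanishes almost everywhere, i.e.\ $f(x) = f(x^+)$ a.s.\ and $f(x) = f(a(x))$ a.s., respectively.

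For $L_{\text{diff}}$, I would first use the unit-norm identity $\|u - v\|^2 = 2 - 2u^\top v$ to rewrite
\[
L_{\text{diff}}(f) = -\frac{1}{\tau} + \E{x}{\log \E{x^-}{\exp(f(x)^\top f(x^-)/\tau)}}.
\]
On the unit sphere, $f(x)^\top f(x^-) \geq -1$ with equality iff $f(x^-) = -f(x)$, so the inner expectation is pointwise bounded below by $\exp(-1/\tau)$. In the $K = 2$ setting this bound is attainable simultaneously for every pair of opposite-class samples by placing the class-conditional distributions on the antipodal simplex vertices $v_0 = -v_1$; hence $\bm{\mu} = \bm{\delta_v}$ is a minimizer, and any measure placing positive mass off this antipodal pair leaves the pointwise inequality slack on a set of positive measure.

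For $L_{\text{same}}$, the claim is the class-conditional analogue of the uniformity result of \citet{wang2020understanding}. Fixing a class $y$, its contribution is $\E{x \sim \mu_y}{\log \int \exp(-\|f(x) - v\|^2/2\tau)\, d\mu_y(v)}$. Three facts drive the argument: (i) the Gaussian potential is a strictly positive-definite kernel on $\mathcal{S}^{d-1}$; (ii) when $\mu_y = \sigma_{d-1}$ the inner integral is constant in $f(x)$ by rotational invariance, so Jensen is tight and $L_{\text{same}}(\bm{\sigma_{d-1}})$ equals the double Gaussian energy at uniform; and (iii) that double energy is uniquely minimized by $\sigma_{d-1}$ via \citet{wang2020understanding}. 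The plan is to bridge (iii) to our $\mathbb{E}\log$ form by a rotational-symmetrization argument: averaging $\mu_y$ under Haar measure on $O(d)$ yields $\sigma_{d-1}$, and combining positive-definiteness with the concavity of $\log$ along this averaging forces $L_{\text{same}}$ to decrease weakly, identifying $\bm{\sigma_{d-1}}$ as the minimizer.

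The main obstacle is precisely this last bridge: Jensen gives $L_{\text{same}}(\bm{\mu}) \leq \log W(\bm{\mu})$, where $W$ denotes the Wang--Isola double energy, while we need the opposite-direction comparison $L_{\text{same}}(\bm{\mu}) \geq L_{\text{same}}(\bm{\sigma_{d-1}})$, so the standard uniformity result cannot be invoked as a black box. The symmetrization step sketched above is the load-bearing part of the argument, and verifying that $\mu_y \mapsto L_{\text{same}}$ actually decreases along the Haar average (using the positive-definite expansion of the Gaussian kernel) is the one non-routine calculation.
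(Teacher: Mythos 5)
Your treatment of $L_{\text{align}}$, $L_{\text{aug}}$, and $L_{\text{diff}}$ is sound. For $L_{\text{diff}}$ the pointwise bound $\E{x^-}{\exp(-\|f(x)-f(x^-)\|^2/2\tau)} \ge \exp(-2/\tau)$, attained by the antipodal configuration in the $K=2$ setting, is if anything more direct than the paper, which routes the cross-class term through a log/expectation swap (its Lemma on $L_{\text{diff}}$) before making essentially the same evaluation at $\bm{\delta_v}$. Note that your equality case, like the paper's, genuinely uses $K=2$ (antipodality), which is the regime the proposition is stated in.

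The $L_{\text{same}}$ part, however, contains a genuine gap, and you have located it yourself: the Haar-symmetrization "bridge" is asserted, not proved, and it is exactly the hard direction. Writing $F(\mu)=\int \log\bigl((G\mu)(u)\bigr)\,d\mu(u)$ with $G(u,v)=\exp(-\|u-v\|^2/2\tau)$, what you need is $F(\bm{\sigma_{d-1}})\le F(\mu)$, and deducing this from $\bm{\sigma_{d-1}}=\int R_{\#}\mu\,dR$ together with rotation invariance requires a convexity-type inequality $F\bigl(\int R_{\#}\mu\,dR\bigr)\le \int F(R_{\#}\mu)\,dR$. Neither positive definiteness of $G$ nor concavity of $\log$ delivers this: concavity of $\log$ gives Jensen in the unhelpful direction (as you note, $F(\mu)\le \log\iint G\,d\mu\,d\mu$), and the functional $\mu\mapsto\int\log(G\mu)\,d\mu$ is not linear or obviously convex in $\mu$ (its second variation at a general $\mu$ is $-\int (G\nu)^2/(G\mu)^2\,d\mu + 2\iint \tfrac{G(u,v)}{(G\mu)(u)}\,d\nu(u)\,d\nu(v)$, whose sign is unclear away from $\bm{\sigma_{d-1}}$). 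The paper avoids this entirely with a variational argument in the style of Theorem 1 of \citet{wang2020understanding}: first show a minimizer exists, then show via the first variation that the potential $U_{\mu^\star}(u)=\int \exp(-\|u-v\|^2/2\tau)\,d\mu^\star(v)$ is constant $\mu^\star$-almost surely at any minimizer, so Jensen is tight at the optimum and minimizing the $\mathbb{E}\log$ objective is equivalent to minimizing the Gaussian $\tfrac{1}{2\tau}$-energy per class; the uniform measure is then identified as the energy minimizer by Proposition 4.4.1 and Theorem 6.2.1 of \citet{borodachov2019discrete}. To complete your proof you should either carry out that first-variation argument (the paper's Lemma~\ref{lemma:jensen_swap} does the analogous work for the cross-class term and is a good template) or actually establish the symmetrization inequality, which is the one nontrivial claim your sketch currently leaves unproved.
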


When $\alpha = 0$, the ``active'' loss terms are $L_{\text{align}}$ and $L_{\text{diff}}$, whose optima are jointly realizable and yield $\bm{\mu} = \bm{\delta_v}$ overall. When $\alpha = 1$, the terms $L_{\text{aug}}$ and $L_{\text{same}}$ are also compatible, yielding $\bm{\mu} = \bm{\sigma_{d-1}}$ and augmentations with the same embedding as their original point.

Neither of these distributions has good coarse-to-fine transfer performance on its own: $\bm{\delta_v}$ loses information within classes, and $\bm{\sigma_{d-1}}$ allows points of different classes to be close together (Figure~\ref{fig:banner} left).
To avoid both $\bm{\delta_v}$ and $\bm{\sigma_{d-1}}$, $\alpha \in (0, 1)$ must achieve a balance between the two loss terms.
But the behavior of the weighted loss overall is unclear from the result in Proposition~\ref{prop:termwise}.
It is also unclear whether there even exists an intermediate distribution that minimizes $\lspread(\bm{\mu}, \alpha)$.

\subsection{Our Spread Result}
\label{subsec:spread_balance}

We seek to analyze the geometry of the overall loss.
Explicitly characterizing the optimal geometry is challenging, so we design a family of measures on the hypersphere and examine when such measures obtain lower loss than collapsed or uniform measures. We perform analysis for $K = 2$ and consider $K = 3$ in Appendix~\ref{sec:supp_additional_theory}. Synthetic experiments are in Appendix~\ref{sec:supp_synthetics}.

The measure we study, $\bm{\mu_{\theta}}$, assigns mass evenly on two points that are close to $v_y$, a vertex of the regular simplex, but separated by some angle $\theta$ for each $\mu_y$ (see Figure~\ref{fig:synthetic_fig} in Appendix~\ref{sec:supp_synthetics}).
Formally, define a block-diagonal rotation matrix $R_{\theta} \in \mathbb{R}^{d \times d}$ consisting of submatrices $\begin{bsmallmatrix} \cos \theta & - \sin \theta \\ \sin \theta & \cos \theta\end{bsmallmatrix}$ and $I_{d-2}$ on the diagonal. For $\theta \in (0, \pi/2]$, define the measure $\bm{\mu_{\theta}} = \{\mu_{0, \theta}, \mu_{1, \theta}\}$, where $\mu_{0, \theta} = \frac{1}{2} \delta_{R_{\theta} v_0} + \frac{1}{2} \delta_{R_{-\theta} v_0}$, and similarly  $\mu_{1, \theta} = \frac{1}{2} \delta_{R_{\theta} v_1} + \frac{1}{2} \delta_{R_{-\theta} v_1}$. We present a technical result on the range of $\alpha$ for which $\bm{\mu_{\theta}}$ attains lower loss than class-collapsed or class-uniform measures.

\begin{figure}[t]
    \centering
    \ifarxiv
        \includegraphics[width=5in]{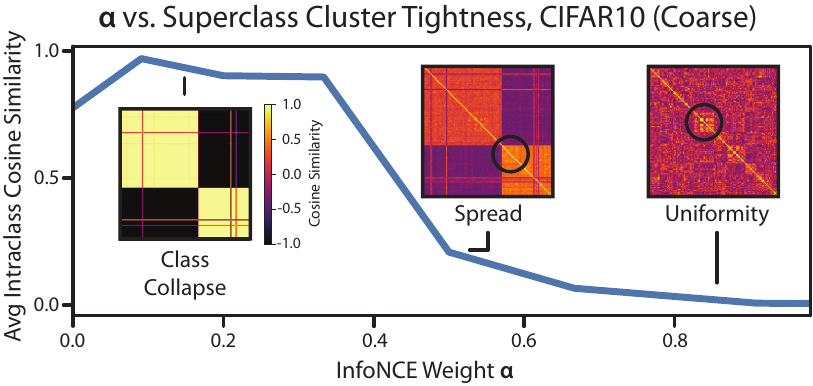}
    \else
        \includegraphics[width=3.25in]{figs/alpha_sensitivity_graph}
    \fi
    \caption{
        Measure of cluster tightness vs. $\alpha$.
        Clusters are collapsed for low values of $\alpha$, display spread for a small region, and then dissolve into uniformity for high values of $\alpha$.
        Inserts: heatmaps of cosine similarity between points, sorted by class and subclass.
        Circles: apparent subclass clusters within spread.
    }
    \label{fig:alpha_sensitivity}
\end{figure}

\begin{theorem}
 Let $c_{\tau, d} \texttt{=}  \frac{2 + \frac{1}{\tau} - \sqrt{\frac{1}{\tau}(-2 + \frac{1}{\tau}) - 2 \log W_{1/2\tau}(\mathcal{S}^{d-1})}}{3}$, where $W_{1/2\tau}(\mathcal{S}^{d-1})$ is a constant depending on $\tau$ and $d$ (see Appendix~\ref{subsec:app_spread_proofs} for exact value). Then, when $\alpha \in (2/3, c_{\tau, d})$, $\theta^\star = \arcsin \sqrt{\frac{\tau}{2} \log \frac{3\alpha - 1}{3 - 3\alpha}}$ minimizes $\lspread(\bm{\mu_{\theta}}, \alpha)$ and satisfies $\lspread(\bm{\mu_{\theta^\star}}, \alpha) \le \min_{\bm{\mu} \in \{\bm{\delta_v}, \bm{\sigma_{d-1}} \}} \lspread(\bm{\mu}, \alpha)$.
\label{thm:geometry}
\end{theorem}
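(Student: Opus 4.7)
The plan is to compute $\lspread(\bm{\mu_{\theta}}, \alpha)$ in closed form as a function of $\theta$, locate its minimizer by a first-order condition, and then verify that the resulting minimum value lies below the losses at the two extreme geometries $\bm{\delta_v}$ and $\bm{\sigma_{d-1}}$.

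First I would set up coordinates. For $K = 2$ the simplex reduces to antipodal vectors $v_1 = -v_0$, and since $R_{\theta}$ is block-diagonal with a 2D rotation on the top block, I may assume $v_0$ lies in the rotation plane, e.g.\ $v_0 = e_1$. Then $\supp(\bm{\mu_{\theta}})$ consists of four points whose pairwise squared distances are elementary: $4\sin^2 \theta$ for distinct same-class pairs, and either $4$ or $4\cos^2\theta$ for cross-class pairs. Because each $\mu_{y,\theta}$ puts mass $\tfrac12$ on each of its two points, every expectation in Definition~\ref{def:asymptotic} reduces to a weighted sum of a handful of terms. Using that $f$ may be taken constant across augmentations of a single point (so that $L_{\text{aug}} = 0$), I obtain $\lspread(\bm{\mu_{\theta}}, \alpha)$ as an explicit function of $s := \sin^2\theta$, $\alpha$, and $\tau$, namely
\begin{align*}
\lspread(\bm{\mu_{\theta}}, \alpha) &= (1-\alpha) s/\tau + (1-\alpha)\log\bigl[\tfrac12(\exp(-2/\tau) + \exp(-2(1-s)/\tau))\bigr] \\
&\quad + \alpha \log\bigl[\tfrac12(1 + \exp(-2s/\tau))\bigr].
\end{align*}

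Next I would differentiate in $s$ and solve for a critical point. Substituting $e := \exp(-2s/\tau)$ and simplifying $\exp(-2(1-s)/\tau)/(\exp(-2/\tau)+\exp(-2(1-s)/\tau)) = 1/(e+1)$, the first-order condition collapses to the linear equation $(1-\alpha)(e+3) = 2\alpha e$, yielding $e^\star = (3-3\alpha)/(3\alpha-1)$ and hence $s^\star = \tfrac{\tau}{2} \log\frac{3\alpha-1}{3-3\alpha}$, matching the stated $\theta^\star$. Positivity $s^\star > 0$ forces $\alpha > 2/3$, while the upper bound $\alpha < c_{\tau,d}$ will ultimately enforce $s^\star$ staying strictly below the class-uniform regime. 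A brief second-derivative (or direct monotonicity) check in $s$ confirms this critical point is a minimum over $\theta \in (0, \pi/2]$.

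Finally I would compare the three losses. Because $\bm{\mu_{\theta = 0}} = \bm{\delta_v}$, the calculus above immediately gives $\lspread(\bm{\mu_{\theta^\star}}, \alpha) \le \lspread(\bm{\delta_v}, \alpha)$ for $\alpha > 2/3$. For the uniform comparison, the symmetry of $\sigma_{d-1}$ makes both $L_{\text{diff}}$ and $L_{\text{same}}$ equal to $\log W_{1/2\tau}(\mathcal{S}^{d-1})$ and gives $L_{\text{align}}(\bm{\sigma_{d-1}}) = 1/\tau$, so $\lspread(\bm{\sigma_{d-1}}, \alpha) = (1-\alpha)/\tau + \log W_{1/2\tau}(\mathcal{S}^{d-1})$. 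Plugging $s^\star$ into the closed form for $\lspread(\bm{\mu_{\theta^\star}}, \alpha)$ and using the identity $1 + e^\star = 2/(3\alpha-1)$, the inequality $\lspread(\bm{\mu_{\theta^\star}}, \alpha) \le \lspread(\bm{\sigma_{d-1}}, \alpha)$ reduces (after telescoping the $\log 2$ terms and collecting $s^\star/\tau$ contributions) to a quadratic in $\alpha$, whose smaller root is exactly $c_{\tau,d}$.

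The main obstacle I expect is this last algebraic reduction: it requires the cancellations that turn $3(1-\alpha)s^\star/\tau - 2(1-\alpha)/\tau - \log(3\alpha-1)$ into a clean quadratic form once set equal to $(1-\alpha)/\tau + \log W_{1/2\tau}(\mathcal{S}^{d-1})$, and then identifying the correct branch of the square root to match the stated form of $c_{\tau,d}$. A secondary difficulty is verifying that the first-order critical point is actually a minimum rather than a saddle over the full interval $(0,\pi/2]$; I would handle this by checking the sign of the second derivative in $s$ at $s^\star$, or equivalently by showing that $\lspread(\bm{\mu_\theta}, \alpha)$ is strictly decreasing as $\theta$ leaves $0$ whenever $\alpha > 2/3$.
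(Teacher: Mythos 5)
Your route is the same as the paper's up to the last step, and that part checks out: reducing to the three active terms by aligning augmentations, writing $\lspread(\bm{\mu_{\theta}},\alpha)$ as an explicit function of $s=\sin^2\theta$ (your closed form matches the paper's), solving the first-order condition to get $e^\star=\frac{3-3\alpha}{3-3\alpha}\cdot\frac{3-3\alpha}{3\alpha-1}$ --- i.e.\ $e^{-2s^\star/\tau}=\frac{3-3\alpha}{3\alpha-1}$, hence $\theta^\star=\arcsin\sqrt{\tfrac{\tau}{2}\log\tfrac{3\alpha-1}{3-3\alpha}}$ with $\alpha>2/3$ forced by positivity, deducing the comparison with $\bm{\delta_v}$ from the sign of the derivative near $\theta=0$, and evaluating $\lspread(\bm{\sigma_{d-1}},\alpha)=\tfrac{1-\alpha}{\tau}+\log W_{1/2\tau}(\mathcal{S}^{d-1})$ by rotational invariance (a slightly more direct argument than the paper's, which routes through a Jensen-interchange lemma and energy-minimization results, but it reaches the same value). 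Your simplified expression for the loss at the critical point, $\lspread(\bm{\mu_{\theta^\star}},\alpha)=\tfrac{3(1-\alpha)s^\star}{\tau}-\tfrac{2(1-\alpha)}{\tau}-\log(3\alpha-1)$, is also correct.

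The gap is in the final comparison with $\bm{\sigma_{d-1}}$. The exact inequality is \emph{not} a quadratic in $\alpha$: substituting $s^\star=\tfrac{\tau}{2}\log\tfrac{3\alpha-1}{3-3\alpha}$, it reads $\tfrac{3(1-\alpha)}{\tau}+\tfrac{3-3\alpha}{2}\log(3-3\alpha)+\tfrac{3\alpha-1}{2}\log(3\alpha-1)+\log W_{1/2\tau}(\mathcal{S}^{d-1})\ge 0$, which is transcendental in $\alpha$; no amount of telescoping turns it into a polynomial, so $c_{\tau,d}$ cannot be ``exactly'' its smaller root. The missing idea is to lower-bound the binary-entropy-type term: with $x=\tfrac{3-3\alpha}{2}$, one uses $x\log x+(1-x)\log(1-x)\ge -\log 2+2(x-\tfrac12)^2$ (proved by comparing derivatives, $f''(x)=\tfrac{1}{x(1-x)}\ge 4$), which upper-bounds $\lspread(\bm{\mu_{\theta^\star}},\alpha)$ by $-\tfrac{2(1-\alpha)}{\tau}-2(1-\tfrac{3\alpha}{2})^2$ and yields the \emph{sufficient} quadratic condition $2(1-\tfrac{3\alpha}{2})^2+\tfrac{3(1-\alpha)}{\tau}+\log W_{1/2\tau}(\mathcal{S}^{d-1})\ge 0$, whose smaller root is the stated $c_{\tau,d}$. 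So $c_{\tau,d}$ is a certified threshold coming from this relaxation, not the exact crossover point, and without this extra bounding step your plan cannot recover the stated closed form. (A smaller point you share with the paper: for $\theta^\star$ to be a well-defined angle one also needs $s^\star\le 1$, i.e.\ $\alpha\le\tfrac{3e^{2/\tau}+1}{3e^{2/\tau}+3}$, the range in which the interior critical point exists; this compatibility with $\alpha<c_{\tau,d}$ is left implicit.)
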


Our result does not define the exact optimal geometry since it constrains the measures we optimize to be over $\bm{\mu_{\theta}}$.
For $\alpha \notin (2/3, c_{\tau, d})$, it also does not specify the optimal geometry---we only know that the optimal geometry is not  of form
$\bm{\mu_{\theta}}$.

However, our result yields a high-level insight: there exists a range of $\alpha$ for which the optimal geometry that minimizes $\lspread(\bm{\mu}, \alpha)$
spreads out points on the hypersphere. 
Concretely, define the spread of class $y$ under $f$ as $s_f(y) = \E{h(x) = y}{\|f(x) - \E{h(x) = y}{f(x)} \|}$. 
\begin{corollary}
If $\alpha \in (2/3, c_{\tau, d})$ and $f(x)$ has measure $\mu_{\theta^\star}$, the spread for $y$ under $f$ is $s_f(y) = \sqrt{\frac{\tau}{2} \log \frac{3\alpha - 1}{3 - 3\alpha}} \sim \omega(1)$.
\label{cor:mu_theta_spread}
\end{corollary}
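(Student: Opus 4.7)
The plan is to compute $s_f(y)$ directly from the two-atom pushforward measure $\mu_{y,\theta^\star}$ and then substitute the explicit form of $\theta^\star$ given by Theorem~\ref{thm:geometry}. Since $\mu_{y,\theta^\star}$ places equal mass on $R_{\theta^\star} v_y$ and $R_{-\theta^\star} v_y$, both the class mean and the deviation norms are elementary quantities.

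First, I would compute the class mean: $\mathbb{E}_{h(x)=y}[f(x)] = \tfrac{1}{2}(R_{\theta^\star} + R_{-\theta^\star})\, v_y$. The matrix $\tfrac{1}{2}(R_{\theta^\star} + R_{-\theta^\star})$ is block-diagonal with upper-left $2\times 2$ block $\cos\theta^\star\cdot I_2$ and $I_{d-2}$ elsewhere. Since $\lspread$ and $s_f(y)$ are both invariant under a global orthogonal change of basis, I may assume WLOG that (for $K=2$) the simplex vertices $v_0 = -v_1$ lie inside the plane on which $R_{\theta^\star}$ acts nontrivially, so that $\bar f_y := \mathbb{E}_{h(x)=y}[f(x)] = \cos\theta^\star\, v_y$. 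Next, I would compute $R_{\theta^\star} v_y - \cos\theta^\star\, v_y$: this equals $\sin\theta^\star$ times the image of $v_y$ under a $\pi/2$ rotation in the plane, which is a unit vector, so the norm is exactly $\sin\theta^\star$. The symmetric computation for the atom $R_{-\theta^\star} v_y$ yields the same norm. Averaging over the two atoms gives $s_f(y) = \sin\theta^\star$.

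Finally, substituting $\theta^\star = \arcsin\sqrt{\tfrac{\tau}{2} \log \tfrac{3\alpha-1}{3-3\alpha}}$ from Theorem~\ref{thm:geometry} gives $s_f(y) = \sqrt{\tfrac{\tau}{2} \log \tfrac{3\alpha-1}{3-3\alpha}}$. For the $\omega(1)$ claim, it suffices to observe that for $\alpha \in (2/3, c_{\tau,d})$ the argument of the logarithm satisfies $\tfrac{3\alpha-1}{3-3\alpha} > 1$, so the expression is a strictly positive constant that does not depend on the sample size $n$. This contrasts with the class-collapsed case where the spread is identically zero.

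There is no substantive obstacle: the proof is a direct trigonometric calculation once the geometry is set up correctly. The one subtle point is justifying the WLOG alignment of $v_y$ with the plane of rotation; this follows from orthogonal invariance of both the loss and the spread functional, which lets us pre-compose $f$ with a fixed orthogonal map without changing any relevant quantity.
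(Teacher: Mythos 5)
Your proposal is correct and follows essentially the same route as the paper: the corollary is a direct computation of the spread of the two-atom measure $\mu_{y,\theta^\star}$ (mean $\cos\theta^\star\, v_y$, each atom at distance $\sin\theta^\star$ from it, so $s_f(y)=\sin\theta^\star$), followed by substituting $\theta^\star$ from Theorem~\ref{thm:geometry}; the WLOG alignment you justify by orthogonal invariance is already built into the paper's definition of $R_\theta$ and its choice $v_0 = e_1$, $v_1 = -e_1$.
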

In other words, $\lspread$ can yield an extent of spread $s_f(y)$ that is controlled by $\alpha$.
Experiments on CIFAR10 support our result (Figure~\ref{fig:alpha_sensitivity}); the geometry is collapsed for low values of $\alpha$, followed by a region of spread, followed by uniformity.

Finally, we remark on two deliberate aspects of our analysis. 
First, to avoid issues of non-convexity, we directly compare the overall loss of our measures with those of the two extrema $\bm{\delta_v}$ and $\bm{\sigma_{d-1}}$.
Second, general distributions beyond the regular simplex and normalized surface measure are hard to compute contrastive losses over, and such computations are not largely studied to the extent of our knowledge. 
This inherently restricts analysis to simple distributions like $\bm{\mu_{\theta}}$.


\section{Breaking Permutation Invariance}
\label{sec:theory}

Our analysis in the previous section shows that $\lspread$ can obtain an optimal geometry that is neither collapsed nor uniform.
However, this result does not completely explain improved transfer performance because $\lspread$ under the previous setup is \emph{class-fixing permutation invariant}, a property we define in Section~\ref{subsec:permutation}.
Inducing an inductive bias can break such an invariance.
We argue that an inductive bias that encourages clustering of latent subclasses can be particularly useful.
In Section~\ref{subsec:ib_c2f}, we show that generalization error on coarse-to-fine transfer learning depends on both $s_f(y)$ and a notion of subclass clustering $\sigma_f(z)$.
We thus discuss three approaches for controlling $\sigma_f(z)$: one standard, and two alternatives with more realistic assumptions (Section~\ref{subsec:ib}).

\subsection{Class-Fixing Permutation Invariance}\label{subsec:permutation}

First, we define class-fixing permutation invariance.

\begin{definition}[Class-Fixing Permutation Invariance]
Let $\mathcal{F}$ be a class of encoders.
Let $L(f, B)$ be a loss function over an encoder $f \in \mathcal{F}$ and a set of $n$ points $B = \{x_1, \dots, x_n\}$.
Define $S_{h,B}$ as the set of class-fixing permutations such that $\pi \in S_{h,B}: [n] \rightarrow [n]$ satisfies $h(x_{\pi(i)}) = h(x_i)$ for all $i \in [n]$.
Then, $L$ is invariant on class-fixing permutations under $\mathcal{F}$ if, for any batch $B$, permutation $\pi \in S_{h, B}$, and encoder $f \in \mathcal{F}$, there exists another encoder $f^{\pi} \in \mathcal{F}$ such that $f^\pi(x_i) = f(x_{\pi(i)})$ for all $i \in [n]$ and $L(f, B) = L(f^{\pi}, B)$.

\end{definition}

We find that $\lspread$ is invariant on class-fixing permutations under the infinite encoder assumption from Section~\ref{subsec:background_geometry}.
\begin{restatable}[]{proposition}{infiniteencoder}
Let $\mathcal{F}$ be the set of infinite encoders. Then $\lspread$ is invariant on class-fixing permutations under $\mathcal{F}$.
\end{restatable}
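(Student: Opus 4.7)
The plan is to construct the permuted encoder $f^\pi$ explicitly using the infinite encoder assumption, and then verify by direct substitution that each per-anchor contribution to $\lspreadhat$ is merely relabeled by $\pi$, so the batch-averaged loss is unchanged.

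First, I would fix $f \in \mathcal{F}$, a batch $B = \{x_1,\dots,x_n\}$, and a class-fixing permutation $\pi \in S_{h,B}$. Because $\mathcal{F}$ is the class of infinite encoders and because augmentations of distinct samples have disjoint supports by assumption, there exists $f^\pi \in \mathcal{F}$ that satisfies $f^\pi(x_i) = f(x_{\pi(i)})$ and $f^\pi(a(x_i)) = f(a(x_{\pi(i)}))$ simultaneously for every $i \in [n]$. This is the key use of the infinite encoder hypothesis: we only need to pin down $f^\pi$ on a finite set of anchor points and their (disjointly-supported) augmentations, and the assumption guarantees such an encoder is in $\mathcal{F}$.

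Next, I would observe two consequences of the construction. First, for every pair of points $x, x'$ among $B \cup \{a(x_i)\}_i$, the similarity transforms as $\sigma_{f^\pi}(x_i, x_j) = \sigma_f(x_{\pi(i)}, x_{\pi(j)})$, and similarly for $\sigma_{f^\pi}(x_i, a(x_i)) = \sigma_f(x_{\pi(i)}, a(x_{\pi(i)}))$. Second, since $\pi$ preserves $h$, it restricts to bijections $P(i,B) \leftrightarrow P(\pi(i),B)$ and $N(i,B) \leftrightarrow N(\pi(i),B)$, and in particular $|P(i,B)| = |P(\pi(i),B)|$. Substituting these identities into the definitions~\eqref{eq:sup} and~\eqref{eq:cnce} and reindexing the inner sums over $x^+$ and $x^-$ through $\pi$ collapses the expressions to $\supconhat(f^\pi, x_i, B) = \supconhat(f, x_{\pi(i)}, B)$ and $\cncehat(f^\pi, x_i, B) = \cncehat(f, x_{\pi(i)}, B)$.

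Finally, I would sum over $i \in [n]$: because $\pi$ is a permutation of $[n]$, the sums $\sum_i \supconhat(f, x_{\pi(i)}, B)$ and $\sum_i \cncehat(f, x_{\pi(i)}, B)$ equal the corresponding unpermuted sums, and hence $\lspreadhat(f^\pi, B) = \lspreadhat(f, B)$ for any $\alpha$. If one interprets $\lspread$ as the population version with random augmentations, the same identities hold pointwise in the augmentation sample, so invariance is preserved under expectation.

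The main obstacle I anticipate is not algebraic but notational: the permutation $\pi$ acts on indices, whereas the losses are written in terms of points and their augmentations, so the argument must carefully track how $\pi$ induces a bijection on the set of anchors, on the per-anchor positive/negative sets, and on the collection of augmentations $\{a(x_i)\}$. The disjoint-support condition on augmentations is what makes the joint extension of $f^\pi$ to both $\{x_i\}$ and $\{a(x_i)\}$ unambiguous, so I would flag this as the one place where the modeling assumptions are doing real work.
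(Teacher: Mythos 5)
Your proposal is correct and follows essentially the same route as the paper: use the infinite-encoder assumption to realize an $f^\pi$ agreeing with $f \circ \pi$ on the batch, then reindex the sums in $\supconhat$ and $\cncehat$ through the class-fixing permutation to see the loss is unchanged. Your explicit choice to also pin $f^\pi(a(x_i)) = f(a(x_{\pi(i)}))$, justified by the disjointness of augmentation supports, makes rigorous the step the paper treats only tersely for the $\cncehat$ numerator, and the per-anchor identity $\lspreadhat(f^\pi, x_i, B) = \lspreadhat(f, x_{\pi(i)}, B)$ is a slightly cleaner organization of the same reindexing argument.
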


Under class-fixing permutation invariance, data points can be arbitrarily mapped to representations within their classes, suggesting that the mapping that minimizes $\lspread$ is not unique.
However, not all these mappings achieve the same performance on downstream tasks.
Therefore, while our result from Section~\ref{sec:spread} provides insight into $\lspread$'s geometry under an infinitely powerful encoder, it cannot completely explain representation quality.

\subsection{Inductive Bias for Improved Coarse-to-fine Transfer}\label{subsec:ib_c2f}

Inducing an inductive bias can break permutation invariance (see Lemma~\ref{lemma:perm_invariance} in Appendix~\ref{sec:supp_additional_theory} for a simple proof of how smoothness of $f$ is a sufficient condition for breaking invariance).
We argue that inducing subclass clustering can be particularly helpful for transfer performance.
We measure subclass clustering in embedding space via the expected distance to the center of the subclass, $\sigma_f(z) = \E{x \sim \p_z}{\|f(x) - \E{x \sim \p_z}{f(x)}\|}$. We show that this quantity $\sigma_f(z)$, along with degree of spread $s_f(y)$, is critical for the generalization error of coarse-to-fine transfer.

To present our result on coarse-to-fine generalization error, we define some additional terms. Let $y$ denote the class label corresponding to $z, z'$. Define the quantity $\delta_f(z, z') = \frac{1}{p(z | y) p(z' | y)} \big(s_f(y) - p(z | y)^2 \sigma_f(z) - p(z' | y)^2\sigma_f(z') \big)$ as a notion of separation between $z$ and $z'$. $\delta_f(z, z')$ is large when there is spread (large $s_f(y)$) and sufficient subclass clustering (low $\sigma_f(z), \sigma_f(z')$). Define the variance of a subclass as $\Var{f}{z} = \E{x \sim \p_z}{\|f(x) - \E{x \sim \p_z}{f(x)} \|^2}$. 
We assume that for all $x \sim \p_z$, there exists a $c > 0$ such that $\|f(x) - \E{x\sim \p_{z'}}{f(x)} \| \ge c \cdot \mathbb{E}_{x \sim \p_z, x' \sim \p_{z'}}[\|f(x) - f(x') \|]$ (i.e., no point from $z$ is equal to the center of $z'$).
\begin{theorem}\label{thm:coarsetofine}
Denote $r_{f}(z, z') = c^2 \delta_f(z, z')^2 - |\Var{f}{z} - \Var{f}{z'}|$. With probability $1 - \delta$, the coarse-to-fine error is at most
\begin{align*}
L_{\gamma, f}(z) &\le \frac{\sigma_f(z)}{ \sqrt{r_f(z, z') - 2 \log \gamma}} +  \mathcal{O} \Big(\Big(\frac{d \log(d / \delta)}{m_z \wedge m_{z'}}\Big)^{1/4} \Big).
\end{align*}

under the boundary condition that $r_f(z, z') - 2 \log \gamma \ge 16 \sqrt{\frac{2d \log(8d/\delta)}{m_z \wedge m_{z'}}} + \frac{2d \log(8d/\delta)}{m_z}$.

\end{theorem}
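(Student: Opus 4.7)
The plan is to translate the margin event $\ell_{\gamma, f}(x, z) = 1$ into a tail bound on $\|f(x) - \mu_z\|$ (where $\mu_z := \E{x \sim \p_z}{f(x)}$) via the polarization identity, apply Markov's inequality, and pay a small concentration price to pass from the population means $\mu_z, \mu_{z'}$ to the sample means $W_z, W_{z'}$.

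First, the softmax form of $\phat$ shows that $\ell_{\gamma, f}(x, z) = 1$ is equivalent to $f(x)^\top (W_z - W_{z'}) < \log \gamma$. I would initially replace $W_z, W_{z'}$ by $\mu_z, \mu_{z'}$, deferring the concentration gap to the final step. The polarization identity
\begin{align*}
f(x)^\top (\mu_z - \mu_{z'}) = \tfrac{1}{2}\bigl(\|f(x) - \mu_{z'}\|^2 - \|f(x) - \mu_z\|^2 + \|\mu_z\|^2 - \|\mu_{z'}\|^2\bigr),
\end{align*}
together with $\|\mu_z\|^2 = 1 - \Var{f}{z}$ (since $f(x) \in \mathcal{S}^{d-1}$), converts the population event to $\|f(x) - \mu_z\|^2 > \|f(x) - \mu_{z'}\|^2 - 2 \log \gamma - (\Var{f}{z} - \Var{f}{z'})$. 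Applying the separation assumption $\|f(x) - \mu_{z'}\|^2 \ge c^2 \Delta^2$ (with $\Delta := \E{x \sim \p_z, x' \sim \p_{z'}}{\|f(x) - f(x')\|}$) and bounding the variance-difference in the worst sign direction by $|\Var{f}{z} - \Var{f}{z'}|$ reduces the event to $\|f(x) - \mu_z\|^2 > c^2 \Delta^2 - 2 \log \gamma - |\Var{f}{z} - \Var{f}{z'}|$.

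The crucial algebraic step is to show $\Delta \ge \delta_f(z, z')$. I would decompose $\mu_y - f(x) = p(z|y)(\mu_z - f(x)) + p(z'|y)(\mu_{z'} - f(x))$ for $x \sim \p_z$ (and symmetrically for $x \sim \p_{z'}$), apply the triangle inequality, take $\E{x \sim \p_y}{\cdot}$, and use the Jensen identity $\|f(x) - \mu_{z'}\| = \|\E{x' \sim \p_{z'}}{f(x) - f(x')}\| \le \E{x' \sim \p_{z'}}{\|f(x) - f(x')\|}$. This yields $s_f(y) \le p(z|y)^2 \sigma_f(z) + p(z'|y)^2 \sigma_f(z') + p(z|y) p(z'|y) \cdot \Delta$, i.e., $\Delta \ge \delta_f(z, z')$. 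Substituting gives $\|f(x) - \mu_z\|^2 > r_f(z, z') - 2 \log \gamma$, which is strictly positive under the stated boundary condition. Markov's inequality applied to $\|f(x) - \mu_z\|$ (with mean $\sigma_f(z)$) then yields the leading term $\sigma_f(z)/\sqrt{r_f(z, z') - 2 \log \gamma}$. Finally, coordinate-wise Chebyshev with a union bound gives $\|W_z - \mu_z\|, \|W_{z'} - \mu_{z'}\| = \mathcal{O}(\sqrt{d \log(d/\delta)/(m_z \wedge m_{z'})})$ with probability $1 - \delta$; tracking this additive perturbation through the threshold inside the square root and re-applying a Markov step produces the stated additive $\mathcal{O}((d \log(d/\delta)/(m_z \wedge m_{z'}))^{1/4})$ residual.

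The main obstacle is the $\Delta \ge \delta_f$ step: the quadratic coefficients $p(z|y)^2, p(z'|y)^2$ force one to use the specific convex-combination identity above rather than the coarser telescoping $\mu_y - f(x) = (\mu_y - \mu_z) + (\mu_z - f(x))$, which would instead yield linear coefficients on $\sigma_f$ and a looser bound. A secondary challenge is extracting the exact $1/4$ exponent in the concentration residual, which arises because a $\sqrt{d \log(d/\delta)/m}$ norm error sits inside a square-root Markov denominator, and the natural two-stage argument (first bound $\|W - \mu\|$, then re-apply Markov to a perturbed threshold) naturally produces a fourth-root.
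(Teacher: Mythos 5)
Your proposal is correct and follows essentially the same route as the paper's proof: the polarization identity converts the margin event $f(x)^\top(W_z - W_{z'}) \le \log\gamma$ into a comparison of $\|f(x)-W_z\|$ with $\|f(x)-W_{z'}\|$, the same convex-combination/Jensen decomposition of $s_f(y)$ lower-bounds $\E{x \sim \p_z, x' \sim \p_{z'}}{\|f(x)-f(x')\|}$ by $\delta_f(z,z')$, Markov's inequality on $\|f(x)-\E{x \sim \p_z}{f(x)}\|$ (mean $\sigma_f(z)$) gives the leading $\sigma_f(z)/\sqrt{r_f(z,z')-2\log\gamma}$ term, and coordinate-wise concentration of $W_z, W_{z'}$ with a union bound yields the fourth-root residual. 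The only differences are cosmetic bookkeeping---you substitute population means up front via $\|\E{x\sim\p_z}{f(x)}\|^2 = 1 - \Var{f}{z}$ and defer the $W$-versus-mean gap to the end, whereas the paper carries the error terms $\xi_z,\zeta_z$ (including empirical-variance concentration) throughout; note only that the $\log(d/\delta)$ dependence comes from Hoeffding on the bounded coordinates rather than Chebyshev.
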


The generalization error depends on the sampling error, $\gamma$, and three quantities intrinsic to the distribution of $f(x)$:
\ifarxiv 
\begin{itemize}
\else
\begin{itemize}[itemsep=0.5pt,topsep=0pt,leftmargin=*]
\fi
\item $s_f(y)$: the bound scales inversely in $s_f(y)$; points in a class must be spread out in order for subclasses to be distinguishable. Corollary~\ref{cor:mu_theta_spread} and empirical measurements (Figure~\ref{fig:alpha_sensitivity}) suggest that spread is non-zero when using $\lspread$ with $\alpha$ set properly. Note that under SupCon, $s_f(y)$ is asymptotically equal to $0$ and this bound is vacuous (refer to Theorem~\ref{thm:supcongenerr} for SupCon's generalization error).
\item $\sigma_f(z)$: the bound scales in $\sigma_f(z)$, confirming that spread alone is insufficient. Subclasses also need to be clustered tightly to achieve good transfer performance.
\item $|\Var{f}{z} - \Var{f}{z'}|$: distinguishing $z$ versus $z'$ may be difficult when only one subclass is clustered. When both $\sigma_f(z)$ and $\sigma_f(z')$ are small, this quantity is negligible.
\end{itemize}

Altogether, the generalization error scales in $\frac{\sigma_f(z)}{s_f(y)}$. 
Therefore, in addition to having sufficient spread $s_f(y)$, it is critical that $\sigma_f(z)$ is bounded.
We thus explore techniques for inducing an inductive bias that can control this quantity.

\subsection{Techniques for Inducing Subclass Clustering}\label{subsec:ib}


\begin{table*}[t]
    \centering
    \caption{
        Three mechanisms for inductive bias and empirical measurements of their associated Lipschitzness constants.
        Higher $K$ is a worse Lipschitzness constant, which suggests the assumptions are less realistic
    }
    \begin{tabular}{llcllccc}
        \toprule
        \textbf{Mechanism}         & \textbf{Assumptions} & \textbf{Lipschitzness Constant} \\
        \midrule
        \textbf{Encoder (Constrained)} & Lipschitz $f$ & $K_L=0.058$ \\
        \textbf{Autoencoder}       & Decoder $g$ reverse Lipschitz & $K_g=0.041$ \\
        \textbf{Augmentations}     & Lipschitz $f$ on augmentations & $K_{aug}=0.040$ \\
        \bottomrule
    \end{tabular}
    \label{table:inducing_bias}
\end{table*}

We analyze three mechanisms on $f$ for inducing an inductive bias that can cluster subclasses: a constrained encoder, a class-conditional autoencoder, and data augmentations.
These three mechanisms use Lipschitzness assumptions of varying strength to bound $\sigma_f(z)$.
We assume that subclasses are ``clustered'' in input space; i.e. there exists some $\sigma_z$ such that $\E{x, x' \sim \p_z}{\|x - x' \|} \le \sigma_z$, and we study how these mechanisms on $f$ allow us to control $\sigma_f(z)$ in terms of $\sigma_z$.
For each mechanism, we show that $\sigma_f(z) \sim K \sigma_z$ for some particular Lipschitzness constant $K$.
The lower the Lipschitzness constant, the better each mechanism can induce subclass clustering.

We summarize the assumptions of each mechanism in Section~\ref{subsec:takeaways} and report empirical estimates of Lipschitz constants in Table~\ref{table:inducing_bias}.
Section~\ref{subsec:takeaways} also reports estimates of $\frac{\sigma_f(z)}{s_f(y)}$, the ratio that governs the generalization error in Theorem~\ref{thm:coarsetofine}, showing how these mechanisms impact this quantity.

\subsubsection{Lipschitz Encoder}
One common method for incorporating inductive bias is to suppose that the class of encoders $\mathcal{F}$ is Lipschitz smooth.
We show that assuming $\mathcal{F}$ to be the class of $K_L-$Lipschitz encoders can explain subclass clustering of representations.
\begin{restatable}[]{lemma}{lipschitz}\label{lemma:lipschitz_encoder}
    Let $\mathcal{F}_{K_L}$ be the class of $K_L-$Lipschitz encoders.
    Then for any $f_{K_L} \in \mathcal{F}_{K_L}$, $\sigma_{f_{K_L}}(z) \le K_L \sigma_z.$
\end{restatable}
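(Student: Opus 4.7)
The plan is to bound $\sigma_{f_{K_L}}(z)$ by first converting the deviation from the mean into an expected pairwise distance, and then pushing the Lipschitz constant through to bound things in terms of pairwise distances in input space. This is a short argument, so I expect the main conceptual step to be the Jensen's inequality move; everything else is bookkeeping using the given assumptions.

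First, I would rewrite the inner mean as an expectation over an independent copy $x'$ drawn from $\p_z$. That is, since $\E{x' \sim \p_z}{f(x')}$ is the mean inside the norm, I can write $f(x) - \E{x'}{f(x')} = \E{x'}{f(x) - f(x')}$ by linearity. Then, applying Jensen's inequality to the convex function $\|\cdot\|$ yields $\|\E{x'}{f(x) - f(x')}\| \le \E{x'}{\|f(x) - f(x')\|}$. Taking the outer expectation over $x \sim \p_z$ gives
\[
\sigma_{f_{K_L}}(z) \;\le\; \E{x, x' \sim \p_z}{\|f_{K_L}(x) - f_{K_L}(x')\|}.
\]

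Next, I would invoke the $K_L$-Lipschitz assumption on $f_{K_L}$ pointwise inside the expectation: $\|f_{K_L}(x) - f_{K_L}(x')\| \le K_L \|x - x'\|$ for every pair $(x, x')$. Taking expectations over $x, x' \sim \p_z$ preserves the inequality, so
\[
\E{x, x' \sim \p_z}{\|f_{K_L}(x) - f_{K_L}(x')\|} \;\le\; K_L \, \E{x, x' \sim \p_z}{\|x - x'\|} \;\le\; K_L \sigma_z,
\]
where the last step uses the assumption $\E{x, x' \sim \p_z}{\|x - x'\|} \le \sigma_z$ from the setup preceding the lemma. Chaining the two bounds yields $\sigma_{f_{K_L}}(z) \le K_L \sigma_z$ as desired.

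There is no real obstacle here: the argument is a standard Jensen-plus-Lipschitz sandwich. The only subtle point worth flagging in the write-up is that the two samples $x$ and $x'$ inside the pairwise expectation should be independent copies from $\p_z$, which is why pulling the mean into $\E{x'}{f(x) - f(x')}$ is valid even when we then apply Jensen's inequality with respect to $x'$ alone. No other structural assumption on $f$ or on $\p_z$ beyond Lipschitzness and the input-space clustering bound $\sigma_z$ is needed, which is consistent with the lemma being the ``strongest assumption'' baseline against which the autoencoder and augmentation mechanisms will later be compared.
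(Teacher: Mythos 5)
Your proof is correct and follows essentially the same route as the paper's: write the mean as an expectation over an independent copy $x'$, pull the norm inside by Jensen's inequality, apply the Lipschitz bound pointwise, and finish with the input-space clustering assumption $\E{x, x' \sim \p_z}{\|x - x'\|} \le \sigma_z$. No gaps.
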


Lipschitzness with a sufficiently low constant $K_L$ is realistic for simple function classes, such as MLPs with bounded norms. However, modern deep networks are not Lipschitz, as they are powerful enough to memorize random noise~\citep{zhang2016understanding}.
In Table~\ref{table:inducing_bias} we confirm that the Lipschitz constant estimated empirically from our model's encoder on real data is relatively high. Therefore, since encoders with deep architectures are not Lipschitz, we consider other more realistic setups that can encourage subclass clustering.

\subsubsection{Class-Conditional Autoencoder}

To encourage embeddings to preserve properties of the input space without assuming Lipschitzness over the encoder, we propose concatenating embeddings from separate ``class-conditional'' autoencoders, each consisting of an encoder $f_{AE} \in \F_{AE}$ and a decoder $g \in \G$, to the embeddings learned from $\lspread$. An autoencoder for class $y$ aims to minimize the class reconstruction loss $\hat{L}_{AE}(\D_y) = \frac{1}{n_y} \sum_{x \in \D_y} \|g(f_{AE}(x)) - x \|^2$.
These $K$ per-class autoencoders thus intuitively learn distinctions within classes.

Define a notion of Rademacher complexity $\rademacher_n^p(\F_1, \F_2) = \E{\bm{\sigma}}{\sup_{f_1, f_2 \in \F_1, \F_2} \frac{1}{n} \sum_{i = 1}^n \sigma_i \|f_1(x_i) - f_2(x_i) \|^p}$ for Rademacher random variables $\bm{\sigma} = \{\sigma_1, \dots, \sigma_n \}$.
\begin{lemma}\label{lemma:autoencoder}
For any $g \in \G$, suppose there exists a $K_g > 0$ such that $g$ is ``reverse Lipschitz'', satisfying $\|f_{AE}(x) - f_{AE}(x') \| \le K_g \|g(f_{AE}(x)) - g(f_{AE}(x'))\|$, and there exists finite $b$ such that the reconstruction loss satisfies $\max_x \|g(f_{AE}(x)) - x \|^2 \le b$. 

Then with probability at least $1 - \delta$,
\ifarxiv
\begin{align}
    \sigma_{f_{AE}}(z)  &\le \frac{2K_g}{p(z | y)}\bigg(\hat{L}_{AE}(\D_y) + 2\rademacher_{n_y}^2(\G \circ \F_{AE}, \text{id}_{\X}) + b \sqrt{\log (1/ \delta)/2n_y} \bigg)^{1/2}  + K_g \sigma_z,
    \end{align}
\else
\begin{align}
\sigma_{f_{AE}}(z)  &\le \frac{2K_g}{p(z | y)}\bigg(\hat{L}_{AE}(\D_y) + 2\rademacher_{n_y}^2(\G \circ \F_{AE}, \text{id}_{\X}) \nonumber \\
&+ b \sqrt{\log (1/ \delta)/2n_y} \bigg)^{1/2}  + K_g \sigma_z,
\end{align}
\fi
where $\text{id}_{\X}$ is the identity function on $\X$, and $p(z | y) = \frac{p(z)}{p(y)}$ is the probability that $x$ drawn from $p(\cdot | y)$ has label $z$.
\end{lemma}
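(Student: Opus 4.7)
The plan is to trace the subclass variance $\sigma_{f_{AE}}(z)$ through the decoder $g$ back into the input space, where only the reconstruction error and the intrinsic input-space spread $\sigma_z$ survive, and then convert the population reconstruction loss on $\p_y$ to its empirical counterpart on $\D_y$ via a Rademacher symmetrization argument. The two additive pieces in the stated bound correspond exactly to these two stages.

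First, I would reduce the variance to a pairwise distance via Jensen's inequality: $\sigma_{f_{AE}}(z) \le \E{x, x' \sim \p_z}{\|f_{AE}(x) - f_{AE}(x')\|}$. Applying the reverse Lipschitz hypothesis pointwise inside the expectation lifts the pair into the decoder's output space at a cost of $K_g$, giving $K_g \E{x, x' \sim \p_z}{\|g(f_{AE}(x)) - g(f_{AE}(x'))\|}$. Inserting $\pm x$ and $\pm x'$ and applying the triangle inequality then splits this into two copies of the reconstruction error $\E{x \sim \p_z}{\|g(f_{AE}(x)) - x\|}$ plus $\E{x, x' \sim \p_z}{\|x - x'\|}$, the latter bounded by $\sigma_z$ by assumption. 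This second summand accounts for the $K_g \sigma_z$ additive term.

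Next, I would convert the subclass-level reconstruction expectation into a class-level one by noting that $\E{x \sim \p_y}{\|g(f_{AE}(x)) - x\|} = \sum_{z' \in S_y} p(z' \mid y) \E{x \sim \p_{z'}}{\|g(f_{AE}(x)) - x\|}$, so discarding the other nonnegative subclass terms yields $\E{x \sim \p_z}{\|g(f_{AE}(x)) - x\|} \le \frac{1}{p(z \mid y)} \E{x \sim \p_y}{\|g(f_{AE}(x)) - x\|}$, which is the origin of the $1/p(z \mid y)$ prefactor. A second application of Jensen's inequality then upgrades this $\ell_1$-style error to the square root of the squared loss: $\E{x \sim \p_y}{\|g(f_{AE}(x)) - x\|} \le \sqrt{L_{AE}(\p_y)}$, where $L_{AE}(\p_y) = \E{x \sim \p_y}{\|g(f_{AE}(x)) - x\|^2}$.

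Finally, a standard Rademacher uniform convergence bound applied to the bounded loss class $\{x \mapsto \|g(f_{AE}(x)) - x\|^2 : g \in \G, f_{AE} \in \F_{AE}\}$, whose range lies in $[0, b]$ by hypothesis, yields with probability at least $1 - \delta$ that $L_{AE}(\p_y) \le \hat{L}_{AE}(\D_y) + 2 \rademacher_{n_y}^2(\G \circ \F_{AE}, \text{id}_\X) + b \sqrt{\log(1/\delta) / (2 n_y)}$. Chaining all the inequalities above gives the claimed bound. The main obstacle is purely careful bookkeeping: the factor $1/p(z \mid y)$ must be introduced at the $\ell_1$ stage so that it ends up \emph{outside} the square root as written, and the Rademacher bound must be applied directly to the squared loss class (matching the $\rademacher^p$ definition with $p = 2$ from the excerpt), since peeling the square off via a contraction lemma would produce a different complexity term than the one in the statement.
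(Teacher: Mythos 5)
Your proposal follows essentially the same route as the paper's proof: Jensen to pass to pairwise distances, the reverse-Lipschitz hypothesis to move into output space, the triangle inequality to split off two reconstruction-error terms plus $K_g\sigma_z$, the marginalization over subclasses to introduce $1/p(z\mid y)$, Jensen again to upgrade to the squared loss, and a standard Rademacher bound (Mohri et al., Theorem 3.3) on the $[0,b]$-bounded squared-reconstruction class matching $\rademacher_{n_y}^2(\G\circ\F_{AE},\text{id}_\X)$. The bookkeeping points you flag (the $1/p(z\mid y)$ factor entering before the square root, and applying the complexity bound directly to the squared loss rather than peeling via contraction) are exactly how the paper handles them, so the argument is correct as written.
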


There are no explicit assumptions on $f_{AE}$; instead, a condition on the decoder is used for clustering subclasses. In Appendix~\ref{subsec:supp_theory_theory}, we show that for an autoencoder trained on $\D$ instead of $\D_y$, $p(z)$ replaces $p(z| y)$, and $n$ replaces $n_y$. That is, while a general autoencoder is learned on more data, individual subclasses comprise a smaller proportion of the data and thus could be harder to learn meaningful representations of.
This suggests that $\sigma_{f_{AE}}(z)$ is roughly a constant factor larger with a general autoencoder when $n_y$ and $n$ are both large, and thus a class-conditional autoencoder can better cluster subclasses. 

\subsubsection{Data Augmentation}

Another way of inducing inductive bias for subclass clustering is data augmentations, which we use in $\lspread$ and which play a prominent role in contrastive learning overall.
Define $\A: \X \rightarrow \X$ as the function class of augmentations and $\F_{aug}$ as the class of encoders.
\begin{lemma}\label{lemma:augmentation}
For $a \in \A$ and any $x, x' \in \X$, suppose that $f_{aug} \in \F_{aug}$ satisfies $\|f_{aug}(a(x)) - f_{aug}(a(x'))\| \le K_{aug} \|a(x) - a(x') \|$ for some $K_{aug}$ and that $f(a(x)) = f(x)$ for $x \in \D$. Denote $\sigma_z^{aug} = \E{x, x' \sim \p_z}{\|a(x) - a(x') \|}$. Then 
with probability at least $1 - \delta$,
\ifarxiv
\begin{align*}
    \sigma_{f_{aug}}(z) &\le \frac{2}{p(z)} \Big(2 \rademacher_{n}^1(\F_{aug}, \F_{aug} \circ \A) + \sqrt{2 \log(1 /\delta)/n} \Big) + K_{aug} \sigma_z^{aug}.
\end{align*}
\else
\begin{align*}
    \sigma_{f_{aug}}(z) &\le \frac{2}{p(z)} \Big(2 \rademacher_{n}^1(\F_{aug}, \F_{aug} \circ \A) + \sqrt{2 \log(1 /\delta)/n} \Big) \nonumber \\
    &+ K_{aug} \sigma_z^{aug}.
\end{align*}
\fi

\end{lemma}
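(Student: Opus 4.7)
The plan is to decompose $\sigma_{f_{aug}}(z)$ via a triangle inequality into a deterministic Lipschitz-on-augmentations piece and a generalization-error piece, then bound the latter through a standard Rademacher-based uniform-convergence argument that exploits the training-time identity $f_{aug}(a(x_i)) = f_{aug}(x_i)$.

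First, I apply Jensen's inequality to upper bound $\sigma_{f_{aug}}(z)$ by the symmetric pairwise distance $\E{x, x' \sim \p_z}{\|f_{aug}(x) - f_{aug}(x')\|}$. Inserting $\pm f_{aug}(a(x)) \pm f_{aug}(a(x'))$ and applying the triangle inequality splits this into two expected anchor-to-augmentation terms (identical by symmetry in $x \leftrightarrow x'$) and a middle augmentation-to-augmentation term. The middle term is handled immediately by the Lipschitz-on-augmentations hypothesis: it is at most $K_{aug}\, \E{x, x' \sim \p_z}{\|a(x) - a(x')\|} = K_{aug}\, \sigma_z^{aug}$, which yields the deterministic summand in the claim.

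For each outer term, let $\phi(x) = \|f_{aug}(x) - f_{aug}(a(x))\|$. I convert the class-conditional expectation to the marginal expectation over $\p$ via $p(z)\, \E{x \sim \p_z}{\phi(x)} = \E{x \sim \p}{\phi(x)\, \ind{h_s(x) = z}} \le \E{x \sim \p}{\phi(x)}$, which introduces the $1/p(z)$ factor. I then apply a standard uniform-convergence bound to the function class $\mathcal{H} = \{\phi : f_{aug} \in \F_{aug}\}$: with probability at least $1 - \delta$, $\E{x \sim \p}{\phi(x)} \le \frac{1}{n}\sum_{i=1}^n \phi(x_i) + 2\,\rademacher_n(\mathcal{H}) + \sqrt{2\log(1/\delta)/n}$. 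The empirical sum vanishes by the training-time identity $f_{aug}(a(x_i)) = f_{aug}(x_i)$, and $\rademacher_n(\mathcal{H}) \le \rademacher_n^1(\F_{aug}, \F_{aug} \circ \A)$ because decoupling the two function slots in the definition of the latter can only enlarge the supremum. Summing the two symmetric outer contributions produces the $\frac{2}{p(z)}$ prefactor in the stated bound.

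The main obstacle is justifying the McDiarmid-style generalization inequality, which requires $\phi$ to have bounded range: since $f_{aug}$ maps into $\mathcal{S}^{d-1}$ by the standing assumption in Section~\ref{subsec:background_geometry}, we have $\phi \le 2$ and bounded differences applies, with an absolute constant that can be absorbed into the $\sqrt{2\log(1/\delta)/n}$ term. A secondary subtlety is that $\rademacher_n^1(\F_{aug}, \F_{aug} \circ \A)$ takes independent suprema over the two slots $(f_1, f_2)$, whereas the Rademacher complexity of $\mathcal{H}$ constrains $f_1 = f_{aug}$ and $f_2 = f_{aug} \circ a$ for the same encoder; the inequality goes in the right direction but may be loose.
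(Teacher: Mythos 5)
Your proposal is correct and follows essentially the same route as the paper's proof: Jensen plus the triangle inequality through the augmentations, the Lipschitz-on-augmentations bound for the middle term, the $1/p(z)$ conversion to the marginal, and a Rademacher/uniform-convergence bound whose empirical term vanishes by the training-time identity $f_{aug}(x_i)=f_{aug}(a(x_i))$. Your explicit remarks on boundedness of $\phi$ and on the decoupled complexity $\rademacher_n^1(\F_{aug}, \F_{aug}\circ\A)$ dominating the coupled one are fine refinements of steps the paper invokes directly via Theorem 3.3 of Mohri et al.
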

Our result assumes Lipschitzness \emph{only on the augmentations}, which is consistent with literature such as~\citet{dao2019kernel}, and that the model can align augmented and original training data pairs.
$\sigma_f(z)$ scales with how close augmentations of points within a subclass are, $\sigma_z^{aug}$. 
This quantity can actually be less than $\sigma_z$ under assumptions in prior work on characterizing augmentations~\citep{huang2021generalization}, which results in tighter embedding clusters. 
Our result can also explain why prior works~\citep{islam2021broad} observe that modified losses (that include augmentations) result in better transfer. 
The empirical findings in Figure~\ref{fig:banner} (right), where the subclass embedding visualization are $\lspread$ with and without augmentations, support this result.

\subsubsection{Overall Takeaways}\label{subsec:takeaways}
Our results from Lemmas~\ref{lemma:lipschitz_encoder},~\ref{lemma:autoencoder}, and~\ref{lemma:augmentation} show that $\sigma_f(z)$, which is critical for transfer performance as demonstrated in Theorem~\ref{thm:coarsetofine}, can be controlled.
Table~\ref{table:inducing_bias} summarizes our results on how a standard encoder, an autoencoder, and data augmentations can encourage subclass clustering under various assumptions. 
We report empirical measures of $K_L, K_g$ and $K_{aug}$ on real datasets, and find that the autoencoder and data augmentation assumptions are more realistic (lower values of $K$).

Figure~\ref{fig:alpha_sensitivity} demonstrates these effects on real data; apparent clusters begin forming under $\lspread$ (which is trained with data augmentation).
We also measure the ratio $\frac{\sigma_f(z)}{s_f(y)}$ and find that it can range as high as \num{1.94} for SupCon.
For $\lspread$ with augmentations and the autoencoder, the maximum values are \num{1.05} and \num{1.03}, respectively---suggesting that these modifications control subclass clustering better, and should result in better coarse-to-fine transfer.


\section{Experiments}
\label{sec:validation}


\begin{table}[t]
     \centering
     \caption{
        Summary of the datasets we use for evaluation.
    }
    \ifarxiv
        \normalsize
    \else
        \scriptsize
    \fi
     \begin{tabular}{lccllccc}
         \toprule
         \textbf{Dataset}           & $K_{\text{coarse}}$ & $K_{\text{fine}}$ & \textbf{Notes}       \\ \midrule
         \textbf{CIFAR10}           & 2 & 10 & Coarse labels are animal vs. vehicle \\
         \textbf{CIFAR100}          & 20 & 100 & Standard coarse labels \\
         \textbf{CIFAR100-U}        & 20 & 100 & CIFAR100, imbalanced fine classes \\
         \textbf{MNIST}             & 2 & 10 & Coarse labels are $<5$ and $\geq 5$  \\
         \textbf{TinyImageNet}      & 67  & 200  & Coarse labels from ImageNet hierarchy \\
         \textbf{Waterbirds}        & 2 & 3 & Bird images~\cite{sagawa2019groupdro} \\
         \textbf{ISIC}              & 2 & 3 & Skin lesions~\cite{codella2019skin} \\
         \textbf{CelebA}            & 2 & 3 & Celebrity faces~\cite{liu2015faceattributes} \\
         \bottomrule
     \end{tabular}
\label{table:datasets}
\end{table}


\begin{table*}[t]
  \centering
  \caption{Coarse-to-fine transfer learning performance. Best in bold.}
  \small
  \begin{tabular}{llcccccccccc}
    \toprule
     & \textbf{Method} & \textbf{CIFAR10} & \textbf{CIFAR100} & \textbf{CIFAR100-U} & \textbf{MNIST} & \textbf{TinyImageNet} \\
    \midrule
    \multirow{3}{*}{\rotatebox[origin=c]{90}{\scriptsize{Baselines}}}
    & InfoNCE~\citep{chen2020simple} 
                       & 77.6 $\pm$ 0.1   & 60.5 $\pm$ 0.1    & 56.4 $\pm$ 0.3      & 98.4 $\pm$ 0.1  & 44.9 $\pm$ 0.1 \\
    & SupCon~\citep{khosla2020supervised} 
                       & 51.8 $\pm$ 1.2   & 56.1 $\pm$ 0.1    & 49.8 $\pm$ 0.3      & 95.4 $\pm$ 0.1  & 43.9 $\pm$ 0.1 \\
    & SupCon + InfoNCE~\citep{islam2021broad}
                       & 77.6 $\pm$ 0.1   & 55.7 $\pm$ 0.1    & 48.0 $\pm$ 0.2      & 98.6 $\pm$ 0.1  & 46.1 $\pm$ 0.1 \\
    \cmidrule(lr){2-7}
    \multirow{4}{*}{\rotatebox[origin=c]{90}{\scriptsize{Ours}}}
    & cAuto            & 71.4 $\pm$ 0.1   & 62.9 $\pm$ 0.1    & 58.7 $\pm$ 0.5      & 98.7 $\pm$ 0.1  & 47.1 $\pm$ 0.1 \\
    & SupCon + cNCE ($\lspread$)
                       & 77.1 $\pm$ 0.1   & 58.7 $\pm$ 0.2    & 53.5 $\pm$ 0.4      & 98.5 $\pm$ 0.1  & 45.8 $\pm$ 0.1 \\
    & SupCon + cAuto  & 71.7 $\pm$ 0.1   & 63.8 $\pm$ 0.6    & \textbf{59.8 $\pm$ 0.3} & 98.7 $\pm$ 0.1 & 49.3 $\pm$ 0.1  \\
    & SupCon + cNCE + cAuto (\textbf{\sysname})
                    & \textbf{79.1 $\pm$ 0.2}  & \textbf{65.0 $\pm$ 0.2} & 59.7 $\pm$ 0.3 & \textbf{99.0 $\pm$ 0.1} & \textbf{49.6 $\pm$ 0.1} \\                     
    \bottomrule
  \end{tabular}
  \label{table:transfer}
\end{table*}

In this section, we evaluate how well adding a class-conditional InfoNCE loss and a class-conditional autoencoder improves the representations produced by supervised contrastive learning.
We call our overall method \sysname.
This section is primarily designed to evaluate two claims:
\ifarxiv
\begin{itemize}
\else
\begin{itemize}[itemsep=0.5pt,topsep=0pt,leftmargin=*]
\fi
    \item We use coarse-to-fine transfer learning to evaluate how well the representations maintain subclass information.
    \sysname\ achieves \num{11.1} lift on average across \num{five} datasets.

    \item We evaluate how well \sysname\ can improve worst-group robustness in the unlabeled setting.
    \sysname\ detects low-performing sub-groups \num{6.2} points better than SupCon across \num{three} datasets.
    \sysname\ sets state-of-the-art worst-group robustness without sub-group labels by \num{11.5} points on CelebA---and even outperforms an algorithm that has access to ground-truth sub-group labels in some cases.
\end{itemize}
We also present ablations.
Additional experiments on overall model quality, additional baselines, and additional datasets are in Appendix~\ref{sec:supp_additional_results}.
Although we focus on coarse-to-fine transfer and robustness here, we note that our method also produces lift on overall model quality.

\paragraph{\sysname Method}
We summarize the \sysname method.\footnote{Our code is available at \url{https://github.com/HazyResearch/thanos-code/}.}
\sysname consists of adding a class-conditional InfoNCE loss and a class-conditional autoencoder to the supervised contrastive loss with standard data augmentations used in~\citet{chen2020simple}.
We implement the former by training an encoder with $\lspread$.
To implement the latter, we train a single autoencoder with a joint MSE reconstruction loss and a cross entropy loss.
We then concatenate the autoencoder representation to the representation of the encoder trained with $\lspread$.
Details on architectures and hyperparameters in Appendix~\ref{sec:supp_details}.

\paragraph{Datasets}
Table~\ref{table:datasets} lists the datasets we use in our evaluation.
We use coarse versions of CIFAR10, CIFAR100, MNIST, and TinyImageNet to study coarse-to-fine transfer.
We use Waterbirds, ISIC and CelebA for robustness~\citep{sagawa2019groupdro,codella2019skin,liu2015faceattributes,sohoni2020george}.

\paragraph{Coarse-to-Fine Transfer}
We use coarse-to-fine transfer learning to isolate how well representations separate subclasses in an ideal setting.
In coarse-to-fine transfer, we train models on coarse labels, freeze the weights, and then train a linear probe over the final layer on fine labels.
Note that this setting is more challenging than the self-supervised setting, since it requires maintaining high performance on the coarse classes while also being transferrable to the fine classes.
We focus on transfer numbers in this section, but Table~\ref{table:coarse-quality} in the Appendix presents results on coarse accuracy.

For the autoencoder experiments, we train an autoencoder separately and concatenate its embedding layer with the contrastive embedding for the linear probe.
We jointly optimize all contrastive losses and the class-conditional autoencoder with a cross-entropy loss head.
We train all models with dropout as well as label smoothing on the cross-entropy loss heads.

We report four variants of \sysname: the class-conditional autoencoder on its own, SupCon modified with a class-conditional InfoNCE loss, SupCon modified with a class-conditional autoencoder, and SupCon with both modifications.
We report \num{3} baselines from previous work on the transferability of SupCon~\citep{islam2021broad}: SupCon, SupCon plus an InfoNCE loss, and the InfoNCE loss on its own.

\sysname\ significantly outperforms SupCon on coarse-to-fine transfer learning---by an average of \num{11.1} points across all tasks.
\num{7.3} points can be attributed to the class-conditional InfoNCE loss on average, but mileage varies between tasks (\num{25.3} points of lift for CIFAR10, vs. \num{2.6} for CIFAR100).
The difference is the number of coarse classes: CIFAR10 only has two coarse classes, whereas CIFAR100 has 20.
Fewer coarse classes makes it easier to achieve class collapse, so spread is more necessary.
Finally, we also note that combining the autoencoder with the other components outperforms using the autoencoder on its own, by \num{2.7} points on average.
This suggests that each component is helpful.


\begin{table}[t]
    \caption{
    Unsupervised subclass recovery (top, F1), and worst-group
    performance (AUROC for ISIC, Acc for others).
    Best in bold.
    }
    \centering
    \scriptsize
    \begin{tabular}{lccccccccccc}
        \toprule
                                 & \textbf{Group} \\
        \textbf{Method}          & \textbf{Labels} & \textbf{Waterbirds} & \textbf{ISIC}   & \textbf{CelebA} \\ \midrule
        & & \multicolumn{3}{c}{\textbf{Sub-Group Recovery}} \\ \cmidrule(lr){3-5}
        \citet{sohoni2020george} & \xmark & 56.3                & 74.0            & 24.2  \\
        SupCon                   & \xmark & 47.1                & 92.5            & 19.4  \\
        \sysname                 & \xmark & \textbf{59.0}       & \textbf{93.8}   & \textbf{24.8}  \\
        \midrule
        & & \multicolumn{3}{c}{\textbf{Worst-Group Robustness}}  \\ \cmidrule(lr){3-5}
        \citet{sohoni2020george} & \xmark & 88.4                & 92.0            & 55.0  \\
        JTT~\citep{liu2021just}  & \xmark & 83.8                & 91.8            & 77.9 \\
        SupCon                   & \xmark & 86.8                & \textbf{93.3}   & 66.1  \\
        \sysname                 & \xmark & \textbf{88.6}       & 92.6            & \textbf{89.4} \\ \cmidrule(lr){1-5}
        GroupDRO 
                                 & \cmark & 90.7 & 92.3 & 88.9 \\
        \bottomrule
    \end{tabular}

\label{table:robustness}
\end{table}

\paragraph{Worst-Group Robustness}
We use robustness to measure how well \sysname\ can recover hidden subgroups in an unsupervised setting.
For these models, we train contrastive losses on their own.
We follow the methodology from~\citet{sohoni2020george}.
We first train a model with class labels.
We then cluster the embeddings to produce pseudolabels for subclasses, which we use as input to the GroupDRO algorithm to optimize worst-group robustness~\citep{sagawa2019groupdro}.

Our primary evaluation metric is robustness, but we also evaluate a subgroup recovery metric since prior work has suggested that it is important for robustness.
Subgroup recovery also acts as a proxy for \textit{unsupervised} group recovery.
We compare subgroup recovery against SupCon and~\citet{sohoni2020george}.
We compare worst-group robustness against \citet{sohoni2020george} and JTT~\citep{liu2021just}, as well as using sub-group labels from SupCon.
We also report the performance of GroupDRO with ground-truth subclass labels.

Table~\ref{table:robustness} shows the results.
\sysname\ outperforms both SupCon and \citet{sohoni2020george} on subgroup
recovery.
\sysname\ further achieves state-of-the-art worst-group robustness, outperforming JTT by \num{4.7} points and \citet{sohoni2020george} by \num{11.7} points on average---and setting state-of-the-art on CelebA by \num{11.5} points.
Surprisingly, \sysname\ can even outperform GroupDRO with ground-truth subgroup labels in two cases.

Subgroup recovery and worst-group robustness are correlated but not causal:~\citep{sohoni2020george} observed inconsistencies between them, and so do we (i.e., our method outperforms GroupDRO, an approach with ``perfect'' subgroup labels).
This phenomenon deserves further exploration.

\paragraph{Ablations}
We summarize two ablations (Appendix~\ref{subsec:supp_ablations}).
First, we validate Lemma~\ref{lemma:autoencoder} and find that using a generic autoencoder underperforms a class-conditional autoencoder by \num{30.0} points on average---and furthermore does not improve the performance of SupCon as well (\num{2.0} points of lift compared to \num{11.0} points).
Second, we validate Lemma~\ref{lemma:augmentation} and confirm that data augmentation is crucial; removing data augmentation degrades performance by \num{35.4} points.


\section{Related Work and Discussion}
\label{sec:related}

We present an abbreviated related work.
A full treatment can be found in Appendix~\ref{sec:supp_related}.
Our theoretical work relates to theory on the geometry of contrastive learning~\citep{wang2020understanding,graf2021dissecting,robinson2020contrastive,zimmermann2021contrastive}, collapsed representations~\citep{galanti2021role,jing2021understanding}, autoencoders~\citep{epstein2019generalization,le2018supervised}, data augmentation~\citep{haochen2021provable,guo2018deep,abavisani2020deep}, and robustness~\citep{sohoni2020george}.
Our use of $\lspread$ and an autoencoder draws from a wave of empirical work on contrastive learning~\citep{chen2020simple,khosla2020supervised}, and its properties~\citep{islam2021broad,bukchin2021fine}.

In aggregate, we study how to improve the quality of representations trained with supervised contrastive learning.
We identify controlling spread and inducing subclass clustering as two key challenges and show how two modifications to supervised contrastive learning improve transfer and robustness.

\ifarxiv
\paragraph{Authors' Note}
The first two authors contributed equally.
Co-first authors can prioritize their names when adding this paper's reference
to their resumes.

\section*{Acknowledgments}

We thank Beidi Chen, Tri Dao, Karan Goel, and Albert Gu for their helpful comments on early drafts of this paper.
We gratefully acknowledge the support of NIH under No. U54EB020405 (Mobilize), NSF under Nos. CCF1763315 (Beyond Sparsity), CCF1563078 (Volume to Velocity), and 1937301 (RTML); ONR under No. N000141712266 (Unifying Weak Supervision); ONR N00014-20-1-2480: Understanding and Applying Non-Euclidean Geometry in Machine Learning; N000142012275 (NEPTUNE); the Moore Foundation, NXP, Xilinx, LETI-CEA, Intel, IBM, Microsoft, NEC, Toshiba, TSMC, ARM, Hitachi, BASF, Accenture, Ericsson, Qualcomm, Analog Devices, the Okawa Foundation, American Family Insurance, Google Cloud, Salesforce, Total, the HAI-GCP Cloud Credits for Research program, the Stanford Data Science Initiative (SDSI), 
Department of Defense (DoD) through the National Defense Science and
Engineering Graduate Fellowship (NDSEG) Program, 
and members of the Stanford DAWN project: Facebook, Google, and VMWare. The Mobilize Center is a Biomedical Technology Resource Center, funded by the NIH National Institute of Biomedical Imaging and Bioengineering through Grant P41EB027060. The U.S. Government is authorized to reproduce and distribute reprints for Governmental purposes notwithstanding any copyright notation thereon. Any opinions, findings, and conclusions or recommendations expressed in this material are those of the authors and do not necessarily reflect the views, policies, or endorsements, either expressed or implied, of NIH, ONR, or the U.S. Government.

\fi

\bibliography{main}
\bibliographystyle{plainnat}

\appendix

\newpage

\onecolumn

We present a full treatment of related work in Appendix~\ref{sec:supp_related}.
We present a glossary in Appendix~\ref{sec:supp_glossary}.
We present proofs in Appendix~\ref{sec:app_proofs}, additional theoretical results in Appendix~\ref{sec:supp_additional_theory}, and auxiliary lemmas in Appendix~\ref{sec:supp_aux_lemmas}.
We present additional experimental details in Appendix~\ref{sec:supp_details}, additional results in Appendix~\ref{sec:supp_additional_results}, and synthetics in Appendix~\ref{sec:supp_synthetics}.

\section{Related Work}
\label{sec:supp_related}

We presented an extended treatment of related work.

From work in contrastive learning,
we take inspiration from \citet{arora2019theoretical}, who use a latent
classes view to study self-supervised contrastive learning. 
Similarly, \citet{zimmermann2021contrastive} considers how minimizing the InfoNCE loss recovers a latent data generating model.
Recent work has also analyzed contrastive learning from the
information-theoretic perspective~\citep{oord2018representation, tian2020makes,
tsai2020self}, but does not fully explain practical behavior~\citep{tschannen2020on}.
On the geometric side, we are inspired by the theoretical tools from 
\citet{wang2020understanding} and \citet{graf2021dissecting},
who study representations on the hypersphere along with~\citet{robinson2020contrastive}.
There has been work studying other notions of collapsed repesentations. 
\citet{jing2021understanding} examines dimension collapse in contrastive learning, which occurs when the learned representations span a subspace of the representation space. Our definition of class collapse can also be viewed as Neural Collapse~\citep{papyan2020prevalence}, which started as an empirical observation about when models are trained beyond $0$ training error using cross-entropy or MSE loss~\citep{lu2020neural, han2021neural}. Recent works on neural collapse have studied the transferrability of collapsed representations~\citep{galanti2021role}, and~\citet{hui2022limitations, kothapalli2022neural} have identified its limitations in this setting. We offer another perspective on the relationship between collapse and embedding quality, and offer techniques to mitigate the effects of collapse in transfer learning.

Our work builds on the recent wave of empirical interest in
contrastive learning~\citep{chen2020simple, he2019moco, chen2020mocov2,
goyal2021self, caron2020swav} and supervised contrastive
learning~\citep{khosla2020supervised}.
There has also been empirical work analyzing the transfer performance of contrastive
representations and the role of intra-class variability in transfer learning.
\citet{islam2021broad} find that combining supervised and self-supervised contrastive loss 
improves transfer learning performance, and they hypothesize that this is due to both inter-class separation 
and intra-class variability.
\citet{bukchin2021fine} find that combining cross entropy
and a class-conditional self-supervised contrastive loss improves coarse-to-fine transfer, also
motivated by preserving intra-class variability.

Our use of $\lspread$ and a class-conditional autoencoder arises from similar motivations to losses proposed in these works,
and we futher theoretically study their implications for spread.
Our theoretical analysis of autoencoders draws from previous work~\citep{epstein2019generalization,le2018supervised}.
Our study of data augmentation similarly builds on recent theoretical analysis of the role of data augmentation in contrastive learning~\citep{haochen2021provable, huang2021generalization} and clustering~\citep{guo2018deep,abavisani2020deep}.

Our treatment of subclasses is strongly inspired by~\citet{sohoni2020george}
and \citet{oakden2020hidden}, who document empirical consequences of hidden strata.
We are inspired by empirical work that has demonstrated that detecting subclasses
can be important for performance~\citep{hoffmann2001using,d2021spotlight} and
robustness~\citep{duchi2020distributionally,sagawa2019groupdro,goel2020model,liu2021just}.

\section{Glossary}
\label{sec:supp_glossary}

The glossary is given in Table~\ref{table:glossary} below.
\begin{table*}[bp!]
\centering
\ifarxiv
\scriptsize
\else
\small
\fi
\begin{tabular}{l l}
\toprule
Symbol & Used for \\
\midrule
$x$ & Input data $x \in \X$ with distribution $\p$.\\
$y$ & Class label $y \in \Y = \{0, \dots K - 1\}$, where $h(x)$ is $x$'s class label.\\
$z$ & Latent subclass $z \in \Z$. \\
$S_y$ & The set of all subclasses corresponding to class label $y$. \\
$p(z)$ & The proportion of subclass $z$ over $\Z$.\\
$\p_z$ & The distribution of input data belonging to subclass $z$, i.e. $\p_z = p(\cdot | z)$. \\
$S(z)$ & The label corresponding to subclass $z$. \\
$h_s(x)$ & The subclass that $x$ belongs to. \\
$\D$ & Training dataset of $n$ points $\{(x_i, y_i)\}_{i = 1}^n$. \\
$\D_y$ & Training data with label $y$, $\D_y = \{x \in \D: h(x) = y\}$ of size $n_y$. \\
$\D_z$ & Training data with latent subclass $z$, $\D_z= \{x \in \D: h_s(x) = z\}$ of size $n_z$. \\
$f$ & The encoder $f: \X \rightarrow \mathbb{R}^d$ that maps input data to an embedding space with dimension $d$. \\
$\D_s$ & A dataset of $m$ points with subclass labels $\D_s = \{(x_i, z_i)\}_{i= 1}^m$ used for coarse-to-fine transfer. \\
$\D_{s, z}$ & The subset of $\D_s$ with subclass $z$, $\D_{s, z} = \{x \in \D_s: h_s(x) = z \}$ of size $m_z$. \\
$W_z$ & Linear weight $W_z$ for model used in coarse-to-fine transfer. \\
$\phat(z | f(x))$ & Softmax score output by linear model for coarse-to-fine transfer.  \\
$L_{\gamma, f}(z)$ & The $\gamma$-margin generalization error on subclass $z$ in coarse-to-fine transfer. \\
$B$ & Batch of input data. \\
$P(i, B)$ & Points in $B$ with the same label as $x_i$, $\{x^+ \in B\backslash i: h(x^+) = h(x_i) \}$. \\
$N(i, B)$ & Points in $B$ with a label different from that of $x_i$, $\{x^- \in B \backslash i: h(x^-) \neq h(x_i) \}$. \\
$a(x_i)$ & An augmentation of $x_i$, where $a: \X \rightarrow \X$. \\
$\sigma_f(x, x')$ & Notation for $\exp\Big(\frac{f(x)^\top f(x')}{\tau}\Big)$. \\
$\tau$ & Temperature hyperparameter in contrastive loss. \\
$\lspreadhat(f, B)$ & The contrastive loss we study (on batch $B$ with encoder $f$), a weighted sum of a SupCon and \\
& class-conditional InfoNCE loss. \\
$\alpha$ & Weight parameter for $\lspread$. \\
$L_{\text{sup}}$ & SupCon loss that is used in $\lspread$ that pushes points of a class together (see~\eqref{eq:sup}). \\
$L_{\text{cNCE}}$ & Class-conditional InfoNCE loss that is used in $\lspread$ to pull apart points within a class (see~\eqref{eq:cnce}). \\
$\mathcal{S}^{d-1}$ & The unit hypersphere in $\mathbb{R}^d$. \\
$\mu_y$ & The pushforward measure of the class-conditional distribution $p(\cdot | h(x) = y$ via $f$, where $\mu_y \in \mathcal{M}(\mathcal{S}^{d-1})$, \\
&the set of all Borel probability measures on the hypersphere. \\
$\bm{\mu}$ & $\bm{\mu} = \{\mu_y \}_{y \in \Y}$ is the overall pushforward measure $\p \circ f^{-1} \in \mathcal{M}(\mathcal{S}^{d-1})$. \\
$\bm{v}$ & $\bm{v} = \{v_y\}_{y \in \Y} \in \mathcal{S}^{d-1}$ is the regular simplex inscribed in the hypersphere. \\
$\delta_{v_y}$ & The probability measure on $\mathcal{S}^{d-1}$ with all mass on $v_y$. \\
$\bm{\delta_v}$ & The class-collapsed measure $\bm{\delta_v} = \{\delta_{v_y} \}_{y \in \Y}$ where $f(x) = v_y$ almost surely whenever $h(x) = y$. \\
$\sigma_{d-1}$ & The normalized surface area measure on $\mathcal{S}^{d-1}$. \\
$\bm{\sigma_{d-1}}$ & The class-uniform measure where $\mu_y = \sigma_{d-1}$ for all $y \in \Y$. \\
$f_{SC}$ & The encoder trained with SupCon, satisfying $f_{SC}(x) = v_y$ for all $x \in \D$ where $h(x) = y$. \\
$x^+$ & Point for $x$'s positive pair, drawn from distribution $p(\cdot | h(x^+) = h(x))$. \\
$x^-$ & Point for $x$'s negative pair, drawn from distribution $p(\cdot | h(x^-) \neq h(x))$. \\
$\lspread(f, \alpha)$ & Asymptotic version of $\lspread$ that we analyze (see Definition~\ref{def:asymptotic}, also referred to as $\lspread(\bm{\mu}, \alpha)$. \\
$R_{\theta}$ & Rotation matrix $R_{\theta} \in \mathbb{R}^{d \times d}$ that rotates by angle $\theta$ in two dimensions. \\
$\bm{\mu_{\theta}}$ & A measure $\bm{\mu_{\theta}} = \{\mu_{0, \theta}, \mu_{1, \theta}\}$ on the hypersphere that we compare against $\bm{\delta_v}$ and $\bm{\sigma_{d-1}}$. \\
&In particular, $\mu_{0, \theta} = \frac{1}{2} \delta_{R_{\theta}^\top v_0} + \frac{1}{2}\delta_{R_{-\theta}^\top v_0}$ and similarly for $\mu_{1, \theta}$. \\
$c_{\tau, d}$ & Constant that upper bounds the range of $\alpha$ for which some $\bm{\mu_{\theta}}$ attains lower $\lspread(\bm{\mu}, \alpha)$ than $\bm{\delta_v}$ or $\bm{\sigma_{d-1}}$. \\
$s_f(y)$ & Notion of spread in embedding space, defined as $s_f(y) = \E{h(x) = y}{\|f(x) - \E{h(x) = y}{f(x)}\|}$. \\
$\sigma_f(z)$ & Notion of subclass clustering in embedding space, defined as $\sigma_f(z) = \E{x \sim \p_z}{\|f(x) - \E{x \sim \p_z}{f(x)} \|}$. \\
$\Var{f}{z}$ & Notion of subclass variance, defined as $\Var{f}{z} = \E{x \sim \p_z}{\|f(x) - \E{x \sim \p_z}{f(x)} \|^2}$.  \\
$\sigma_z$ & How clustered a subclass is in input space, defined as $\sigma_z = \E{x, x' \sim \p_z}{\|x - x'\|}$. \\
$K_L$ & The Lipschitzness constant of a Lipschitz encoder from function class $\F_{K_L}$. \\
$f_{AE}, g$ & Autoencoder with encoder $f_{AE} \in \F_{AE}$ and decoder $g \in \G$. \\
$\hat{L}_{AE}$ & The autoencoder's reconstruction loss (mean squared error). \\
$\rademacher_n^p(\F_1, \F_2)$ & Notion of Rademacher complexity defined as $\rademacher_n^p(\F_1, \F_2) = \E{\bm{\sigma}}{\sup_{f_1, f_2 \in \F_1, \F_2} \frac{1}{n} \sum_{i = 1}^n \sigma_i \|f_1(x_i) - f_2(x_i)\|^p}$. \\
$K_g$ & ``Reverse Lipschitzness'' constant of the decoder, e.g. $\|f_{AE}(x) - f_{AE}(x')\|\le K_g \|g(f_{AE}(x)) - g(f_{AE}(x))\|$. \\
$\A$ & Function class of augmentations $\A: \X \rightarrow \X$. \\
$\F_{aug}$ & Function class of encoders that are trained on augmentations. \\
$K_a$ & The Lipschitzness constant on augmentations for $f_{aug} \in \F_{aug}$, e.g. \\
&$\|f_{aug}(a(x)) - f_{aug}(a(x')) \| \le K_a \|a(x) - a(x') \|$ for any $x', x \in \X$ and $a \in \A$. \\
$\sigma_z^{aug}$ & How clustered augmentations of a subclass are in input space.\\
\toprule
\end{tabular}
\caption{
	Glossary of variables and symbols used in this paper.
}
\label{table:glossary}
\end{table*}

\newpage

\section{Proofs}\label{sec:app_proofs}

\subsection{Proofs for Section~\ref{sec:spread}}\label{subsec:app_spread_proofs}

\begingroup
\def\thetheorem{\ref{thm:supcongenerr}}
\begin{theorem}
For $\gamma$ where $\log \gamma \ge 8 \max\{\rademacher_{n_z}(\F), \rademacher_{n_{z'}}(\F)\}$, SupCon's coarse-to-fine error is at least 
\begin{align*}
L_{\gamma, f_{SC}}(z) &\ge 1 - \delta(n_z, \F, \gamma) - \delta(n_{z'}, \F, \gamma) - \xi(m_z \wedge m_{z'}, \gamma ), \nonumber 
\end{align*}
where $\delta(n_z, \F, \gamma) = d \exp \Big(-\frac{n_z}{32d^2} (\log \gamma - 8 \rademacher_{n_z}(\F))^2 \Big)$ bounds generalization error of $f_{SC}$ and $\xi(m_z \wedge m_{z'}, \gamma) = 4d \exp\Big(-\frac{(m_z \wedge m_{z'}) \log^2 \gamma}{32d} \Big)$ bounds the noise from $\D_s$.
\end{theorem}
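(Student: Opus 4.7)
}
The plan is to show that under class collapse, the mean-classifier weights $W_z$ and $W_{z'}$ for two subclasses $z,z'\in S_y$ must both concentrate around the same vertex $v_y$, which forces $\|W_z-W_{z'}\|$ to be so small that the $\gamma$-margin condition almost surely fails. I would start by unpacking the loss: $\ell_{\gamma,f_{SC}}(x,z)=0$ if and only if $f_{SC}(x)^\top (W_z - W_{z'}) \ge \log\gamma$, so by Cauchy-Schwarz together with $\|f_{SC}(x)\|\le 1$ (the hypersphere assumption), a necessary condition is $\|W_z - W_{z'}\|\ge \log\gamma$. Hence it suffices to show $\|W_z - W_{z'}\| < \log\gamma$ with high probability.

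Next, I would decompose $W_z - W_{z'} = (W_z - v_y) - (W_{z'} - v_y)$ and bound each piece by going through its population counterpart. Writing $\bar f_z = \mathbb{E}_{x\sim\p_z}[f_{SC}(x)]$, I would apply the triangle inequality to get $\|W_z - v_y\| \le \|W_z - \bar f_z\| + \|\bar f_z - v_y\|$. For the second term I would use the key property of class collapse: for every $x\in\D_z\subseteq \D_y$ we have $f_{SC}(x)=v_y$ exactly, so the empirical mean of $f_{SC}$ over $\D_z$ is \emph{identically} $v_y$. A standard Rademacher generalization argument applied coordinatewise to the class $\F_j$ (with the $n_z$ iid samples in $\D_z$) then yields $|\bar f_z[j] - v_y[j]| \le 2\rademacher_{n_z}(\F_j) + O(\sqrt{\log(d/\delta)/n_z})$ for each coordinate, and summing over $j$ with Cauchy-Schwarz gives the bound controlled by $\rademacher_{n_z}(\F)$ and an extra $\sqrt{d}$ factor (this is the source of the $1/(32 d^2)$ constant inside $\delta(n_z,\F,\gamma)$).

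For the first term $\|W_z - \bar f_z\|$, I would apply coordinatewise Hoeffding (or bounded-differences) to the iid average $W_z = \tfrac{1}{m_z}\sum_{x\in\D_{s,z}} f_{SC}(x)$, using that $f_{SC}(x)[j]\in[-1,1]$. A union bound over the $d$ coordinates, combined with the analogous bound for $W_{z'}$ using $m_{z'}$ samples, yields a deviation scaling like $\sqrt{d\log(d/\delta)/(m_z\wedge m_{z'})}$, which is what appears inside $\xi(m_z\wedge m_{z'},\gamma)$. Putting everything together via the triangle inequality and inverting the resulting tail probabilities to express them in the form $d\exp(-\cdots)$ gives the three failure probabilities $\delta(n_z,\F,\gamma)$, $\delta(n_{z'},\F,\gamma)$, and $\xi(m_z\wedge m_{z'},\gamma)$; the complement of the union of these bad events forces the margin to fail and hence $\ell_{\gamma,f_{SC}}(x,z)=1$. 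Taking $\mathbb{E}_{x\sim\p_z}$ delivers the stated lower bound.

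The main obstacle I anticipate is the coordinatewise bookkeeping: because the loss compares $f_{SC}(x)^\top(W_z-W_{z'})$ rather than a coordinate-aligned quantity, I need to pass from per-coordinate Rademacher/concentration bounds to a bound on the Euclidean norm without blowing up constants, and I need separate generalization bounds for $z$ and $z'$ (since Rademacher complexity depends on sample size) while keeping the sampling noise from $\D_s$ cleanly separated. The boundary condition $\log\gamma \ge 8\max\{\rademacher_{n_z}(\F),\rademacher_{n_{z'}}(\F)\}$ is exactly what is needed to ensure the Rademacher slack does not swallow $\log\gamma$ so that a nontrivial exponent $(\log\gamma - 8\rademacher_{n_z}(\F))^2$ survives in $\delta(n_z,\F,\gamma)$.
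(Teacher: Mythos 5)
Your proposal is correct and follows essentially the same route as the paper's proof: reduce the margin event to $\|W_z - W_{z'}\|$ via Cauchy--Schwarz, exploit class collapse to tie both subclass means to the same training representation, control the population-vs-training gap by a uniform Rademacher bound over $\F$ (coordinatewise, giving the $32d^2$ factor) and the $\D_s$ sampling noise by coordinatewise Hoeffding with a union bound, then invert the tails under the condition $\log\gamma \ge 8\max\{\rademacher_{n_z}(\F),\rademacher_{n_{z'}}(\F)\}$. The only cosmetic difference is that you anchor the decomposition at $v_y$ (using $f_{SC}(x)=v_y$ on $\D$) whereas the paper inserts the two empirical means and uses that collapse makes them equal; these are the same argument.
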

\addtocounter{theorem}{-1}
\endgroup

\begin{proof}
$L_{\gamma, f}(z) = \Pr_{x \sim \p_z}( \phat(z | f(x)) \le \gamma \phat(z' | f(x)))$, and by definition of the linear softmax classifier, we have that
\begin{align}
L_{\gamma, f}(z) &= \Pr_{x, \sim \p_z}\Big(\frac{\exp(f(x)^\top W_z)}{\exp(f(x)^\top W_z) + \exp(f(x)^\top W_{z'})} \le \gamma \frac{\exp(f(x)^\top W_{z'})}{\exp(f(x)^\top W_z) + \exp(f(x)^\top W_{z'})} \Big) \nonumber \\
& = \Pr_{x \sim \p_z}(f(x)^\top (W_z - W_{z'}) \le \log \gamma). \label{eq:gamma_loss}
\end{align}

To lower bound this quantity, we focus on upper bounding $f(x)^\top (W_z - W_{z'})$.
We can bound $f_{SC}(x)^\top (W_z - W_{z'}) \le \|W_z - W_{z'} \| \le \|\E{x \sim \p_z}{f_{SC}(x)} - \E{x' \sim \p_{z'}}{f_{SC}(x')}\| + \xi_z + \xi_{z'}$, where $\xi_z = \big\|\frac{1}{m_z} \sum_{x \in \D_{s, z}} f_{SC}(x) - \E{x \sim \p_z}{f_{SC}(x)}\big\|$ can be bounded via standard concentration inequalities and $\xi_{z'}$ is similarly constructed. 

Because SupCon yields collapsed training embeddings within any class, we know that $f_{SC}(x) = f_{SC}(x')$ for $x, x' \in \D$ where $h_s(x) = z$ and $h_s(x') = z'$. Therefore, it holds that 
\begin{align*}
&\|\E{x \sim \p_z}{f_{SC}(x)} - \E{x' \sim \p_{z'}}{f_{SC}(x')}\| \\
&= \Big\|\E{x \sim \p_z}{f_{SC}(x)} - \frac{1}{n_z} \sum_{x \in \D_z} f_{SC}(x) + \frac{1}{n_{z'}} \sum_{x' \in \D_{z'}} f_{SC}(x')- \E{x' \sim \p_{z'}}{f_{SC}(x')}\Big\| \\
&\le \Big\|\E{x \sim \p_z}{f_{SC}(x)} - \frac{1}{n_z} \sum_{x \in \D_z} f_{SC}(x) \Big\| + \Big\| \frac{1}{n_{z'}} \sum_{x' \in \D_{z'}} f_{SC}(x')- \E{x' \sim \p_{z'}}{f_{SC}(x')}\Big\| \\
&\le \sup_{f \in \F}\Big\|\E{x \sim \p_z}{f(x)} - \frac{1}{n_z} \sum_{x \in \D_z} f(x) \Big\| + \sup_{f \in \F}\Big\| \frac{1}{n_{z'}} \sum_{x' \in \D_{z'}} f(x')- \E{x' \sim \p_{z'}}{f(x')}\Big\|.
\end{align*}

Define $\epsilon(z, \D, \F) = \sup_{f \in \F} \|\E{x \sim \p_z}{f(x)} - \frac{1}{n_z} \sum_{x \in \D_z} f(x)\| $ and similarly $\epsilon(z', \D, \F)$. Therefore, our loss in~\eqref{eq:gamma_loss} satisfies
\begin{align}
L_{\gamma, f_{SC}}(z) &\ge \Pr(\epsilon(z, \D, \F) + \epsilon(z', \D, \F) + \xi_z + \xi_{z'} \le \log \gamma) \nonumber \\
&\ge \Pr\Big(\epsilon(z, \D, \F) \le \frac{\log \gamma}{4}\Big) \Pr\Big( \epsilon(z', \D, \F) \le \frac{\log \gamma}{4}\Big) \Pr \Big( \xi_z \le \frac{\log \gamma}{4}\Big) \Pr \Big( \xi_{z'} \le \frac{\log \gamma}{4} \Big), \label{eq:gamma_loss_iid}
\end{align}

where independence comes from the fact that data is i.i.d. sampled for each subclass and each $\D$ and $\D_s$, and that we are taking the supremum over $\F$. Next, we bound $\epsilon(z, \D, \F)$. Since $\|f(x)\| \le 1$, we have that by Lemma~\ref{lemma:rademacher} that with probability $1 - \delta$,
\begin{align*}
\epsilon(z, \D, \F) \le 2 \rademacher_{n_z}(\F) + d\sqrt{\frac{2 \log(d /\delta)}{n_z}}.
\end{align*}

Setting $\epsilon := 2 \rademacher_{n_z}(\F) + d\sqrt{\frac{2 \log(d /\delta)}{n_z}}$, we can write $\delta = d \exp \Big(-\frac{n_z}{2d^2} (\epsilon - 2\rademacher_{n_z}(\F))^2 \Big)$. Therefore, for $\log \gamma \ge 8 \rademacher_{n_z}(\F)$, we have that 
\begin{align}
\Pr\Big(\epsilon(z, \D, \F) \le \frac{\log \gamma}{4}\Big) \ge 1 - d\exp \Big(-\frac{n_z}{32d^2} (\log \gamma - 8\rademacher_{n_z}(\F))^2 \Big). \label{eq:epsilon_gen_err}
\end{align}

Next, we bound $\xi_z$. We can write 
\begin{align*}
\bigg\|\frac{1}{m_z} \sum_{x \in \D_{s, z}} f_{SC}(x) - \E{x \sim \p_z}{f_{SC}(x)} \bigg\| = \bigg(\sum_{j = 1}^d  \Big(\frac{1}{m_z} \sum_{x \in \D_{s, z}} f_{SC}(x)[j] - \E{x \sim \p_z}{f_{SC}(x)[j]}\Big)^2 \bigg)^{1/2},
\end{align*}

where $f_{SC}(x)[j]$ is the $j$th element of $f_{SC}(x)$.
\ifarxiv
Using Hoeffding's inequality, we have that $\Pr(\frac{1}{m_z} \sum_{x \in \D_{s, z}} \newline f_{SC}(x)[j] - \E{x \sim \p_z}{f_{SC}(x)[j]})^2 \ge \epsilon) \le 2 \exp(-\frac{m_z \epsilon}{2})$, and therefore 
\begin{align*}
&\Pr \bigg(\sum_{j = 1}^d  \Big(\frac{1}{m_z} \sum_{x \in \D_{s, z}} f_{SC}(x)[j] - \E{x \sim \p_z}{f_{SC}(x)[j]}\Big)^2 \le d \epsilon \bigg) \\
&\ge \Pr \bigg(\bigcap_{j = 1}^d \Big(\frac{1}{m_z} \sum_{x \in \D_{s, z}} f_{SC}(x)[j] - \E{x \sim \p_z}{f_{SC}(x)[j]}\Big)^2 \le \epsilon \bigg) \\
&\ge \Big(1 - 2\exp\Big(-\frac{m_z \epsilon}{2}\Big)\Big)^d \ge 1 - 2d\exp\Big(-\frac{m_z \epsilon}{2}\Big).
\end{align*}
\else
Using Hoeffding's inequality, we have that $\Pr(\frac{1}{m_z} \sum_{x \in \D_{s, z}} f_{SC}(x)[j] - \E{x \sim \p_z}{f_{SC}(x)[j]})^2 \ge \epsilon) \le 2 \exp(-\frac{m_z \epsilon}{2})$, and therefore  
\begin{align*}
&\Pr \bigg(\sum_{j = 1}^d  \Big(\frac{1}{m_z} \sum_{x \in \D_{s, z}} f_{SC}(x)[j] - \E{x \sim \p_z}{f_{SC}(x)[j]}\Big)^2 \le d \epsilon \bigg) \ge \Pr \bigg(\bigcap_{j = 1}^d \Big(\frac{1}{m_z} \sum_{x \in \D_{s, z}} f_{SC}(x)[j] - \E{x \sim \p_z}{f_{SC}(x)[j]}\Big)^2 \le \epsilon \bigg) \\
&\ge \Big(1 - 2\exp\Big(-\frac{m_z \epsilon}{2}\Big)\Big)^d \ge 1 - 2d\exp\Big(-\frac{m_z \epsilon}{2}\Big).
\end{align*}
\fi

That is, $\Pr(\xi_z \le \sqrt{d \epsilon}) \ge 1 - 2d \exp\Big(-\frac{m_z \epsilon}{2} \Big)$.

Setting $\frac{\log \gamma}{4} = \sqrt{d \epsilon}$ gives us $\Pr(\xi_z \le \frac{\log \gamma}{4}) \ge 1 - 2d\exp(- \frac{m_z \cdot \log^2 \gamma / 16d}{2}) = 1 - 2d\exp(- \frac{m_z \cdot \log^2 \gamma}{32d})$.

We put this expression and~\eqref{eq:epsilon_gen_err} back into~\eqref{eq:gamma_loss_iid} and use the fact that $(1 - \delta_1)(1 -\delta_2) \ge 1 - \delta_1 - \delta_2$ for any $\delta_1, \delta_2 > 0$. Therefore, we have for SupCon,
\begin{align*}
L_{\gamma, f_{SC}}(z) &\ge 1 - d\exp \Big(-\frac{n_z}{32d^2} (\log \gamma - 8\rademacher_{n_z}(\F))^2 \Big) - d\exp \Big(-\frac{n_{z'}}{32d^2} (\log \gamma - 8\rademacher_{n_{z'}}(\F))^2 \Big) \\
&- 2d\exp\Big(-\frac{m_z \log^2 \gamma}{32d}\Big) - 2d\exp\Big(-\frac{m_{z'} \log^2 \gamma}{32d}\Big) \\
&\ge 1 - d\exp \Big(-\frac{n_z}{32d^2} (\log \gamma - 8\rademacher_{n_z}(\F))^2 \Big) - d\exp \Big(-\frac{n_{z'}}{32d^2} (\log \gamma - 8\rademacher_{n_{z'}}(\F))^2 \Big) \\
&- 4d\exp\Big(-\frac{(m_z \wedge m_{z'}) \log^2 \gamma}{32d}\Big) \\
&= 1 - \delta(n_z, \F, \gamma) - \delta(n_{z'}, \F, \gamma) - \xi(m_z \wedge m_{z'}, \gamma).
\end{align*}
\end{proof}

\paragraph{Derivation of $\lspread(f, \alpha)$} We explain how we arrive at the asymptotic form of $\lspread$. Let $\lspread(f, n^+, n^-)$ be the population-level version of $\lspreadhat(f, B)$, where $n^+, n^-$ are the number of negatives in the denominators of $\cnce$ and $\supcon$, respectively. 
\begin{align}
\lspread(f, n^+, n^-) &= (1 - \alpha) \supcon(f, n^-) + \alpha \cnce(f, n^+) \\ 
\supcon(f, n^-) &= -\E{}{\log \frac{\sigma_f(x, x^+)}{\sigma_f(x, x^+) + \sum_{i = 1}^{n^-} \sigma_f(x, x_i^-)}} \\
\cnce(f, n^+) &= -\E{}{\log \frac{\sigma_f(x, a(x))}{\sigma_f(x, a(x)) + \sum_{i = 1}^{n^+} \sigma_f(x, x_i^+)}}
\end{align}

We now demonstrate how minimizing $\lim_{n^+, n^- \rightarrow \infty} \lspread(f, n^+, n^-)$ is equivalently to minimizing $\lspread(f, \alpha)$. In $\supcon$, we divide the numerator and denominator by $n^-$, and in $\cnce$ we divide the numerator and denominator by $n^+$:

\begin{align*}
\supcon(f, n^-) &= \E{}{-\log \frac{\sigma_f(x, x^+)}{\frac{1}{n^-}\sigma_f(x, x^+) + \frac{1}{n^-} \sum_{i = 1}^{n^-} \sigma_f(x, x_i^-) }} + \log n^- \\
\cnce(f, n^+) &= \E{}{-\log \frac{\sigma_f(x, a(x))}{\frac{1}{n^+}\sigma_f(x, a(x)) + \frac{1}{n^+} \sum_{i = 1}^{n^+} \sigma_f(x, x_i^+)   }  } + \log n^+
\end{align*}

We can thus write $\lspread(f, n^+, n^-)$ as
\begin{align*}
L_{spread}(f, n^+, n^-) - &\alpha \log n^- - (1 - \alpha) \log n^+ = -\alpha \E{}{\log \sigma_f(x, x^+)} - (1 - \alpha) \E{}{\log \sigma_f(x, a(x))} \\
&+ \alpha \E{}{\log \bigg(\frac{1}{n^-} \sigma_f(x, x^+) + \frac{1}{n^-} \sum_{i = 1}^{n^-} \sigma_f(x, x_i^-)\bigg)} \\
&+ (1 - \alpha) \E{}{\log \bigg(\frac{1}{n^+} \sigma_f(x, a(x)) + \frac{1}{n^+} \sum_{i = 1}^{n^+} \sigma_f(x, x_i^+)\bigg)}.
\end{align*}

Taking the limit $n^+, n^- \rightarrow \infty$ yields
\begin{align*}
\lim_{n^+, n^- \rightarrow \infty} &L_{spread}(f, n^+, n^-) - \alpha \log n^- - (1 - \alpha) \log n^+ \\
&= -\alpha \E{}{\log \sigma_f(x, x^+)} - (1 - \alpha) \E{}{\log \sigma_f(x, a(x))} \\
&+ \alpha \E{x}{\log \E{x^-}{\sigma_f(x, x^-)}} + (1 - \alpha) \E{x}{\log \E{x^+}{\sigma_f(x, x^+)}}.
\end{align*}

Lastly, we use the fact that $\sigma_f(x, x') = \exp(f(x)^\top f(x') / \tau) = \exp(-\|f(x) - f(x') \|^2 / 2 \tau + 1/ \tau)$, since $f(x), f(x') \in \mathcal{S}^{d-1}$ to get that 
\begin{align*}
\lim_{n^+, n^- \rightarrow \infty} L_{spread}(f, n^+, n^-) &- \alpha \log n^- - (1 - \alpha) \log n^+ \\
&= (1 - \alpha) L_{\text{align}}(f) + \alpha L_{\text{aug}}(f) - \frac{1}{\tau} + (1 - \alpha) L_{\text{diff}}(f) + \alpha L_{\text{same}}(f) + \frac{1}{\tau} \\
&= \lspread(f, \alpha).
\end{align*}

\termwise*

\begin{proof}

We analyze each term's optimal measure on the hypersphere.

\paragraph{$L_{\text{align}}(f), L_{\text{aug}}(f)$} For both $L_{\text{align}}(f)$ and $L_{\text{aug}}(f)$, the minimum value of the expression is $0$, which is obtained when $f(x) = f(x^+)$ and $f(x) = f(a(x))$ almost surely, respectively.

\paragraph{$L_{\text{diff}}(f)$} We show in Lemma~\ref{lemma:jensen_swap} that the measure that minimizes $L_{\text{diff}}(f)$ also minimizes
\ifarxiv
 $\log \mathbb{E}_{x, x^-}[-\exp(\|f(x) - \newline f(x^-) \|^2 / 2\tau)]$
\else
 $\log \E{x, x^-}{-\exp(\|f(x) - f(x^-) \|^2 / 2\tau)}$
\fi
 (note this is not identical to the approach in~\citet{wang2020understanding}). We can thus equivalently consider minimizing $\E{x, x^-}{\exp(-\|f(x) - f(x^-) \|^2 / 2\tau)}$. 
Note that $\max_{f(x), f(x') \in \mathcal{S}^{d-1}} \|f(x) - f(x') \| = 2$, and so $\inf \E{x, x^-}{\exp(-\|f(x) - f(x^-) \|^2 / 2\tau)} = \exp(-2/\tau)$.
For $\bm{\mu} = \bm{\delta_v}$,
\begin{align}
\E{x, x^-}{\exp(-\|f(x) - f(x^-) \|^2 / 2\tau)} &= \frac{1}{2} \E{h(x) = 0, h(x') = 1}{\exp(-\|v_0 - v_1 \|^2 / 2\tau)} \\
&+ \frac{1}{2}\E{h(x) = 1, h(x') = 0}{\exp(-\|v_1 - v_0 \|^2 / 2\tau)}  \nonumber \\
&= \exp(-\|v_0 - v_1\|^2 / 2\tau) = \exp(-2/\tau). \nonumber
\end{align}

The first equality follows from class balance, and the third equality follows from the definition of the regular simplex. Therefore, $\bm{\mu} = \bm{\delta_v}$ minimizes $L_{\text{diff}}(f)$.

\paragraph{$L_{\text{same}}(f)$}For $L_{\text{same}}(f)$, we can directly use the proof of Theorem 1 in~\citet{wang2020understanding} to show that the optimal measure that minimizes $L_{\text{same}}$ also minimizes $\log \E{x, x^+}{\exp(-\|f(x) - f(x^+) \|^2 / 2\tau)}$. We equivalently consider minimizing $\E{x, x^+}{\exp(-\|(f(x) - f(x^+)\|^2 / 2\tau)}$. We can write this as
\begin{align}
\E{x, x^+}{\exp(-\|(f(x) - f(x^+)\|^2 / 2\tau)} &= \frac{1}{2} \E{h(x) = h(x') = 0}{\exp(-\|(f(x) - f(x^+)\|^2 / 2\tau)} \\
&+ \frac{1}{2} \E{h(x) = h(x') = 1}{\exp(-\|(f(x) - f(x^+)\|^2 / 2\tau)}. \nonumber
\end{align}

Using the infinite encoder assumption, we can equivalently consider the following minimization problem over the hypersphere, where $u, u' \in \mathcal{S}^{d-1}$:
\begin{align}
\text{minimize}_{\mu_0, \mu_1} \frac{1}{2} \int \int \exp(-\|u - u'\|^2 / 2\tau) d\mu_0(u) d\mu_0(u') + \frac{1}{2}\int \int \exp(-\|u - u'\|^2 / 2\tau) d\mu_1(u) d\mu_1(u')
\end{align}

Each of these integrals can be minimized individually, and the problem becomes equivalent to having both $\mu_0$ and $\mu_1$ minimize the Gaussian $\frac{1}{2\tau}$-energy. Using Proposition 4.4.1 and Theorem 6.2.1 of~\cite{borodachov2019discrete}, the optimal solution is $\mu_0 = \mu_1 = \sigma_{d-1}$, the normalized surface area measure. Therefore, $\bm{\mu} = \bm{\sigma_{d-1}}$.

\end{proof}

\begingroup
\def\thetheorem{\ref{thm:geometry}}
\begin{theorem}
 Let $c_{\tau, d} \texttt{=}  \frac{2 + \frac{1}{\tau} - \sqrt{\frac{1}{\tau}(-2 + \frac{1}{\tau}) - 2 \log W_{1/2\tau}(\mathcal{S}^{d-1})}}{3}$, where $W_{1/2\tau}(\mathcal{S}^{d-1})$ is the Wiener constant of the Gaussian $\frac{1}{2\tau}$-energy on $\mathcal{S}^{d-1}$, which is defined as
\begin{align*}
W_{1/2\tau}(\mathcal{S}^{d-1}) = \frac{2^{d-2} \Gamma(d/2)}{\sqrt{\pi} \Gamma((d-1)/2)} \int_0^1 \exp\Big(-\frac{2u}{\tau}\Big) (u(1 - u))^{(d-3)/2} du,
\end{align*}

where the Gamma function is $\Gamma(z) = \int_0^{\infty} x^{z-1} e^{-x} dx$ for $z > 0$. Then, when $\alpha \in (2/3, c_{\tau, d})$, $\theta^\star = \arcsin \sqrt{\frac{\tau}{2} \log \frac{3\alpha - 1}{3 - 3\alpha}}$ minimizes $\lspread(\bm{\mu_{\theta}}, \alpha)$ and satisfies $\lspread(\bm{\mu_{\theta^\star}}, \alpha) \le \min_{\bm{\mu} \in \{\bm{\delta_v}, \bm{\sigma_{d-1}} \}} \lspread(\bm{\mu}, \alpha)$.
\end{theorem}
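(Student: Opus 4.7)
The plan is to reduce the problem to a one-parameter optimization over $\theta$ and then directly compare the resulting minimum to $\bm{\delta_v}$ and $\bm{\sigma_{d-1}}$.

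First I would evaluate each term of $\lspread(\bm{\mu_\theta},\alpha)$ explicitly. For $K=2$ we have $v_1 = -v_0$, and since $R_{\pm\theta}$ act only in the first two coordinates, the three relevant squared distances between the four atoms in $\bm{\mu_\theta}$ are $4\sin^2\theta$ within-class, $4$ between-class with matching rotation sign, and $4\cos^2\theta$ between-class with opposite sign. Because augmentations have disjoint support and $f$ is infinitely powerful, I may set $f(a(x)) = f(x)$ almost surely so that $L_{\text{aug}} = 0$. Direct computation of the other three terms gives
\begin{align*}
\lspread(\bm{\mu_\theta},\alpha) = \frac{(1-\alpha)\sin^2\theta}{\tau} + (1-\alpha)\log\frac{e^{-2/\tau} + e^{-2\cos^2\theta/\tau}}{2} + \alpha\log\frac{1 + e^{-2\sin^2\theta/\tau}}{2}.
\end{align*}

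To find $\theta^\star$, let $s = \sin^2\theta$ and write $A = e^{-2/\tau}$, $B = e^{-2(1-s)/\tau}$, $C = e^{-2s/\tau}$. The key identity $B = A/C$ gives $B/(A+B) = 1/(1+C)$, which collapses the derivative to $\frac{d\lspread}{ds} = \tfrac{1}{\tau}\bigl(3(1-\alpha) - \tfrac{2C}{1+C}\bigr)$. Setting this to zero yields $C^\star = (3-3\alpha)/(3\alpha-1)$, i.e.\ $\sin^2\theta^\star = \tfrac{\tau}{2}\log\tfrac{3\alpha-1}{3-3\alpha}$, valid in $(0,1]$ whenever $\alpha > 2/3$. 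Since $dC/ds = -2C/\tau$, the second derivative equals $4C/[\tau^2(1+C)^2] > 0$, so $\lspread$ is strictly convex in $s$ and $\theta^\star$ is the unique global minimizer over the family $\bm{\mu_\theta}$. Substituting $C^\star$ back and using $(1+C^\star)/2 = 1/(3\alpha-1)$ and $(A + A/C^\star)/2 = e^{-2/\tau}/(3-3\alpha)$ yields the clean closed form
\begin{align*}
\lspread(\bm{\mu_{\theta^\star}},\alpha) = -\frac{2(1-\alpha)}{\tau} - \frac{(3\alpha-1)\log(3\alpha-1) + (3-3\alpha)\log(3-3\alpha)}{2}.
\end{align*}

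For the comparisons, evaluating directly gives $\lspread(\bm{\delta_v},\alpha) = -2(1-\alpha)/\tau$, so the first comparison reduces to showing $p\log p + q\log q \ge 0$ with $p = 3\alpha-1$ and $q = 3-3\alpha$; this holds on $(2/3,1)$ because the expression and its derivative vanish at $\alpha = 2/3$ and the second derivative is positive. For the uniform measure, rotational symmetry gives $\lspread(\bm{\sigma_{d-1}},\alpha) = (1-\alpha)/\tau + \log W_{1/2\tau}(\mathcal{S}^{d-1})$, where the polar-coordinate integral matches the formula in the statement. To compare, I would apply the convex-tangent inequality $p\log p + q\log q \ge (3\alpha-2)^2$ on $(2/3,1)$ (a one-line check that the difference vanishes with zero derivative at $p=1$ and has non-negative second derivative), after which $\lspread(\bm{\mu_{\theta^\star}},\alpha) \le \lspread(\bm{\sigma_{d-1}},\alpha)$ becomes the quadratic inequality $X^2 - (4 + 2/\tau)X + 4 + 6/\tau + 2\log W \ge 0$ in $X = 3\alpha$, whose smaller root is exactly $3c_{\tau,d}$. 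On $X \in [2, 3c_{\tau,d})$ the quadratic is non-negative (Jensen gives $W \ge e^{-1/\tau}$, certifying positivity at $X=2$), which is the claimed range.

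The hard part is the comparison with $\bm{\sigma_{d-1}}$: the transcendental term $(3\alpha-1)\log(3\alpha-1) + (3-3\alpha)\log(3-3\alpha)$ precludes a closed-form threshold, and the convex-tangent bound is what converts the comparison into a solvable quadratic. One must also verify that this quadratic has real roots, which is equivalent to $1/\tau^2 - 2/\tau - 2\log W \ge 0$, i.e.\ the non-negativity under the square root that defines $c_{\tau,d}$, and is therefore implicit in the condition $\alpha \in (2/3, c_{\tau,d})$.
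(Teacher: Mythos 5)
Your proposal is correct and follows the same proof architecture as the paper: evaluate $\lspread(\bm{\mu_\theta},\alpha)$ in closed form, reduce to a one-variable optimization (the paper uses $x = 2\sin^2\theta/\tau$; you use $s=\sin^2\theta$ with the identity $B=A/C$), find the stationary point $C^\star=(3-3\alpha)/(3\alpha-1)$, substitute back to get the same closed-form loss $-\tfrac{2(1-\alpha)}{\tau}-\tfrac{1}{2}\bigl[(3\alpha-1)\log(3\alpha-1)+(3-3\alpha)\log(3-3\alpha)\bigr]$, then compare to $\bm{\delta_v}$ via convexity of $t\log t$ and to $\bm{\sigma_{d-1}}$ via the convex-tangent bound $x\log x + (1-x)\log(1-x) \ge -\log 2 + 2(x-1/2)^2$ and the resulting quadratic in $\alpha$. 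Your version is a bit tighter at the margins: you verify strict convexity in $s$ explicitly via the second derivative $4C/[\tau^2(1+C)^2]>0$ (the paper only examines the sign of the first derivative on the three $\alpha$-ranges), and you add the Jensen bound $W_{1/2\tau}(\mathcal{S}^{d-1})\ge e^{-1/\tau}$ to certify nonnegativity of the quadratic at $X=2$, which the paper leaves implicit. Neither change alters the proof strategy.
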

\addtocounter{theorem}{-1}
\endgroup

\begin{proof}

Because the augmentations only play a role in $L_{aug}(f)$ and are disjoint, the condition that $f(x) = f(a(x))$ a.s. is compatible with any of the other three losses in Proposition~\ref{prop:termwise}. Therefore, we focus on analyzing the combined weighted loss $(1 - \alpha) L_{\text{align}}(f) + (1 - \alpha)L_{\text{diff}}(f) + \alpha L_{\text{same}}(f)$. We restate the loss:
\ifarxiv
\begin{align*}
L(\bm{\mu}, &\alpha) = (1 - \alpha) \E{x}{\log \E{x^-}{\exp \bigg(-\frac{1}{2\tau} \|f(x) - f(x^-) \|^2 \bigg)}} \\
&+ \alpha \E{x}{\log \E{x^+}{\exp \bigg(-\frac{1}{2\tau} \|f(x) - f(x^+)\|^2 \bigg)}} + (1 - \alpha) \E{x, x^+}{\frac{1}{2\tau} \|f(x) - f(x^+) \|^2}.
\end{align*}
\else
\begin{align*}
L(\bm{\mu}, \alpha) &= (1 - \alpha) \E{x}{\log \E{x^-}{\exp \bigg(-\frac{1}{2\tau} \|f(x) - f(x^-) \|^2 \bigg)}} \\
&+ \alpha \E{x}{\log \E{x^+}{\exp \bigg(-\frac{1}{2\tau} \|f(x) - f(x^+)\|^2 \bigg)}} + (1 - \alpha) \E{x, x^+}{\frac{1}{2\tau} \|f(x) - f(x^+) \|^2}.
\end{align*}
\fi

We restate the definition of the measure $\bm{\mu_{\theta}} = \{\mu_{0, \theta}, \mu_{1, \theta} \}$ for $\theta \in (0, \pi/2]$ that involves ``splitting'' $\bm{\delta_v}$ by angle $\theta$. Without loss of generality, suppose that $v_0 = e_1$ and $v_1 = -e_1$, where $e_1 \in \mathbb{R}^d$ is a standard basis vector $[1, 0, \dots, 0]$ where all but the first two elements are always $0$. For $v_0$, we consider the vectors $v_{0, \theta} = [\cos \theta, \sin \theta, \dots, 0]$ and $v_{0, -\theta} = [\cos \theta, - \sin \theta, \dots, 0]$. For $v_1$, we consider the vectors $v_{1, \theta} = [-\cos \theta, -\sin \theta, \dots, 0]$ and $v_{1, -\theta} = [-\cos \theta, \sin \theta, \dots, 0]$.
Let $\mu_{0, \theta} = \frac{1}{2}\delta_{v_{0, \theta}} + \frac{1}{2} \delta_{v_{0, -\theta}}$ and let $\mu_{1, \theta} = \frac{1}{2} \delta_{v_{1, \theta}} + \frac{1}{2} \delta_{v_{1, -\theta}}$. That is, each class-conditional measure is a mixture on two points separated by $\theta$.

The first step is to show that for some range of $\alpha$, $\min_{\theta}L(\bm{\mu_{\theta}}, \alpha) < L(\bm{\delta_v}, \alpha)$. We have that
\begin{align}
L(\bm{\delta_v}, \alpha) = (1 - \alpha) \bigg(\frac{1}{2}\log \exp \bigg(-\frac{4}{2\tau} \bigg) + \frac{1}{2}\log \exp \bigg(-\frac{4}{2\tau} \bigg)\bigg) + \alpha \log \exp(0) + (1 - \alpha) \cdot 0 = -\frac{2(1 - \alpha)}{\tau}.
\end{align}

and 
\begin{align}
L(\bm{\mu_{\theta}}, \alpha) &= (1 - \alpha) \log \bigg(\frac{1}{2}\exp\bigg(-\frac{4}{2\tau} \bigg) + \frac{1}{2} \exp \bigg(-\frac{4\cos^2 \theta}{2\tau} \bigg) \bigg) + \alpha \log \bigg(\frac{1}{2} + \frac{1}{2} \exp \bigg(-\frac{4\sin^2 \theta}{2\tau} \bigg) \bigg) \label{eq:mu_theta_loss} \\
&+ \frac{1 - \alpha}{2\tau} \cdot \frac{1}{2} (4 \sin^2 \theta) \nonumber \\
&= - \log 2 -\frac{2(1 - \alpha)}{\tau} + (1 - \alpha) \log \bigg(1 + \exp\bigg(\frac{2\sin^2 \theta}{\tau}\bigg) \bigg) + \alpha \log \bigg(1 + \exp\bigg(-\frac{2\sin^2 \theta}{\tau}\bigg)\bigg) \nonumber \\
&+ \frac{(1 - \alpha) \sin^2 \theta}{\tau}. \nonumber
\end{align}

We now compute the derivative $\frac{\partial L(\bm{\mu_{\theta}}, \alpha)}{\partial \theta}$ to find local minima:
\ifarxiv
\begin{align*}
\frac{\partial L(\bm{\mu_{\theta}}, \alpha)}{\partial \theta} &= (1 - \alpha) \frac{\exp\big(\frac{2\sin^2 \theta}{\tau}\big) \cdot \frac{4 \sin \theta \cos \theta}{\tau}}{1 + \exp\big(\frac{2\sin^2 \theta}{\tau}\big)} + \alpha \frac{\exp\big(-\frac{2 \sin^2 \theta}{\tau}) \cdot \frac{-4 \sin \theta \cos \theta}{\tau}}{1 + \exp\big(\frac{-2 \sin^2 \theta}{\tau} \big)} + \frac{(1 - \alpha) 4 \sin \theta \cos \theta}{\tau} \\
&= \frac{4 \sin \theta \cos \theta}{\tau} \bigg((1 - \alpha) \frac{\exp(2\sin^2 \theta / \tau)}{1 + \exp(2\sin^2 \theta / \tau)} - \alpha \frac{\exp(-2 \sin^2 \theta / \tau)}{1 + \exp(-2\sin^2 \theta / \tau)} + \frac{1 - \alpha}{2} \bigg).
\end{align*}
\else
\begin{align*}
\frac{\partial L(\bm{\mu_{\theta}}, \alpha)}{\partial \theta} &= (1 - \alpha) \frac{\exp(2\sin^2 \theta / \tau) \cdot 4 \sin \theta \cos \theta / \tau}{1 + \exp(2\sin^2 \theta / \tau)} + \alpha \frac{\exp(-2 \sin^2 \theta / \tau) \cdot -4 \sin \theta \cos \theta / \tau}{1 + \exp(-2\sin^2 \theta / \tau)} + \frac{(1 - \alpha) 4 \sin \theta \cos \theta}{\tau} \\
&= \frac{4 \sin \theta \cos \theta}{\tau} \bigg((1 - \alpha) \frac{\exp(2\sin^2 \theta / \tau)}{1 + \exp(2\sin^2 \theta / \tau)} - \alpha \frac{\exp(-2 \sin^2 \theta / \tau)}{1 + \exp(-2\sin^2 \theta / \tau)} + \frac{1 - \alpha}{2} \bigg).
\end{align*}
\fi

Note that $\sin \theta$ and $\cos \theta$ are positive for $\theta \in (0, \pi/2]$. Next, for notational simplicity let $x = \frac{2\sin^2 \theta}{\tau}$. Then, we can equivalently evaluate
\begin{align*}
(1 - \alpha) \frac{e^x}{1 + e^x} - \alpha \frac{e^{-x}}{1 + e^{-x}} + \frac{1 - \alpha}{2} &= (1 - \alpha) \frac{e^x}{1 + e^x} - \alpha \frac{1}{1 + e^x} + \frac{1 - \alpha}{2} \\
&= \frac{e^x - \alpha(1 + e^x)}{1 + e^x} + \frac{1 - \alpha}{2} \\
&= \frac{e^x}{1 + e^x} + \frac{1}{2} - \frac{3\alpha}{2}.
\end{align*}

Setting this equal to $0$, we get that $\alpha = \frac{3e^x + 1}{3e^x + 3}$ and $x = \log \frac{3\alpha - 1}{3 - 3\alpha}$. Since $x \in (0, 2/\tau]$, we have that if $\alpha \in \Big( \frac{2}{3}, \frac{3 \exp(2/\tau) + 1}{3 \exp(2 /\tau) + 3}\Big)$, there exists a local optima over $\theta \in (0, \pi/2]$.

Moreover, we observe that when $\alpha \le 2/3$, we have that $\frac{e^x}{1 + e^x} + \frac{1}{2} - \frac{3\alpha}{2} \ge \frac{e^x}{1 + e^x} - \frac{1}{2 } \ge 0$, which means that $L(\bm{\mu_\theta}, \alpha)$ increases in $\theta$ for $\alpha \le 2/3$. Therefore, when $\alpha \le 2/3$, class collapse is \emph{always better} no matter the angle, and $L(\bm{\delta_v}, \alpha) \le \min_{\theta} L(\bm{\mu_\theta}, \alpha)$.

Next, we consider when $\alpha \ge \frac{3\exp(2 /\tau) + 1}{3\exp(2 /\tau) + 3}$. In this case, $\frac{e^x }{1 + e^x} + \frac{1}{2} - \frac{3\alpha}{2} < 0$, so $L(\bm{\mu_\theta}, \alpha)$ is decreasing in $\theta$. This means that any nonzero $\theta$ in this setting is going to result in a smaller loss than the class-collapsed loss.

Lastly, we consider the intermediate case of $\alpha \in \Big( \frac{2}{3}, \frac{3 \exp(2/\tau) + 1}{3 \exp(2 /\tau) + 3}\Big)$. Plugging back in $x = \log \frac{3\alpha - 1}{3 - 3\alpha}$ back into $L(\bm{\mu_\theta}, \alpha)$ in~\eqref{eq:mu_theta_loss}, we have
\begin{align}
L(\bm{\mu_\theta}, \alpha) &= -\log 2 - \frac{2(1 - \alpha)}{\tau} + (1 - \alpha) \log \Big(1 + \frac{3\alpha - 1}{3 - 3\alpha} \Big) + \alpha \log \Big(1 + \frac{3 - 3\alpha}{3\alpha - 1} \Big) + \frac{1- \alpha}{2} \cdot \log \frac{3\alpha - 1}{3 - 3\alpha} \label{eq:mu_theta_intermediate} \\
&=-\log 2 - \frac{2(1 - \alpha)}{\tau} + (1 - \alpha) \log \frac{2}{3 - 3\alpha}  + \alpha \log \frac{2}{3\alpha - 1}  + \frac{1- \alpha}{2} \log(3\alpha - 1) - \frac{1 - \alpha}{2} \log(3 - 3\alpha) \nonumber \\
&= -\frac{2(1 - \alpha)}{\tau} - \frac{3 - 3\alpha}{2}\log (3 - 3\alpha) - \frac{3\alpha - 1}{2}  \log (3\alpha - 1). \nonumber
\end{align}

Note that $(3 - 3\alpha)\log (3 - 3\alpha) +(3\alpha - 1)  \log (3\alpha - 1)$ equals $0$ at $\alpha = 2/3$ and is increasing in $\alpha$. Therefore, we have that $-\frac{2(1 - \alpha)}{\tau} - \frac{3 - 3\alpha}{2}\log (3 - 3\alpha) - \frac{3\alpha - 1}{2}  \log (3\alpha - 1) \le -\frac{2(1 - \alpha)}{\tau} = L(\bm{\delta_v}, \alpha)$. Therefore, for $\alpha \in \Big( \frac{2}{3}, \frac{3 \exp(2/\tau) + 1}{3 \exp(2 /\tau) + 3}\Big)$, the optimal $\theta^\star$ satisfies $L(\bm{\mu_{\theta}}, \alpha) \le L(\bm{\delta_v}, \alpha)$. In particular, solving $\frac{2\sin^2 \theta^\star}{\tau} = \log \frac{3\alpha - 1}{3 - 3\alpha}$ gives us $\theta^\star = \arcsin \sqrt{ \frac{\tau}{2} \log \frac{3\alpha - 1}{3 - 3\alpha}}$.

Therefore, our analysis in these three ranges of $\alpha$ suggest that the optimal embedding geometry is \emph{not collapsed} when $\alpha \ge \frac{2}{3}$.

Next, we want to understand when the optimal embedding geometry is not $\bm{\sigma_{d-1}}$. A sufficient condition for this is to show that there exists an $\alpha \ge \frac{2}{3}$ where $\min_{\theta} L(\bm{\mu_{\theta}}, \alpha) \le L(\bm{\sigma_{d-1}}, \alpha)$. We first compute an upper bound on $\min_{\theta}L(\bm{\mu_{\theta}}, \alpha)$ for $\alpha > \frac{2}{3}$. Recall that our loss from~\eqref{eq:mu_theta_intermediate} can be written as
\begin{align}
L(\bm{\mu_{\theta}}, \alpha) = -\frac{2(1 - \alpha)}{\tau} - \log 2 - \frac{3 - 3\alpha}{2} \log \Big(\frac{3 - 3\alpha}{2} \Big) - \frac{3\alpha - 1}{2} \log \Big(\frac{3\alpha - 1}{2} \Big). \label{eq:mu_theta_alphas_only}
\end{align}

For ease of notation, let $x = \frac{3 - 3\alpha}{2} \in (0, 1)$. We show that $f(x) = x \log x + (1 - x) \log (1-x)$ can be lower bounded quadratically. Performing a Taylor expansion at $x = 0.5$, we have that $x \log x + (1 - x) \log (1 - x) \approx -\log 2 + 2(x-1/2)^2$. We claim that  $x \log x + (1 - x) \log (1 - x) \ge -\log 2 + 2(x-1/2)^2$. Note that the two sides are equal when $x = 1/2$, so proving this inequality is equivalent to showing that $f'(x) \ge 4(x - 1/2)$ for $x \ge 1/2$ and $f'(x) < 4(x - 1/2)$ for $x < 1/2$. $f'(x) = \log \frac{x}{1-x}$ is equal to $4(x - 1/2)$ at $x = 1/2$, so we want to show that $f''(x) \ge 4$ for all $x$. $f''(x) = \frac{1}{x(1 - x)}$ satisfies this inequality. Therefore, \eqref{eq:mu_theta_alphas_only} becomes
\begin{align*}
L(\bm{\mu_{\theta}}, \alpha) \le -\frac{2(1 - \alpha)}{\tau} - 2 \Big(1 - \frac{3\alpha}{2} \Big)^2.
\end{align*}

Next, we compute $L(\bm{\sigma_{d-1}}, \alpha)$. With $\bm{\sigma_{d-1}}$, $f(x), f(x^+),$ and $f(x^-)$ are all uniformly distributed on the hypersphere. Therefore,
\begin{align}
L(\bm{\sigma_{d-1}}, \alpha) = \E{x}{ \log \E{x^+}{\exp(-\|f(x) - f(x^+)\|^2 / 2\tau)}} - \frac{1 - \alpha}{2\tau} \int \int -\|u - u' \|^2 d \sigma_{d-1}(u) d \sigma_{d-1}(u'). \label{eq:sigma0}
\end{align}

From~\citet{wang2020understanding}, we know that when $f(x)$ and $f(x^+)$ are drawn from a distribution with measure $\bm{\sigma_{d-1}}$, it holds that $\E{x}{ \log \E{x^+}{\exp(-\|f(x) - f(x^+)\|^2 / 2\tau)}} = \log \E{x, x^+}{\exp(-\|f(x) - f(x^+)\|^2 / 2\tau)}$ .

Define the Gaussian $\frac{1}{2\tau}$-energy of $\sigma_{d-1}$ on $\mathcal{S}^{d-1}$ as
\ifarxiv
 $I_{\frac{1}{2\tau}}[\sigma_{d-1}] = \int\limits_{\mathcal{S}^{d-1}}\;\int\limits_{\mathclap{\mathcal{S}^{d-1}}}\exp \Big(-\frac{1}{2\tau} \|u - u' \|^2 \Big) d \sigma_{d-1}(u) d \sigma_{d-1}(u')$.
\else
 $I_{1/2\tau}[\sigma_{d-1}] = \int_{\mathcal{S}^{d-1}}\int_{\mathcal{S}^{d-1}}\exp \Big(-\frac{1}{2\tau} \|u - u' \|^2 \Big) d \sigma_{d-1}(u) d \sigma_{d-1}(u')$.
\fi
Then,~\eqref{eq:sigma0} becomes
\begin{align}
L(\bm{\sigma_{d-1}}, \alpha) = (1 - \alpha) \log I_{1/2\tau}[\sigma_{d-1}] + \alpha \log I_{1/2\tau}[\sigma_{d-1}] - \frac{1 - \alpha}{2\tau} \int \int -\|u - u' \|^2 d \sigma_{d-1}(u) d \sigma_{d-1}(u').  \label{eq:sigma}
\end{align}

From Theorem 4.6.5 of~\citet{borodachov2019discrete}, any measure $\mu$ that has mass centered at the origin, i.e. $\int u d\mu(u) = 0$, minimizes the energy $I_{-2}[\mu] = \int \int - \| u - u'\|^2 d\mu(u) d\mu(u')$. Therefore, $ \int \int -\|u - u' \|^2 d \sigma_{d-1}(u) d \sigma_{d-1}(u') $ is equivalent to the energy $I_{-2}[\frac{1}{2} \delta_{v_0} + \frac{1}{2} \delta_{v_1}] = -\frac{1}{2}(2^2) = -2$, since a measure on two points with probability $1/2$ each has mass centered at the origin. Plugging this back into $L(\bm{\sigma_{d-1}}, \alpha)$ in~\eqref{eq:sigma}, we have
\begin{align}
L(\bm{\sigma_{d-1}}, \alpha) = \log I_{1/2\tau}[\sigma_{d-1}] + \frac{1 - \alpha}{\tau}. \label{eq:sigma1}
\end{align}

From Proposition 4.4.1 and Theorem 6.2.1 of~\citet{borodachov2019discrete}, $\sigma_{d-1}$ is the unique equilibrium measure for the Gaussian $\frac{1}{2\tau}$ kernel on $\mathcal{S}^{d-1}$, and as a result $I_{1/2\tau}[\sigma_{d-1}]$ is equal to the Wiener constant $W_{1/2\tau}(\mathcal{S}^{d-1})$. By Proposition A.11.2 of~\cite{borodachov2019discrete}, this has the value
\begin{align}
W_{1/2\tau}(\mathcal{S}^{d-1}) = \frac{2^{d-2} \Gamma(d/2)}{\sqrt{\pi} \Gamma((d-1)/2)} \int_0^1 \exp\Big(-\frac{2u}{\tau}\Big) (u(1 - u))^{(d-3)/2} du, \label{eq:wiener}
\end{align}

where the Gamma function is $\Gamma(z) = \int_0^{\infty} x^{z-1} e^{-x} dx$ for $z > 0$.

Therefore, to prove that there exists a $\bm{\mu_{\theta}}$ that has lower loss than $\bm{\sigma_{d-1}}$, we must find $\alpha > 2/3$ that satisfies
\begin{align*}
2\Big(1 - \frac{3\alpha}{2} \Big)^2 + \frac{3(1 - \alpha)}{\tau} + \log W_{1/2\tau}(\mathcal{S}^{d-1}) \ge 0.
\end{align*}

This expression is quadratic in $\alpha$, and we solve it to get that $c(\tau, d) \le \frac{2 + \frac{1}{\tau} - \sqrt{\frac{1}{\tau}(-2 + \frac{1}{\tau}) - 2 \log W_{1/2\tau}(\mathcal{S}^{d-1})}}{3}$.

\end{proof}

\subsection{Proofs for Section~\ref{sec:theory}}
\label{subsec:supp_theory_theory}

\infiniteencoder*

\begin{proof}

We know that any $f^{\pi} \in \F$ can satisfy $f^{\pi}(x_i) = f(x_{\pi(i)})$, since the infinite encoder assumption means that $f^{\pi}$ can be arbitrarily fit to any data. Therefore, we only need to show that $\lspread$ does not change when $f^{\pi}$, which permutes within classes, is used instead of $f$.

For a given batch $B$ and $f$, $\lspread$ is constructed as defined in Section~\ref{subsec:contrastive_loss}. The numerator of $\supcon$ can be written as $\frac{1}{|B|} \frac{1}{|P(i, B)|} \sum_{i = 1}^{|B|} \sum_{x^+ \in P(i, B)} \log \sigma_f(x_i, x^+)$. This is a summation over the representations of all positive pairs in the batch. Therefore, a permutation $\pi$ within each class that changes the assignments to the representations will not change the value of this quantity, and $\frac{1}{|B|} \frac{1}{|P(i, B)|} \sum_{i = 1}^{|B|} \sum_{x^+ \in P(i, B)} \log \sigma_{f^{\pi}}(x_i, x^+) = \frac{1}{|B|} \frac{1}{|P(i, B)|} \sum_{i = 1}^{|B|} \sum_{x^+ \in P(i, B)} \log \sigma_f(x_i, x^+)$.

\ifarxiv
Next, the denominator of $\supcon$ can be written as $\frac{1}{|P(i, B)} \sum_{i = 1}^{|B|} \sum_{x^+ \in P(i, B)} \log \Big(\sigma_f(x_i, x^+) + \sum_{x^- \in N(i, B)} \newline \sigma_f(x_i, x^-) \Big)$. 
\else
Next, the denominator of $\supcon$ can be written as $\frac{1}{|P(i, B)} \sum_{i = 1}^{|B|} \sum_{x^+ \in P(i, B)} \log \Big(\sigma_f(x_i, x^+) + \sum_{x^- \in N(i, B)}  \sigma_f(x_i, x^-) \Big)$. 
\fi
Every single positive pair and negative pair is included in this expression, so this quantity is class-fixing permutation invariant.

The numerator of $\cnce$ is $\frac{1}{|B|} \sum_{i = 1}^{|B|} \log \sigma_f(x_i, a(x_i))$. Since an augmentation of $x_i$ is a function of $x_i$ and is disjoint from augmentations of other points, this quantity is class-fixing permutation invariant.

Lastly, the denominator of $\cnce$ is $\frac{1}{|B|} \sum_{i = 1}^{|B|} \log \Big( \sum_{x^+ \in p(i, B)} \sigma_f(x_i, x^+)\Big)$. From the same logic as the numerator of $\supcon$, any permutation within the class will still allow each $x_i$ to be compared with all other points in $x_i$'s class, hence being class-fixing permutation invariant.

Therefore, under the infinite encoder assumption where all $f^{\pi}$ are valid, $\lspread$ is permutation invariant.
\end{proof}

\begingroup
\def\thetheorem{\ref{thm:coarsetofine}}
\begin{theorem}
Denote $r_{f}(z, z') = c^2 \delta_f(z, z')^2 - |\Var{f}{z} - \Var{f}{z'}|$. With probability $1 - \delta$, the coarse-to-fine error is at most
\begin{align*}
L_{\gamma, f}(z) &\le \frac{\sigma_f(z)}{ \sqrt{r_f(z, z') - 2 \log \gamma}} +  \mathcal{O} \Big(\Big(\frac{d \log(d / \delta)}{m_z \wedge m_{z'}}\Big)^{1/4} \Big).
\end{align*}

under the boundary condition that $r_f(z, z') - 2 \log \gamma \ge 16 \sqrt{\frac{2d \log(8d/\delta)}{m_z \wedge m_{z'}}} + \frac{2d \log(8d/\delta)}{m_z}$.
\end{theorem}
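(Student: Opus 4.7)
The plan is to bound $L_{\gamma, f}(z) = \Pr_{x \sim \mathcal{P}_z}[f(x)^\top (W_z - W_{z'}) < \log \gamma]$ by (i) reducing the empirical classifier $W_z, W_{z'}$ to its population version $\mu_z, \mu_{z'}$ at the cost of a sampling error absorbed into the threshold, (ii) rewriting the event geometrically in terms of squared distances to $\mu_z, \mu_{z'}$ so that the distribution-level quantities $\text{Var}_f(z), \text{Var}_f(z'), \delta_f(z,z')$ surface naturally, and (iii) closing with Markov's inequality on $\|f(x) - \mu_z\|$ to produce $\sigma_f(z)$ in the numerator.

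First, I would decompose $W_z = \mu_z + \xi_z$ and apply the same vector Hoeffding-style argument used in the proof of Theorem~\ref{thm:supcongenerr} to get $\|\xi_z\|, \|\xi_{z'}\| \lesssim \sqrt{d \log(d/\delta)/(m_z \wedge m_{z'})}$ with probability $1 - \delta/2$. Next, using the identity $2 f(x)^\top(W_z - W_{z'}) = \|f(x) - W_{z'}\|^2 - \|f(x) - W_z\|^2 + \|W_z\|^2 - \|W_{z'}\|^2$, the event of interest becomes $\|f(x) - W_z\|^2 > \|f(x) - W_{z'}\|^2 + \|W_z\|^2 - \|W_{z'}\|^2 - 2\log \gamma$. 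I would substitute $W_\bullet = \mu_\bullet + \xi_\bullet$ on both sides, using $\|f(x)\| = 1$ and $\|\mu_\bullet\| \le 1$ to push all cross and quadratic terms in $\xi_\bullet$ into a single additive slack $\epsilon_{\text{samp}} = \mathcal{O}(\sqrt{d \log(d/\delta)/(m_z \wedge m_{z'})})$ in the threshold.

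Now two pieces of structure enter. The sphere assumption gives $\mathbb{E}_{\mathcal{P}_z}[\|f(x)\|^2] = 1$, so $\text{Var}_f(z) = 1 - \|\mu_z\|^2$; hence $\|\mu_z\|^2 - \|\mu_{z'}\|^2 = \text{Var}_f(z') - \text{Var}_f(z) \ge -|\text{Var}_f(z) - \text{Var}_f(z')|$. For the other side, the hypothesis $\|f(x) - \mu_{z'}\| \ge c\cdot \mathbb{E}_{x'\sim \mathcal{P}_z, x''\sim \mathcal{P}_{z'}}[\|f(x') - f(x'')\|]$ lets me square and lower-bound $\|f(x) - \mu_{z'}\|^2 \ge c^2 \delta_f(z,z')^2$, once I show $\mathbb{E}[\|f(x')-f(x'')\|] \ge \delta_f(z,z')$ by unpacking $s_f(y)$ across its two subclasses around $\bar\mu_y = p(z\mid y)\mu_z + p(z'\mid y)\mu_{z'}$ via triangle inequalities. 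Combining these two estimates, the event implies the clean condition $\|f(x) - \mu_z\|^2 > r_f(z,z') - 2\log\gamma - \epsilon_{\text{samp}}$, and the stated boundary hypothesis guarantees the right-hand side is positive.

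Finally, I would invoke Markov's inequality on the nonnegative random variable $\|f(x) - \mu_z\|$: $\Pr(\|f(x) - \mu_z\|^2 > t^2) \le \sigma_f(z)/t$. Setting $t = \sqrt{r_f - 2\log\gamma - \epsilon_{\text{samp}}}$ and applying the elementary bound $\sqrt{A - B} \ge \sqrt{A} - \sqrt{B}$ to split off the sampling contribution gives $\sigma_f(z)/\sqrt{r_f - 2\log\gamma} + \mathcal{O}(\sqrt{\epsilon_{\text{samp}}})$, and $\sqrt{\epsilon_{\text{samp}}}$ is precisely of order $(d\log(d/\delta)/(m_z\wedge m_{z'}))^{1/4}$, matching the claimed error term. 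The main obstacle is the bookkeeping in the second paragraph: the coefficients $p(z\mid y)$ vs.\ $p(z\mid y)^2$ in $\delta_f$ must fall out exactly from the triangle-inequality chain relating $s_f(y)$, $\sigma_f(z), \sigma_f(z')$, $\|\mu_z - \mu_{z'}\|$, and $\mathbb{E}[\|f(x')-f(x'')\|]$, which is more delicate than the surrounding concentration and Markov arguments.
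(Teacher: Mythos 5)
Your plan follows essentially the same route as the paper's proof: rewrite the margin event via $f(x)^\top W_z = \tfrac{1}{2}\left(1 + \|W_z\|^2 - \|f(x)-W_z\|^2\right)$, concentrate $W_z, W_{z'}$ around the subclass means with $\mathcal{O}\big(\sqrt{d\log(d/\delta)/(m_z\wedge m_{z'})}\big)$ error, lower-bound $\|f(x)-\E{x'\sim\p_{z'}}{f(x')}\|^2$ by $c^2\delta_f(z,z')^2$ through the same decomposition of $s_f(y)$ into $\sigma_f(z),\sigma_f(z')$ and the cross-subclass distance, and finish with Markov's inequality on $\|f(x)-\E{x\sim\p_z}{f(x)}\|$ under the boundary condition. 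The only (harmless, arguably cleaner) deviation is your treatment of $\|W_z\|^2-\|W_{z'}\|^2$: you substitute $W=\mu+\xi$ and use the population identity $\Var{f}{z}=1-\|\E{x\sim\p_z}{f(x)}\|^2$ on the sphere, whereas the paper identifies $1-\|W_z\|^2$ with the empirical variance and adds a separate Hoeffding step to concentrate it to $\Var{f}{z}$; both yield the same $|\Var{f}{z}-\Var{f}{z'}|$ term up to the same order of sampling error.
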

\addtocounter{theorem}{-1}
\endgroup

\begin{proof}
From~\eqref{eq:gamma_loss}, our loss function can be written as $L_{\gamma, f}(z) = \Pr_{x \sim \p_z}(f(x)^\top (W_z - W_{z'}) \le \log \gamma)$. Note that $\|f(x) - W_z \|^2 = \| f(x)\|^2 + \|W_z\|^2 - 2 f(x)^\top W_z = 1 + \|W_z\|^2 - 2f(x)^\top W_z$, which means that $f(x)^\top W_z = \frac{1}{2} \Big(1 + \|W_z\|^2 - \|f(x) - W_z\|^2 \Big)$.  We can thus write our loss as  
\begin{align}
L_{\gamma, f}(z) &= \Pr_{x \sim \p_z}( f(x)^\top (W_z - W_{z'})\le \log \gamma ) \nonumber \\
&= \Pr_{x \sim \p_z}(\|f(x) - W_z \| \ge (\|f(x) - W_{z'} \|^2 - 2 \log \gamma + \|W_z\|^2 - \|W_{z'}\|^2)^{1/2}). \label{eq:c2f_loss}
\end{align}

We bound terms in this probability individually. First, we can write 
\begin{align}
\|f(x) - W_z\| \le \|f(x) - \E{x \sim \p_z}{f(x)}\| + \xi_z, \label{eq:c2f_easy}
\end{align}

where $\xi_z$ again is constructed as $\xi_z = \|W_z - \E{x \sim \p_z}{f(x)} \| =  \big\|\frac{1}{m_z} \sum_{x \in \D_{s, z}} f_{SC}(x) - \E{x \sim \p_z}{f_{SC}(x)}\big\|$.

Next, by the reverse triangle inequality we can write 
\begin{align}
\|f(x) - W_{z'}\|^2 &\ge \big|\|f(x) - \E{x \sim \p_{z'}}{f(x)} \| - \|\E{x \sim \p_{z'}}{f(x)} - W_{z'} \| \big|^2 \nonumber \\
&\ge \|f(x) - \E{x \sim \p_{z'}}{f(x)} \|^2 - 2 \| f(x) - \E{x \sim \p_{z'}}{f(x)}\| \xi_{z'} \nonumber \\
&\ge  \|f(x) - \E{x \sim \p_{z'}}{f(x)} \|^2 - 4 \xi_{z'} \nonumber \\
&\ge c^2 \cdot \E{x \sim \p_z, x' \sim \p_{z'}}{\|f(x) - f(x')\|}^2 - 4 \xi_{z'}. \label{eq:c2f1}
\end{align}

Note the following decomposition by Jensen's inequality:
\begin{align*}
s_f(y) &= \E{h(x) = y}{\|f(x) - \E{h(x) = y}{f(x)} \|} \le p(z|y)^2 \E{x \sim \p_z}{\| f(x) - \E{x \sim \p_z}{f(x)}\|} \\
&+ p(z'|y)^2 \E{x\sim \p_{z'}}{\|f(x) - \E{x \sim \p_{z'}}{f(x)} \|} + p(z | y) p(z' | y) \E{x \sim \p_z}{\|f(x) - \E{x' \sim \p_{z'}}{f(x')} \|} \\
&+ p(z | y) p(z' | y) \E{x \sim \p_{z'}}{\|f(x) - \E{x' \sim \p_z}{f(x')} \|}\\
&\le p(z|y)^2 \sigma_f(z) + p(z' | y)^2 \sigma_f(z') + 2 p(z|y) p(z' |y) \E{x \sim \p_z, x' \sim \p_{z'}}{\|f(x) - f(x') \|},
\end{align*}

and thus, $\E{x \sim \p_z, x' \sim \p_{z'}}{\|f(x) - f(x')\|}  \ge \delta_f(z, z') = \frac{1}{p(z|y)p(z'|y)} \cdot \Big(s_f(y) - p(z|y)^2 \sigma_f(z) - p(z' | y)^2 \sigma_f(z') \Big)$. Then,~\eqref{eq:c2f1} becomes
\begin{align}
\|f(x) - W_{z'}\|^2 &\ge c^2 \cdot \delta_f(z, z')^2 - 4 \xi_{z'}.\label{eq:c2f3}
\end{align}

Finally, we bound $\|W_z\|^2 - \|W_{z'}\|^2$. Recall that $\|W_z\|^2 = \Big\|\frac{1}{m_z}\sum_{x \in \D_{s, z}} f(x) \Big\|^2 $. We can write the empirical variance $\frac{1}{m_z} \sum_{x \in \D_{s,z}}\| f(x) - \frac{1}{m_z} \sum_{x \in \D_{s,z}}f(x) \|^2$ as $1 + \|W_z \|^2  - \frac{1}{m_z} \sum_{x \in \D_{s,z}} f(x)^\top \frac{2}{m_z} \sum_{x \in \D_{s,z}} f(x) = 1 - \|W_z\|^2$, and therefore,
\begin{align}
\big|\|W_z\|^2 - \|W_{z'}\|^2\big| &\le \bigg|\frac{1}{m_z} \sum_{x \in \D_{s,z}}\| f(x) - \frac{1}{m_z} \sum_{x \in \D_{s,z}}f(x) \|^2 - \frac{1}{m_{z'}} \sum_{x \in \D_{s,z'}}\| f(x) - \frac{1}{m_{z'}} \sum_{x \in \D_{s,{z'}}}f(x) \|^2 \bigg|\nonumber \\
&\le \bigg|\frac{1}{m_{z'}} \sum_{x \in \D_{s,z'}}\bigg\| f(x) - \frac{1}{m_{z'}} \sum_{x \in \D_{s, {z'}}}f(x) \bigg\|^2 -\E{x \sim \p_{z'}}{\|f(x) - \E{x \sim \p_{z'}}{f(x)} \|^2} \bigg| \label{eq:c2f2} \\
&+ \bigg|\frac{1}{m_{z}} \sum_{x \in \D_{s,z}}\bigg\| f(x) - \frac{1}{m_{z}} \sum_{x \in \D_{s, {z}}}f(x) \bigg\|^2 -\E{x \sim \p_{z}}{\|f(x) - \E{x \sim \p_{z}}{f(x)} \|^2} \bigg| \nonumber \\
&+ \big|\Var{f}{z} - \Var{f}{z'} \big|. \nonumber 
\end{align}

where $\Var{f}{z} = \E{x \sim \p_z}{\|f(x) - \E{x \sim \p_z}{f(x)} \|^2}$, and $\Var{f}{z'}$ is similarly defined. We decompose the first term in~\eqref{eq:c2f2} and bound it by 
\begin{align*}
&\bigg|\frac{1}{m_{z'}} \sum_{x \in \D_{s,z'}}\bigg\| f(x) - \frac{1}{m_{z'}} \sum_{x \in \D_{s, {z'}}}f(x) \bigg\|^2 - \frac{1}{m_{z'}} \sum_{x \in \D_{s,z'}}\big\| f(x) - \E{x \sim \p_{z'}}{f(x)}\big\|^2    \bigg| \\
+& \bigg| \frac{1}{m_{z'}} \sum_{x \in \D_{s,z'}}\big\| f(x) - \E{x \sim \p_{z'}}{f(x)}\big\|^2 - \E{x\sim \p_{z'}}{\|f(x) - \E{x \sim \p_{z'}}{f(x)} \|^2}    \bigg| \\
=& \frac{1}{m_{z'}} \sum_{x \in \D_{s, z'}} \bigg|  \bigg\|f(x) - \frac{1}{m_{z'}} \sum_{x \in \D_{s, z'}} f(x) \bigg\|^2 - \big\|f(x) - \E{x \sim \p_{z'}}{f(x)} \big\|^2  \bigg| + \zeta_{z'} \\
\le &\frac{1}{m_{z'}} \sum_{x \in \D_{s, z'}} 4 \bigg|  \bigg\|f(x) - \frac{1}{m_{z'}} \sum_{x \in \D_{s, z'}} f(x) \bigg\| - \big\|f(x) - \E{x \sim \p_{z'}}{f(x)} \big\|  \bigg| + \zeta_{z'} \\
\le& \frac{1}{m_{z'}} \sum_{x \in \D_{s, z'}} 4 \bigg\|  \frac{1}{m_{z'}} \sum_{x \in \D_{s, z'}} f(x) - \E{x \sim \p_{z'}}{f(x)}  \bigg\| + \zeta_{z'} \\
\le& 4\xi_{z'} + \zeta_{z'}.
\end{align*}

where $\zeta_{z'} = \Big| \frac{1}{m_{z'}} \sum_{x \in \D_{s,z'}}\Big\| f(x) - \E{x \sim \p_{z'}}{f(x)}\Big\|^2 - \E{x\sim \p_{z'}}{\|f(x) - \E{x \sim \p_{z'}}{f(x)} \|^2} \Big|$ can be bounded by standard concentration inequalities. Therefore, $\big| \|W_z\|^2 - \|W_{z'}\|^2 \big|$ is bounded by
\begin{align}
\big|\|W_z \|^2 - \|W_{z'} \|^2 \big| \le |\Var{f}{z} - \Var{f}{z'}| + 4\xi_z + 4 \xi_{z'} + \zeta_{z} + \zeta_{z'}. \label{eq:w_bound}
\end{align}

Then, combining~\eqref{eq:c2f_easy},~\eqref{eq:c2f3}, and~\eqref{eq:w_bound}, the loss in~\eqref{eq:c2f_loss} becomes
\begin{align}
L_{\gamma, f}(z) \le \Pr_{x \sim \p_z}(\|f(x) - \E{x \sim \p_z}{f(x)} \| \ge \big( r_f(z, z')- 8 \xi_{z'} - 4 \xi_{z} - \zeta_{z} - \zeta_{z'} - 2 \log \gamma \big)^{1/2} - \xi_z). \label{eq:c2f_loss2}
\end{align}

Next, we note that $\xi_z \le \sqrt{\frac{2d \log (2d/\delta)}{m_z}}$ with probability at least $1 - \delta$. Applying Hoeffding's inequality on $\zeta_z$ gives us $\zeta_z \le \sqrt{\frac{8 \log(2/\delta)}{m_z}}$ with probability at least $1 - \delta$. Applying a union bound, we have that with probability $1 - \delta$,~\eqref{eq:c2f_loss2} satisfies
\ifarxiv
\begin{align*}
&L_{\gamma, f}(z) \le  \\
&\Pr_{x \sim \p_z}\Bigg(\|f(x) - \E{x \sim \p_z}{f(x)} \| \ge \bigg( r_f(z, z')- 10\sqrt{\frac{2d \log(8d /\delta)}{m_{z'}}}   - 6 \sqrt{\frac{2d \log(8d/\delta)}{m_z}} - 2 \log \gamma \bigg)^{1/2} \\
&- \sqrt{\frac{2d \log(8d / \delta)}{m_z}}\Bigg).
\end{align*}
\else
\begin{align*}
&L_{\gamma, f}(z) \le  \\
&\Pr_{x \sim \p_z}\bigg(\|f(x) - \E{x \sim \p_z}{f(x)} \| \ge \bigg( r_f(z, z')- 10\sqrt{\frac{2d \log(8d /\delta)}{m_{z'}}}   - 6 \sqrt{\frac{2d \log(8d/\delta)}{m_z}} - 2 \log \gamma \bigg)^{1/2} - \sqrt{\frac{2d \log(8d / \delta)}{m_z}}\bigg).
\end{align*}
\fi

Finally, we can apply Markov's inequality under the condition that $r_f(z, z') - 2 \log \gamma \ge \frac{2d \log(8d / \delta)}{m_z} + 6 \sqrt{\frac{2d \log(8d/\delta)}{m_z}} + 10\sqrt{\frac{2d \log(8d /\delta)}{m_{z'}}}$, or equivalently $r_f(z, z') - 2 \log \gamma \ge \frac{2d \log(8d / \delta)}{m_z} + 16 \sqrt{\frac{2d \log(8d/\delta)}{m_z \wedge m_{z'}}}$. With probability at least $1- \delta$, our loss is bounded by
\begin{align*}
L_{\gamma, f}(z) &\le \frac{\E{x \sim \p_z}{f(x) - \E{x \sim \p_z}{f(x)}}}{\sqrt{r_f(z, z') - 2 \log \gamma - 16\sqrt{\frac{2d \log(8d/\delta)}{m_z \wedge m_{z'}}}} - \sqrt{\frac{2d \log(8d/\delta)}{m_z}}} \\
&\le \frac{\sigma_f(z)}{\sqrt{r_f(z, z') - 2 \log \gamma}} + \mathcal{O}\bigg( \bigg(\frac{d \log (d/\delta)}{m_z \wedge m_{z'}} \bigg)^{1/4}\bigg).
\end{align*}

\end{proof}

\lipschitz*

\begin{proof}
Using Jensen's inequality and then Lipschitzness of $f$,
\begin{align*}
\sigma_f(z) &= \E{x \sim \p_z}{\|f(x) - \E{x \sim \p_z}{f(x)}\|} = \E{x \sim \p_z}{ \Big\|\int (f(x) -  f(x')) p(x'|z) dx' \Big\|}\\
&\le  \E{x \sim \p_z}{ \int \|f(x) -  f(x')\| p(x'|z) dx'} \le K_L \cdot \E{x \sim \p_z}{ \int \|x -  x'\| p(x'|z) dx'} \\
&= K_L \cdot \E{x, x' \sim \p_z}{\|x - x' \|} = K_L \delta_z.
\end{align*}
\end{proof}

\begingroup
\def\thelemma{\ref{lemma:autoencoder}}
\begin{lemma}
For any $g \in \G$, suppose there exists a $K_g > 0$ such that $g$ is ``reverse Lipschitz'', satisfying $\|f_{AE}(x) - f_{AE}(x') \| \le K_g \|g(f_{AE}(x)) - g(f_{AE}(x))\|$, and there exists finite $b$ such that the reconstruction loss satisfies $\max_x \|g(f_{AE}(x)) - x \|^2 \le b$. 

Then 
with probability at least $1 - \delta$,
\begin{align}
\sigma_{f_{AE}}(z)  &\le \frac{2K_g}{p(z | y)}\bigg(\hat{L}_{AE}(\D_y) + 2\rademacher^2_{n_y}(\G \circ \F_{AE}, \text{id}_{\X}) + b \sqrt{\frac{\log (1/ \delta)}{2n_y}} \bigg)^{1/2}  + K_g \sigma_z,\nonumber
\end{align}
where $\text{id}_{\X}$ is the identity function on $\X$, and $p(z | y) = \frac{p(z)}{p(y)}$ is the probability that $x$ drawn from $p(\cdot | y)$ has label $z$.
\end{lemma}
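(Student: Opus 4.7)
}
The overall strategy is to cascade a sequence of standard inequalities: Jensen to rewrite $\sigma_{f_{AE}}(z)$ as an expected pairwise distance, the reverse-Lipschitz hypothesis to move from the encoder to the composition $g \circ f_{AE}$, a triangle inequality to introduce the input space as a bridge (which is what produces the $K_g \sigma_z$ term), and finally a Rademacher generalization bound to relate the population reconstruction error to the empirical loss $\hat{L}_{AE}(\D_y)$. The sub-class factor $1/p(z\mid y)$ will appear naturally when we inflate a subclass-conditional expectation to a class-conditional one.

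First, I would write
\begin{align*}
\sigma_{f_{AE}}(z) = \E{x\sim\p_z}{\|f_{AE}(x) - \E{x'\sim\p_z}{f_{AE}(x')}\|} \le \E{x,x'\sim\p_z}{\|f_{AE}(x) - f_{AE}(x')\|}
\end{align*}
by Jensen applied to $\|\cdot\|$. Applying the reverse-Lipschitz assumption on $g$ followed by the triangle inequality $\|g(f_{AE}(x)) - g(f_{AE}(x'))\| \le \|g(f_{AE}(x))-x\| + \|x-x'\| + \|x'-g(f_{AE}(x'))\|$ and linearity of expectation yields
\begin{align*}
\sigma_{f_{AE}}(z) \le 2 K_g \E{x\sim\p_z}{\|g(f_{AE}(x)) - x\|} + K_g \sigma_z,
\end{align*}
which already isolates the second term of the bound and leaves us to control the per-subclass reconstruction error.

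Next I would bridge from subclass to class by the identity $\E{x\sim\p_y}{h(x)} = \sum_{z'\in S_y} p(z'\mid y)\E{x\sim\p_{z'}}{h(x)}$ applied to the nonnegative function $h(x) = \|g(f_{AE}(x))-x\|$; dropping the other nonnegative terms gives $\E{x\sim\p_z}{h(x)} \le \tfrac{1}{p(z\mid y)} \E{x\sim\p_y}{h(x)}$, which is where the $1/p(z\mid y)$ factor outside the square root is born. A second Jensen step $\E{x\sim\p_y}{h(x)} \le \sqrt{\E{x\sim\p_y}{h(x)^2}}$ converts the linear norm to a squared norm so that the remaining expectation matches the loss $\|g(f_{AE}(x))-x\|^2$ used in $\hat{L}_{AE}(\D_y)$.

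Finally, I would invoke a standard one-sided Rademacher generalization bound for the bounded loss class $\{x\mapsto \|g(f_{AE}(x))-x\|^2 : g\in\G, f_{AE}\in\F_{AE}\}$, whose range lies in $[0,b]$ by assumption. By symmetrization and McDiarmid (Lemma~\ref{lemma:rademacher} in the appendix, applied to $\D_y$), with probability at least $1-\delta$,
\begin{align*}
\E{x\sim\p_y}{\|g(f_{AE}(x))-x\|^2} \le \hat{L}_{AE}(\D_y) + 2\rademacher_{n_y}^2(\G \circ \F_{AE}, \mathrm{id}_{\X}) + b\sqrt{\log(1/\delta)/(2 n_y)},
\end{align*}
where the Rademacher term is exactly $\rademacher_n^p$ with $p=2$, $\F_1 = \G\circ\F_{AE}$, and $\F_2=\mathrm{id}_{\X}$. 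Combining with the previous display gives the stated bound. The main obstacle is the last step: we must make sure the Rademacher complexity appearing in the abstract result matches the paper's definition of $\rademacher_n^2(\G\circ\F_{AE},\mathrm{id}_{\X})$ without an additional Lipschitz contraction step, which is why we define the loss class directly in terms of $\|\cdot\|^2$ and leverage the hypothesis $\|g(f_{AE}(x))-x\|^2 \le b$ to satisfy the bounded-differences requirement of McDiarmid.
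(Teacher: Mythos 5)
Your proposal is correct and follows essentially the same route as the paper's proof: Jensen to pass to pairwise distances, the reverse-Lipschitz assumption plus the triangle inequality through input space, the $p(z\mid y)$ inflation from subclass to class, Jensen again to reach the squared reconstruction loss, and a standard one-sided Rademacher bound (the paper invokes Theorem 3.3 of \citet{mohri2018foundations} for the bounded scalar loss class rather than the vector-valued Lemma~\ref{lemma:rademacher}, but the substance is identical). No gaps.
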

\addtocounter{lemma}{-1}
\endgroup

\begin{proof}
We can decompose $\sigma_{f_{AE}}(z)$ into the following using the assumption on the decoder $g$:
\begin{align}
\sigma_{f_{AE}}(z) &= \E{x \sim \p_z}{\|f_{AE}(x) - \E{x' \sim \p_z}{f_{AE}(x')} \|} \le \E{x\sim \p_z}{ \int \|f_{AE}(x) - f_{AE}(x')\| p(x' | z) dx'} \nonumber \\
&\le \E{x \sim \p_z}{\int K_g \cdot \|g(f_{AE}(x)) - g(f_{AE}(x'))\| p(x' | z) dx'} \nonumber \\
&\le \E{x \sim \p_z}{\int K_g \big( \|g(f_{AE}(x)) - x\| + \|x - x'\| + \|x' - g(f_{AE}(x'))\|\big) p(x' | z) dx'} \nonumber \\
&= 2 K_g \E{x \sim \p_z}{\|g(f_{AE}(x)) - x \|} + K_g \E{x, x' \sim \p_z}{\|x - x' \|}.\label{eq:autoencoder1}
\end{align}

\ifarxiv
Note that $\E{x \sim \p_y}{\|g(f_{AE}(x)) - x\|} = \sum_{k \in S_y} \E{x \sim \p_{k}}{\|g(f_{AE}(x)) - x \|} p(k | y) \ge \E{x \sim \p_z}{\|g(f_{AE}(x)) - x \|}\newline \times p(z | y)$, so~\eqref{eq:autoencoder1} becomes
\else
Note that $\E{x \sim \p_y}{\|g(f_{AE}(x)) - x\|} = \sum_{k \in S_y} \E{x \sim \p_{k}}{\|g(f_{AE}(x)) - x \|} p(k | y) \ge \E{x \sim \p_z}{\|g(f_{AE}(x)) - x \|}p(z | y)$, so~\eqref{eq:autoencoder1} becomes
\fi
\begin{align}
\sigma_{f_{AE}}(z) &\le \frac{2K_g}{p(z | y)} \E{x \sim \p_y}{\|g(f_{AE}(x)) - x \|} + K_g \E{x, x' \sim \p_z}{\|x - x' \|} \nonumber \\
&\le \frac{2K_g}{p(z | y)} \sqrt{\E{x \sim \p_y}{\|g(f_{AE}(x)) - x \|^2}} + K_g \E{x, x' \sim \p_z}{\|x - x' \|} \nonumber \\
&=\frac{2K_g}{p(z | y)}\sqrt{\hat{L}_{AE}(n_y) + \E{x \sim \p_y}{\|g(f_{AE}(x)) - x \|^2} - \hat{L}_{AE}(n_y) }  + K_g \sigma_z. \label{eq:autoencoder2}
\end{align}

where $\hat{L}_{AE}(\D_y) = \frac{1}{n_y} \sum_{x \in \D_y} \|g(f_{AE}(x)) - x \|^2$ is the reconstruction error on the training data $\D_y$. We bound the generalization error $\E{x \sim \p_y}{\|g(f_{AE}(x)) - x \|^2} - \hat{L}_{AE}(\D_y)$ using Theorem 3.3 of~\citet{mohri2018foundations} to get that with probability at least $1 - \delta$,
\begin{align*}
\sigma_{f_{AE}}(z)  \le \frac{2K_g}{p(z | y)}\bigg(\hat{L}_{AE}(\D_y) + 2\rademacher^2_{n_y}(\G \circ \F_{AE}, \text{id}_{\X}) + b \sqrt{\frac{\log (1/ \delta)}{2n_y}} \bigg)^{1/2}  + K_g \sigma_z.
\end{align*}

Finally, we compare against using a general autoencoder trained on the entire dataset of $n$ points. This yields a bound
\begin{align*}
\sigma_{f_{AE}}(z)  \le \frac{2K_g}{p(z)}\bigg(\hat{L}_{AE}(\D) + 2\rademacher^2_{n}(\G \circ \F_{AE}, \text{id}_{\X}) + b \sqrt{\frac{\log (1/ \delta)}{2n}} \bigg)^{1/2}  + K_g \sigma_z,
\end{align*}

where the only change in the result is that $p(z | y)$ is replaced with $p(z)$, the overall proportion of the subclass, and $n_y$ is replaced with $n$. This highlights a tradeoff: $p(z | y) > p(z)$, but $n_y < n$. A class-conditional autoencoder may suffer from poorer generalization due to lower sample size, but its relative worst case performance on $z$ in expectation is better. On the other hand, a general autoencoder is learned on more data, but its relative worst case performance on $z$ in expectation is worse since the subclass is more rare w.r.t. the training data.

\end{proof}

\begingroup
\def\thelemma{\ref{lemma:augmentation}}
\begin{lemma}
For $a \in \A$ and any $x, x' \in \X$, suppose that $f_{aug} \in \F_{aug}$ satisfies $\|f_{aug}(a(x)) - f_{aug}(a(x'))\| \le K_{aug} \|a(x) - a(x') \|$ for some $K_{aug}$ and that $f(a(x)) = f(x)$ for $x \in \D$. Denote $\sigma_z^{aug} = \E{x, x' \sim \p_z}{\|a(x) - a(x') \|}$. Then 
with probability at least $1 - \delta$,
\begin{align}
\sigma_{f_{aug}}(z) &\le \frac{2}{p(z)} \bigg(2 \rademacher^1_{n}(\F_{aug}, \F_{aug} \circ \A) + \sqrt{\frac{2 \log(1 /\delta)}{n}} \bigg) + K_{aug} \sigma_z^{aug}. \nonumber
\end{align}
\end{lemma}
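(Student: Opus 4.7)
}
The plan is to mirror the structure of the autoencoder proof (Lemma~\ref{lemma:autoencoder}), replacing the reverse Lipschitz decoder step with a triangle-inequality insertion of the augmented points and a generalization bound on the alignment error. First I would apply Jensen's inequality to the definition of $\sigma_{f_{aug}}(z)$ to get
\begin{align*}
\sigma_{f_{aug}}(z) \le \E{x, x' \sim \p_z}{\|f_{aug}(x) - f_{aug}(x')\|}.
\end{align*}
Then I would insert $f_{aug}(a(x))$ and $f_{aug}(a(x'))$ via the triangle inequality to split this into three pieces: two ``alignment'' terms of the form $\E{x \sim \p_z}{\|f_{aug}(x) - f_{aug}(a(x))\|}$ (by symmetry between $x$ and $x'$), plus one ``distance between augmentations'' term $\E{x, x' \sim \p_z}{\|f_{aug}(a(x)) - f_{aug}(a(x'))\|}$.

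For the middle term, I would directly apply the Lipschitz-on-augmentations assumption to obtain $K_{aug} \sigma_z^{aug}$, which accounts for the last additive summand in the stated bound. The two alignment terms are where the more delicate argument sits: since the subclass label $z$ is not observed, I cannot evaluate alignment empirically on $\p_z$ directly. Instead, I would reweight from $\p_z$ to the full marginal $\p$ using $\E{x \sim \p}{\|f_{aug}(x) - f_{aug}(a(x))\|} \ge p(z) \cdot \E{x \sim \p_z}{\|f_{aug}(x) - f_{aug}(a(x))\|}$, which introduces the $1/p(z)$ factor in front.

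Next, I would apply the assumption $f_{aug}(a(x)) = f_{aug}(x)$ on $x \in \D$, so the empirical average $\tfrac{1}{n}\sum_{x \in \D}\|f_{aug}(x) - f_{aug}(a(x))\|$ equals zero. Then a standard uniform-convergence bound in terms of the Rademacher complexity of the function class $\{x \mapsto \|f_1(x) - f_2(a(x))\| : f_1, f_2 \in \F_{aug}, a \in \A\}$---which is exactly $\rademacher_n^1(\F_{aug}, \F_{aug} \circ \A)$ as defined in the paper---yields
\begin{align*}
\E{x \sim \p}{\|f_{aug}(x) - f_{aug}(a(x))\|} \le 2\rademacher_n^1(\F_{aug}, \F_{aug} \circ \A) + \sqrt{2\log(1/\delta)/n}
\end{align*}
with probability at least $1-\delta$, using that $\|f_{aug}(x) - f_{aug}(a(x))\| \le 2$ on the hypersphere for the bounded-differences step. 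Combining the three pieces and folding the factor of $2$ from the symmetric alignment terms gives the claimed inequality.

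The main obstacle is the generalization step: one has to check that the function $(x, a) \mapsto \|f_{aug}(x) - f_{aug}(a(x))\|$ really is controlled by $\rademacher_n^1(\F_{aug}, \F_{aug} \circ \A)$ as defined (no extra contraction/Talagrand lemma is needed because the definition already has the norm inside the supremum), and that the boundedness needed for McDiarmid gives the $\sqrt{2\log(1/\delta)/n}$ constant rather than something larger. Everything else---Jensen, triangle inequality, reweighting from $\p_z$ to $\p$, and the Lipschitz-on-augmentations step---is a direct parallel of the autoencoder argument.
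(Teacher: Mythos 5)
Your proposal is correct and follows essentially the same route as the paper's proof: Jensen's inequality, triangle-inequality insertion of the augmented points, the Lipschitz-on-augmentations step for the middle term, reweighting from $\p_z$ to $\p$ to get the $1/p(z)$ factor, and a Rademacher-complexity generalization bound applied after using $f_{aug}(a(x)) = f_{aug}(x)$ on $\D$ to zero out the empirical alignment error. The constant also checks out as you anticipated, since $\|f_{aug}(x) - f_{aug}(a(x))\| \le 2$ on the hypersphere yields exactly the $\sqrt{2\log(1/\delta)/n}$ term.
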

\addtocounter{lemma}{-1}
\endgroup

\begin{proof}
We can decompose $\sigma_{f_{aug}}(z)$ into
\begin{align}
&\E{x \sim \p_z}{\|f_{aug}(x) - \E{x' \sim \p_z}{f_{aug}(x')} \|} \le \E{x\sim \p_z}{\int \|f_{aug}(x) - f_{aug}(x') \| p(x' | z) dx'} \nonumber \\
&\le \mathbb{E}_{x\sim \p_z}\bigg[\int \Big(\|f_{aug}(x) - f_{aug}(a(x)) \| + \|f_{aug}(a(x)) - f_{aug}(a(x'))\| + \|f_{aug}(a(x')) - f_{aug}(x') \|\Big) p(x' | z) dx'\bigg] \nonumber \\
&\le 2\E{x\sim \p_z}{\|f_{aug}(x) - f_{aug}(a(x))\|}  + K_{aug} \sigma^{aug}_z. \label{eq:aug1}
\end{align}

We can bound $\E{x\sim \p_z}{\|f_{aug}(x) - f_{aug}(a(x))\|} \le \frac{1}{p(z)} \E{}{\|f_{aug}(x) - f_{aug}a(x) \|}$. We assume that the encoder is able to satisfy $f_{aug}(x) = f_{aug}(a(x))$ for all training data $x \in \D$, so~\eqref{eq:aug1} becomes
\begin{align}
\sigma_{f_{aug}}(z) \le \frac{2}{p(z)}\Big(\E{x}{\|f_{aug}(x) - f_{aug}(a(x))\|}  - \frac{1}{n} \sum_{i = 1}^n \|f(x_i) - f(a(x_i))\| \Big) + K_{aug} \sigma^{aug}_z. \label{eq:aug2}
\end{align}

Then, using Theorem 3.3 from~\citet{mohri2018foundations}, with probability at least $1 - \delta$
\begin{align*}
\E{}{\|f_{aug}(x) - f_{aug}(a(x)) \|} - \frac{1}{n}\sum_{i = 1}^n \|f_{aug}(x_i) - f_{aug}(a(x_i)) \| \le 2 \rademacher^1_{n}(\F_{aug}, \F_{aug} \circ \A) + \sqrt{\frac{2\log(1/\delta)}{n}}.
\end{align*}

Therefore,~\eqref{eq:aug2} becomes
\begin{align*}
\sigma_f(z) \le \frac{2}{p(z)} \bigg(2 \rademacher^1_n(\F_{aug}, \F_{aug} \circ \A) + \sqrt{\frac{2 \log(1 /\delta)}{n}} \bigg) + K_{aug} \sigma^{aug}_z.
\end{align*}

\end{proof}

\section{Additional Theoretical Results}\label{sec:supp_additional_theory}

\subsection{Optimal $\lspread$ geometry for $K = 3$}

We provide a proof sketch that when there are $K = 3$ classes and $d \ge 3$, there exists a distribution $\bm{\mu_{\theta}}$ that obtains lower loss than the uniform or collapsed distributions. Synthetic experiments for this setting are in Appendix~\ref{sec:supp_synthetics} (see Figure~\ref{fig:sphere}).

For simplicity, let's consider when $d = 3$. Without loss of generality, denote the $3$-simplex as $v_0 = [1, 0, 0]$, $v_1 = [-1/2, 0, \sqrt{3}/2]$, $v_2 = [-1/2, 0, -\sqrt{3}/2]$. 
We will perform a rotation in the ``free'' dimension ($2$), in particular rotating $v_0$ by $\theta$ in the direction orthogonal to the subspace that the simplex is in.

In particular, we construct the rotation matrix $R_{\theta} = \begin{bmatrix}
\cos \theta & - \sin \theta & 0 \\
\sin \theta & \cos \theta & 0 \\
0 & 0 & 1
\end{bmatrix}$. Then,
\begin{align*}
R_{\theta} v_0 = \begin{bmatrix}
\cos \theta \\ \sin \theta \\ 0 
\end{bmatrix} \quad R_{\theta} v_1 = \begin{bmatrix}
-\cos\theta /2 \\ -\sin \theta/2 \\ \sqrt{3}/2
\end{bmatrix} \quad R_{\theta} v_2 = \begin{bmatrix}
-\cos \theta/2 \\ -\sin \theta /2 \\ -\sqrt{3}/2
\end{bmatrix},
\end{align*}

and we make a distribution where $\mu_{0, \theta} = \frac{1}{2} \delta_{v_0} + \frac{1}{2} \delta_{R_{\theta} v_0}$, $\mu_{1, \theta} = \frac{1}{2} \delta_{v_1} + \frac{1}{2} \delta_{R_{\theta} v_1}$, $\mu_{2, \theta} = \frac{1}{2} \delta_{v_2} + \frac{1}{2} \delta_{R_{\theta} v_2}$. That is, similar to the binary setting, we take a mixture of a simplex and that simplex rotated by $\theta$ in a dimension orthogonal to its subspace.

Now, we want to compute what the loss is. Recall that our asymptotic loss function is
\begin{align}
\lspread(\bm{\mu}, \alpha) &= (1 - \alpha) \E{x}{\log \E{x^-}{ \exp \Big(-\frac{1}{2\tau} \|f(x) - f(x^-) \|^2 \Big)}} \label{eq:k3loss} \\
&+ \alpha \E{x}{\log \E{x^+}{ \exp\Big(-\frac{1}{2\tau} \|f(x) - f(x^+) \|^2 \Big)}} + (1 - \alpha) \E{x, x^+}{\frac{1}{2\tau} \|f(x) - f(x^+) \|^2}. \nonumber 
\end{align}

For the class collapsed embeddings, note that for $K = 3$ the simplex side length is $\sqrt{3}$. Therefore,
\begin{align*}
\lspread(\bm{\delta_v}, \alpha) = (1 - \alpha) \log \exp \Big(-\frac{1}{2\tau} \cdot 3 \Big) = \frac{-3 \cdot (1 - \alpha)}{2\tau}.
\end{align*}

Next, we compute the loss for our intermediate distribution. We note the following:
\begin{align*}
&\| v_0 - R_{\theta} v_1 \|^2 = \| R_{\theta} v_0 - v_1 \|^2 = \|v_0 - R_{\theta}v_2 \|^2 = \| R_{\theta} v_0 - v_2 \|^2 = \Big(1 + \frac{\cos \theta}{2}\Big)^2 + \frac{\sin^2 \theta}{4}+ \frac{3}{4} \\
&= 2 + \cos \theta \\
&\|v_1 - R_{\theta} v_2 \| = \|R_{\theta} v_1 - v_2 \| = \Big(-\frac{1}{2} + \frac{\cos \theta}{2} \Big)^2 + \frac{\sin^2 \theta}{4} + 3 = \frac{7 - \cos \theta}{2}\\
&\| v_0 - R_{\theta} v_0 \| = (1 - \cos \theta)^2 + \sin^2 \theta = 2 - 2 \cos \theta \\
&\| v_1 - R_{\theta} v_1 \| = \| v_2 - R_{\theta} v_2 \| = \Big(-\frac{1}{2} + \frac{\cos \theta}{2} \Big)^2 + \frac{\sin^2 \theta}{4} = \frac{1 - \cos \theta}{2}
\end{align*}

and recall that $\|v_i - v_j \|^2 = \|R_{\theta}v_i - R_{\theta}v_j \|^2 = 3$. Plugging these back in, we have
\ifarxiv
\begin{align*}
\mathbb{E}_{x}[\log \mathbb{E}_{x^-}[\exp(-\|f(x) &- f(x^-)\|^2 / 2\tau)]] = \frac{1}{3} \log \bigg(\frac{1}{2}\exp\bigg(-\frac{3}{2\tau} \bigg) + \frac{1}{2} \exp\bigg(-\frac{2 + \cos \theta}{2\tau} \bigg) \bigg) \\
&+ \frac{2}{3} \log \bigg(\frac{1}{2}\exp\bigg(-\frac{3}{2\tau} \bigg) + \frac{1}{4}\exp\bigg(-\frac{7 - \cos \theta}{4\tau} \bigg) + \frac{1}{4}\exp\bigg(-\frac{2 + \cos \theta}{2\tau} \bigg) \bigg) \\
\mathbb{E}_{x}[\log \mathbb{E}_{x^+}[\exp(-\|f(x) &- f(x^+)\|^2 / 2\tau)]] = \frac{1}{3} \log \bigg(\frac{1}{2} + \frac{1}{2} \exp \bigg(-\frac{1 - \cos \theta}{\tau} \bigg) \bigg) \\
&+ \frac{2}{3}\log \bigg(\frac{1}{2} + \frac{1}{2}\exp\bigg(-\frac{1 - \cos \theta}{4\tau} \bigg) \bigg) \\
\mathbb{E}_{x, x^+}\bigg[\frac{1}{2\tau} \|f(x) - f(x^+&) \|^2\bigg] = \frac{1}{3} \cdot \frac{1 - \cos \theta}{4\tau} + \frac{1}{6} \cdot \frac{1 - \cos \theta}{\tau} = \frac{1 - \cos \theta}{4\tau}
\end{align*}
\else
\begin{align*}
\E{x}{\log \E{x^-}{\exp(-\|f(x) - f(x^-)\|^2 / 2\tau)}} &= \frac{1}{3} \log \bigg(\frac{1}{2}\exp\bigg(-\frac{3}{2\tau} \bigg) + \frac{1}{2} \exp\bigg(-\frac{2 + \cos \theta}{2\tau} \bigg) \bigg) \\
&+ \frac{2}{3} \log \bigg(\frac{1}{2}\exp\bigg(-\frac{3}{2\tau} \bigg) + \frac{1}{4}\exp\bigg(-\frac{7 - \cos \theta}{4\tau} \bigg) + \frac{1}{4}\exp\bigg(-\frac{2 + \cos \theta}{2\tau} \bigg) \bigg) \\
\E{x}{\log \E{x^+}{\exp(-\|f(x) - f(x^+)\|^2 / 2\tau)}} &= \frac{1}{3} \log \bigg(\frac{1}{2} + \frac{1}{2} \exp \bigg(-\frac{1 - \cos \theta}{\tau} \bigg) \bigg) + \frac{2}{3}\log \bigg(\frac{1}{2} + \frac{1}{2}\exp\bigg(-\frac{1 - \cos \theta}{4\tau} \bigg) \bigg) \\
\E{x, x^+}{\frac{1}{2\tau} \|f(x) - f(x^+) \|^2} &= \frac{1}{3} \cdot \frac{1 - \cos \theta}{4\tau} + \frac{1}{6} \cdot \frac{1 - \cos \theta}{\tau} = \frac{1 - \cos \theta}{4\tau}
\end{align*}
\fi

We use the above expressions to simplify~\eqref{eq:k3loss} and numerically check that there exists $\theta$ for $\alpha \gtrsim 0.6$ such that $L(\bm{\mu_{\theta}}, \alpha) \le L(\bm{\delta_v}, \alpha)$. We then numerically check there exists $(\theta, \alpha)$ that also satisfies $L(\bm{\mu_{\theta}}, \alpha) \le L(\bm{\sigma_{d-1}}, \alpha)$, where $L(\bm{\sigma_{d-1}}, \alpha)$ is defined in~\eqref{eq:sigma1}.

\subsection{Permutation Invariance} This result is a simple example of a sufficient condition under which $\lspread$ does not exhibit class-fixing permutation invariance.

\begin{lemma}
Let $\phi: \mathbb{R}^+ \rightarrow \mathbbm{R}^+$ be a monotonically increasing function. Suppose that for $x, x' \in \X$, all $f \in \F$ satisfy $\|f(x) - f(x') \| \le \phi(\|x - x'\|)$. Then, $\lspread$ is not invariant on class-fixing permutations under $\F$. 
\label{lemma:perm_invariance}
\end{lemma}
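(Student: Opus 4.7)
The plan is to falsify the definition by producing a single triple $(f, B, \pi)$ for which no valid $f^\pi \in \F$ can exist. First I would pick three points $x_1, x_2, x_3 \in \X$ all belonging to the same class, chosen so that $\|x_1 - x_2\|$ is small while $\|x_1 - x_3\|$ is large; by monotonicity of $\phi$, $\phi(\|x_1 - x_2\|)$ is then strictly smaller than $\phi(\|x_1 - x_3\|)$ (discarding the degenerate case of $\phi$ being constant, which would already force every $f \in \F$ to be constant, and hence trivially invariant to replace with the non-degenerate setting the lemma implicitly targets).

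Next I would select an $f \in \F$ whose values stretch $f(x_1)$ and $f(x_3)$ close to the maximal allowed distance $\phi(\|x_1 - x_3\|)$; regardless of that choice, the hypothesis automatically forces $\|f(x_1) - f(x_2)\| \le \phi(\|x_1 - x_2\|)$. Then consider the class-fixing permutation $\pi \in S_{h, B}$ that swaps indices $2$ and $3$ and fixes index $1$. Any candidate $f^\pi \in \F$ must satisfy $f^\pi(x_1) = f(x_1)$ and $f^\pi(x_2) = f(x_3)$, giving
\begin{align*}
\|f^\pi(x_1) - f^\pi(x_2)\| = \|f(x_1) - f(x_3)\|.
\end{align*}
However, the modulus-of-continuity hypothesis applied to $f^\pi$ requires $\|f^\pi(x_1) - f^\pi(x_2)\| \le \phi(\|x_1 - x_2\|)$, which by construction is strictly smaller than $\|f(x_1) - f(x_3)\|$. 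This contradiction shows no such $f^\pi$ exists, so the second clause of the invariance definition fails on this $(f, B, \pi)$, and $\lspread$ is not invariant under $\F$.

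The main obstacle is ensuring that $\F$ actually admits an $f$ that realizes (or at least approximates) the extremal gap $\phi(\|x_1 - x_3\|)$ while being forced to a much smaller gap on $(x_1, x_2)$. I would handle this with a mild non-degeneracy assumption on $\F$—concretely, that some $f \in \F$ produces $\|f(x_1) - f(x_3)\| > \phi(\|x_1 - x_2\|)$—which is trivially satisfied by any standard deep model class and merely excludes pathological examples like $\F = \{\text{constant}\}$. Once this is in place, the rest of the argument is purely structural and independent of the particular form of $\lspread$, since the incompatibility arises at the level of individual point-evaluations rather than the loss itself.
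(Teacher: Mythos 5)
Your proposal is correct and takes essentially the same route as the paper's proof: take three same-class points, apply a class-fixing permutation, and show that any candidate $f^{\pi}\in\F$ with $f^{\pi}(x_i)=f(x_{\pi(i)})$ would have to violate the bound $\|f^{\pi}(x)-f^{\pi}(x')\|\le\phi(\|x-x'\|)$ on the nearby pair, so the existence clause of the invariance definition fails. The only real difference is bookkeeping: the paper fixes the within-class embedding gap $\|f(x_1)-f(x_2)\|$ first and then chooses the third point $x_3$ close enough to $x_1$ that $\phi(\|x_3-x_1\|)$ falls below that gap, whereas you fix the three inputs first and state explicitly the non-degeneracy requirement (some $f\in\F$ with $\|f(x_1)-f(x_3)\|>\phi(\|x_1-x_2\|)$) that the paper leaves implicit; your side remark that a constant $\phi$ would force $f$ to be constant is not accurate (it only bounds the image diameter), but this is immaterial since you discard that case anyway.
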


\begin{proof}
First, note that $\lspread$ in \eqref{eq:sup} has terms in the numerator of $\supcon$ of the form $\|f(x) - f(x^+)\|$, where $h(x) = h(x^+)$. We show how to break permutation invariance using this quantity.

Fix two vectors in the embedding space $\mathcal{S}^{d-1}$, $u_a$ and $u_b$. For $x_1, x_2$ in a given class, suppose that $f(x_1) = u_a$ and $f(x_2) = u_b$. We select a third point $x_3$ that is very close to $x_1$, satisfying $\phi(\|x_3 - x_1|)  < \|u_a - u_b\|$ (this property must hold for some $x_3$ since $\phi$ is monotonic).

We construct a permutation $\pi$ where $\pi(1) = 3, \pi(2) = 1, \pi(3) = 1$. We know that $\|u_a - u_b \| \le \phi(\|x_1 - x_2\|)$. Suppose that the mapping $f^\pi$ satisfies $\|f^\pi(x_3) - f^\pi(x_1) \| = \|u_a - u_b\|$. However, this implies that $\|u_a - u_b\| \le \phi(\|x_3 - x_1\|)$, which is a contradiction. Therefore, no $f^\pi \in \F$ exists that is able to map the permutation to the same value as $f$ does. As this holds for a single term in $\lspread$, it applies to $\lspread$ overall, demonstrating that such an assumption on $\F$ (which we find is true for a Lipschitz encoder, the autoencoder, and data augmentations) is able to break permutation invariance.

\end{proof}

\section{Auxiliary Lemmas}\label{sec:supp_aux_lemmas}

\begin{lemma}
Under the infinite encoder assumption, the following statement holds for $L_{\text{diff}}(f)$:
\begin{align*}
\min_{\bm{\mu} \in \{\mathcal{S}^{d-1} \}^K} \E{x}{\log \E{x^-}{\exp(- \|f(x) - f(x^-) \|^2/2\tau)}} \equiv \min_{\bm{\mu} \in \{\mathcal{S}^{d-1} \}^K} \log \E{x, x^-}{\exp(- \|f(x) - f(x^-) \|^2/2\tau)}.
\end{align*}
\label{lemma:jensen_swap}
\end{lemma}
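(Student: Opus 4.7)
The plan is a sandwich argument applying Jensen's inequality in both directions, combined with the spherical identity $\|u - u'\|^2 = 2 - 2 u \cdot u'$ on $\mathcal{S}^{d-1}$. For the easy direction $\min$ LHS $\le \min$ RHS, concavity of $\log$ gives pointwise, for any $\bm{\mu}$,
\begin{equation*}
\E{x}{\log \E{x^-}{\exp(-\|f(x) - f(x^-)\|^2/2\tau)}} \;\le\; \log \E{x, x^-}{\exp(-\|f(x) - f(x^-)\|^2/2\tau)},
\end{equation*}
with equality iff $q_y(u) := \E{x^-}{\exp(-\|u - f(x^-)\|^2/2\tau) \mid h(x) = y}$ is constant on $\text{supp}(\mu_y)$ for each $y$. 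Taking infima yields the one-sided bound immediately.

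For the reverse inequality $\min$ RHS $\le \min$ LHS, I plan to lower-bound LHS by hand. Rewriting the integrand as $e^{-1/\tau} \exp(u \cdot u' / \tau)$ and applying Jensen twice to the convex $\exp$ --- first pushing integration against $\mu_{y'}$ inside $\exp$, then pushing the uniform average over classes $y' \ne y$ inside $\exp$ --- gives $\log q_y(u) \ge -1/\tau + u \cdot \bar m_y / \tau$, where $m_y := \int u \, d\mu_y(u)$ and $\bar m_y := \frac{1}{K-1} \sum_{y' \ne y} m_{y'}$. Integrating against $\mu_y$, averaging over $y$, and using $\|m_y\| \le 1$ (since $\mu_y$ lives on the unit sphere) together with $\|\sum_y m_y\|^2 \ge 0$, the telescoped estimate reads LHS$(\bm{\mu}) \ge -K/((K-1)\tau)$ for every $\bm{\mu}$. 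A direct computation using $\|v_y - v_{y'}\|^2 = 2K/(K-1)$ shows this value is attained by both LHS and RHS at the regular-simplex measure $\bm{\delta_v}$, so $\min$ RHS $\le -K/((K-1)\tau) \le \min$ LHS closes the sandwich and $\min$ LHS $= \min$ RHS $= -K/((K-1)\tau)$.

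To conclude that the two minimization problems have the same argmin, I would trace the equality conditions of both Jensen steps simultaneously: the concave-$\log$ step forces $q_y$ to be constant on $\text{supp}(\mu_y)$, and the double convex-$\exp$ step forces each $\mu_y$ to be a Dirac whose atoms satisfy $\sum_y m_y = 0$ and $\|m_y\| = 1$ for every $y$. Together these conditions pin down $\bm{\delta_v}$ up to a global orthogonal rotation, giving a common minimizer for both problems. The main technical obstacle is the double Jensen bound in the second paragraph: the two-level nesting (over the intra-class measure and then over the negative-class index) and the telescoping of the per-class means $m_y$ are both essential for tightness at the regular simplex, and without this level of care the bound is too loose to match $\min$ RHS.
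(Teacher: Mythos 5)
Your proposal is correct, and it takes a genuinely different route from the paper. The paper proves this lemma variationally: it first establishes that a minimizer of the left-hand problem exists (via Helly's selection theorem), then uses a first-variation/equilibrium argument to show that at any minimizer the potential $U_{\mu^\star}(u)=\int \exp(u^\top v/\tau)\,d\mu^\star(v)$ is almost surely constant, so the outer Jensen step is tight at the optimum and the two problems coincide---without ever computing the optimal value, and written only for $K=2$. Your sandwich instead exploits the spherical identity $\|u-u'\|^2=2-2u^\top u'$ and two applications of Jensen to the convex exponential (over the class-conditional measure, then over the uniform negative-class mixture), followed by the telescoping $\sum_y m_y\cdot \bar m_y=\frac{1}{K-1}\big(\|\sum_y m_y\|^2-\sum_y\|m_y\|^2\big)\ge -\frac{K}{K-1}$, to get the uniform lower bound $-\frac{K}{(K-1)\tau}$ on the left-hand objective; since $\bm{\delta_v}$ attains this value on both sides (all cross-class squared distances equal $\frac{2K}{K-1}$, making the inner expectation constant), both minima equal $-\frac{K}{(K-1)\tau}$ and, with the pointwise Jensen inequality, the minimizer sets coincide. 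This is more elementary (no compactness or calculus-of-variations machinery), handles general $K\le d+1$ rather than just the binary case, and additionally identifies the minimum value and the simplex minimizer, which subsumes the separate diameter computation the paper performs afterwards in Proposition~\ref{prop:termwise}; the trade-offs are that your argument leans on class balance and the specific Gaussian-kernel-on-sphere structure, whereas the paper's equilibrium argument is more generic, and your closing equality-condition analysis should note explicitly that $\|m_y\|=1$ for a measure supported on $\mathcal{S}^{d-1}$ forces $\mu_y$ to be a Dirac (and that this tightness comes from the final mean-norm step rather than the double-exponential step), after which $\sum_y m_y=0$ with equal pairwise inner products pins down the regular simplex up to rotation, exactly as you say.
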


\begin{proof}

Conditioning on the label of $x$ and using the definition of $x^-$ for $K = 2$, we can write $L_{\text{diff}}(f)$ as
\begin{align*}
L_{\text{diff}}(f) &= \E{y}{\E{x| h(x) = y}{\log \E{x^-| h(x^-) \neq y }{\exp(\sigma_f(x, x^-))}}} \\
&= \frac{1}{2} \int p(x | y = 0) \bigg(\log \int p(x^- | y = 1) \exp(\sigma_f (x, x^-)) dx^- \bigg) dx \\
&+ \frac{1}{2} \int p(x | y = 1) \bigg(\log \int p(x^- | y = 0) \exp(\sigma_f (x, x^-)) dx^- \bigg) dx .
\end{align*}

Since the encoder is assumed to be infinitely powerful, we optimize over the class-conditional measures $\mu_0$ and $\mu_{1}$ in $\mathcal{M}(\mathbb{S}^{d-1})$, the set of Borel probability measures on $\mathbb{S}^{d-1}$. The optimization problem is now
\begin{align*}
\text{minimize}_{\mu_0, \mu_1} \;\;\; & \int  \bigg(\log \int \exp(\sigma(x, x^-)) d\mu_1(x) \bigg) d\mu_0(x) +  \int  \bigg(\log \int \exp(\sigma(x, x^-)) d\mu_0(x) \bigg) d\mu_1(x).
\end{align*}

Next, define
\begin{align*}
U_{\mu}(u) = \int \exp(u^\top v / \tau) d\mu(v).
\end{align*}

The expression we want to minimize is thus
\begin{align}
&\text{minimize}_{\mu_i, \mu_{-i}} \int \log U_{\mu_1}(u) d\mu_0(u) + \int \log U_{\mu_0}(u) d\mu_1(u). \label{eq:wangisola_opt}
\end{align}

Following the approach of~\citet{wang2020understanding}, we analyze the measures $\mu_0^\star, \mu_1^\star$ that minimize this expression in two steps. First, we show that the minimum of~\eqref{eq:wangisola_opt} exists, i.e. the infimum is attained for some two measures. Second, we show that $U_{\mu_0^\star}$ is constant $\mu_1^\star$-almost surely, and vice versa. This will allow us to interchange the outer expectation over $x$ and the $\log$ in $L_{\text{diff}}(f)$.

\begin{enumerate}[leftmargin=*]
\item \textbf{Minimizers of~\eqref{eq:wangisola_opt} exist.} 

Let $m$ be a sequence such that
\begin{align*}
&\lim_{m \rightarrow \infty}  \int \log U_{\mu_1^m}(u) d\mu_0^m(u) +  \int \log U_{\mu_0^m}(u) d\mu_1^m(u) \\
= &\inf_{\mu_0, \mu_1} \int \log U_{\mu_1}(u) d\mu_0(u) + \int \log U_{\mu_0}(u) d\mu_1(u).
\end{align*}

Using Helly's Selection Theorem twice, there exists a subsequence $n$ such that $\{(\mu_0^n, \mu_1^n)\}_n$ converges to a weak cluster poinnt $(\mu_0^\star, \mu_1^\star)$. Because $\{\log U_{\mu_0^n}\}_n$ is uniformly bounded and continuously convergent to $\log U_{\mu_0^\star}$ and same for $\mu_1^n$ and $\mu_1^\star$, it holds that
\begin{align*}
& \int \log U_{\mu_1^\star}(u) d\mu_0^\star(u) + \int \log U_{\mu_0^\star}(u) d\mu_1^\star(u) \\
=  &\lim_{n \rightarrow \infty}  \int \log U_{\mu_1^n}(u) d\mu_0^n(u) + \int \log U_{\mu_0^n}(u) d\mu_1^n(u)).
\end{align*} 

and therefore $\mu_0^\star, \mu_1^\star$ achieve the infimum of~\eqref{eq:wangisola_opt}.

\item \textbf{$U_{\mu_1^\star}$ is constant $\mu_0^\star$-almost surely and $U_{\mu_0^\star}$ is constant $\mu_1^\star$-almost surely, for any minimizer $(\mu_0^\star, \mu_1^\star)$ of~\eqref{eq:wangisola_opt}.}

Formally, define $(\mu_0^\star, \mu_1^\star)$ to be a solution of~\eqref{eq:wangisola_opt}, i.e.
\begin{align*}
\mu_0^\star, \mu_1^\star \in \argmin{\mu_0, \mu_1}{\int \log U_{\mu_1}(u) d\mu_0(u) + \int \log U_{\mu_0}(u) d\mu_1(u)}.
\end{align*}

Define the Borel sets where $\mu_i^\star$ has positive measure to be $\mathcal{T}_i = \{T \in \mathcal{M}(\mathbb{S}^{d-1}): \mu_i^\star(T) > 0 \}$. Define the conditional distribution of $\mu_i^\star$ on $T$ for some $T \in \mathcal{T}_i$ as $\mu_{i, T}^\star$, where $\mu_{i, T}^\star(A) = \frac{\mu_i^\star(A \cap T)}{\mu_i^\star(T)}$.

Now we consider a mixture $(1 - \alpha)\mu_0^\star + \alpha \mu_{0, T}^\star$. The first variation of $\mu_0^\star$ states that 
\begin{align*}
0 &\texttt{=} \frac{\partial}{\partial \alpha} \bigg[   \int \log U_{\mu_1^\star}(u) d((1 - \alpha) \mu_0^\star + \alpha \mu_{0, T}^\star)(u) + \int \log U_{(1 - \alpha)\mu_0^\star + \alpha \mu_{0, T}^\star} d\mu_1^\star(u) \bigg|_{\alpha  = 0}\bigg] \\
&=  \int \log U_{\mu_1^\star}(u) d(\mu_{0, T}^\star - \mu_0^\star)(u) + \int \frac{U_{\mu_{0, T}^\star}(u) - U_{\mu_0^\star}(u)}{U_{\mu_0^\star}(u)} d\mu_1^\star(u),
\end{align*}

Where we've used the fact that $\frac{\partial}{\partial \alpha} U_{(1 - \alpha)\mu_0^\star + \alpha \mu_{0, T}^\star}(u)\big|_{\alpha = 0} = \frac{\partial}{\partial \alpha} \int \exp(u^\top v / \tau) d((1 - \alpha)\mu_0^\star + \alpha \mu_{0, T}^\star)(v) \big|_{\alpha = 0} = U_{\mu_{0, T}^\star}(u) - U_{\mu_0^\star}(u)$. Therefore, due to symmetry the optimality conditions using the first variation are
\begin{align}
\int \log U_{\mu_1^\star}(u) d(\mu_{0, T}^\star - \mu_0^\star)(u) +  \int \frac{U_{\mu_{0, T}^\star}(u)}{U_{\mu_0^\star}(u)} d\mu_1^\star(u) = 1 \label{eq:firstvar1} \\
\int \log U_{\mu_1^\star}(u) d(\mu_{1, T}^\star - \mu_1^\star)(u) +  \int \frac{U_{\mu_{1, T}^\star}(u)}{U_{\mu_1^\star}(u)} d\mu_0^\star(u) = 1 \label{eq:firstvar2}
\end{align}

Now, let $\{T_0^n \}_{n = 1}^{\infty}$ be a sequence of sets in $\mathcal{T}_0$ such that
\begin{align*}
\lim_{n \rightarrow \infty} \int U_{\mu_1^\star}(u) d \mu_{0, T_0^n}^\star (u) = \sup_{T_0 \in \mathcal{T}_0} \int U_{\mu_1^\star}(u) d \mu_{0, T_0}^\star (u) = U_{1, 0}^\star.
\end{align*}

and similarly let $\{T_1^n\}_{n = 1}^{\infty}$ be a sequence of sets in $\mathcal{T}_1$ such that
\begin{align*}
\lim_{n \rightarrow \infty} \int U_{\mu_0^\star}(u) d \mu_{1, T_1^n}^\star (u) = \sup_{T_1 \in \mathcal{T}_1} \int U_{\mu_0^\star}(u) d \mu_{1, T_1}^\star (u) = U_{0, 1}^\star.
\end{align*}

It holds that $\mu_0^\star (\{u: U_{\mu_1^\star}(u) \ge U_{1, 0}^\star \}) = 0$, $\mu_{0, T_0^n}^\star(\{u: U_{\mu_1^\star}(u) \ge U_{1, 0}^\star \}) = 0$ and similarly $\mu_1^\star(\{u: U_{\mu_0^\star}(u) \ge U_{0, 1}^\star \}) = 0$, $\mu_{1, T_1^n}^\star(\{u: U_{\mu_0^\star}(u) \ge U_{0, 1}^\star \}) = 0$. 

This implies that asymptotically $U_{\mu_0^\star}$ is constant $\mu_{1, T_1^n}^\star$-almost surely:
\begin{align*}
&\int \bigg| U_{\mu_0^\star}(u) - \int U_{\mu_0^\star}(u') d\mu_{1, T_1^n}(u') \bigg| d\mu_{1, T_1^n}^\star(u) \\
=  &2 \int \max\bigg(0, U_{\mu_0^\star}(u) - \int U_{\mu_0^\star}(u') d\mu_{1, T_1^n}^\star(u') \bigg) d\mu_{1, T_1^n}^\star(u) \\
\le & 2 \bigg(U_{0,1}^\star - \int U_{\mu_0^\star}(u) d\mu_{1, T_1^n}^\star(u)\bigg) \rightarrow 0.
\end{align*}

And the same holds that $U_{\mu_1^\star}$ is constant $\mu_{0, T_0^n}^\star$-almost surely. As a result, $\lim_{n \rightarrow \infty} \int \log U_{\mu_0^\star}(u) d\mu_{1, T_1^n}^\star(u) = \log U_{0, 1}^\star$ and $\lim_{n \rightarrow \infty} \int \log U_{\mu_1^\star}(u) d\mu_{0, T_0^n}^\star(u) = \log U_{1, 0}^\star$.

We now revisit~\eqref{eq:firstvar1} with a mixture over $\mu_0^\star$ and $\mu_{0, T_0^n}$:
\begin{align*}
1 &= \int \log U_{\mu_1^\star}(u) d(\mu_{0, T_0^n}^\star - \mu_0^\star)(u) + \int \frac{U_{\mu_{0, T_0^n}^\star}(u)}{U_{\mu_0^\star}(u)} d\mu_1^\star(u) \\
&\ge  \int \log U_{\mu_1^\star}(u) d(\mu_{0, T_0^n}^\star - \mu_0^\star)(u) + \frac{1}{U_{0,1}^\star} \int  U_{\mu_1^\star}(u) d \mu_{0, T_0^n}^\star(u).
\end{align*}

Taking the limit of both sides as $n\rightarrow \infty$, we get
\begin{align*}
1 &\ge  \log U_{1, 0}^\star -  \int \log U_{\mu_1^\star}(u) d\mu_0^\star(u) + \frac{1}{U_{0,1}^\star} U_{1, 0}^\star.
\end{align*}

and rearranging and doing the same to~\eqref{eq:firstvar2} yields
\begin{align*}
 \Big(1 - \frac{U_{1, 0}^\star}{U_{0, 1}^\star}\Big) &\ge \log U_{1, 0}^\star - \int \log U_{\mu_1^\star}(u) d\mu_0^\star(u) \\
\Big(1 - \frac{U_{0, 1}^\star}{U_{1, 0}^\star}\Big) &\ge \log U_{0, 1}^\star - \int \log U_{\mu_0^\star}(u) d\mu_1^\star(u)
\end{align*}

Note that Jensen's inequality and the definition of $U_{1, 0}^\star$ tell us that $\int \log U_{\mu_1^\star}(u) d\mu_0^\star(u) \le \log \int U_{\mu_1^\star}(u) d\mu_0^\star(u) \le \log U_{1, 0}^\star$, which means that $\Big(1 - \frac{U_{1, 0}^\star}{U_{0, 1}^\star}\Big) \ge 0$. However, applying the same logic also tells us that $ \Big(1 - \frac{U_{0, 1}^\star}{U_{1, 0}^\star}\Big) \ge 0$. The only case in which this is possible is when $U_{1,0}^\star = U_{0,1}^\star$, in which case equality is obtained. Therefore, this means that for optimal $\mu_0^\star, \mu_1^\star$, it holds that 
\begin{align}
\int \log U_{\mu_1^\star}(u) d\mu_0^\star(u) &= \log \int U_{\mu_1^\star}(u) d\mu_0^\star(u) \label{eq:jensen_thing1}\\
\int \log U_{\mu_0^\star}(u) d\mu_1^\star(u) &= \log \int U_{\mu_0^\star}(u) d\mu_1^\star(u) \label{eq:jensen_thing2}
\end{align}

\end{enumerate}

Using~\eqref{eq:jensen_thing1} and~\eqref{eq:jensen_thing2}, minimizing~\eqref{eq:wangisola_opt} is equivalent to minimizing
\begin{align*}
\log \int U_{\mu_1}(u) d \mu_0(u),
\end{align*}

where we use the fact that $\mu_0$ and $\mu_1$ are interchangable in the above expression. This expression can be written as $\log \E{x, x^-}{\exp(-\|f(x) - f(x^-)\|^2 / 2\tau)}$, which completes our proof.

\end{proof}

\begin{lemma}
Suppose $\F$ is a family of functions mapping from $\X$ to $\mathbb{R}^d$. Define $f(x)[j]$ as the $j$th element of $f(x)$ and suppose that for all $j$, $|f(x)[i]| \le b$. Define the element-wise class $\F_j = \{f(\cdot)[j] : f \in \F \}$.  Then, with probability at least $1 - \delta$ over $n$ i.i.d. samples $\{x_i\}_{i = 1}^n$, 
\begin{align*}
\bigg\|\E{}{f(x)} - \frac{1}{n} \sum_{i = 1}^n f(x_i) \bigg\|  \le 2 \rademacher_n(\F) + b d \sqrt{\frac{\log(d /\delta)}{2n}} \quad \forall f \in \F,
\end{align*}

where $\rademacher_n(\F) = \sum_{j = 1}^d \rademacher_n(\F_j)$.

\label{lemma:rademacher}

\end{lemma}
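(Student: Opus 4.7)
The plan is to reduce the vector-valued deviation bound to the familiar scalar Rademacher generalization bound applied coordinatewise, and then pay a union-bound cost across the $d$ coordinates.

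First, I would pass from the $\ell_2$ norm to the $\ell_1$ norm via the elementary inequality $\|v\|_2 \le \|v\|_1 = \sum_{j=1}^d |v_j|$. Applied pointwise in $f$, this gives
\[
\sup_{f \in \F}\bigg\|\E{}{f(x)} - \frac{1}{n}\sum_{i=1}^n f(x_i)\bigg\|_2 \;\le\; \sum_{j=1}^d \sup_{g \in \F_j}\bigg|\E{}{g(x)} - \frac{1}{n}\sum_{i=1}^n g(x_i)\bigg|,
\]
where I have used that the supremum of a sum of nonnegative terms, taken jointly in $f$, is at most the sum of the coordinatewise suprema taken independently in $\F_j$.

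Next, I would invoke the standard scalar Rademacher generalization bound for each $\F_j$. Since every $g \in \F_j$ satisfies $|g(x)| \le b$, bounded-differences (McDiarmid) combined with symmetrization yields, for each fixed $j$ and any $\delta_j \in (0,1)$, with probability at least $1-\delta_j$,
\[
\sup_{g \in \F_j}\bigg|\E{}{g(x)} - \frac{1}{n}\sum_{i=1}^n g(x_i)\bigg| \;\le\; 2\rademacher_n(\F_j) + b\sqrt{\frac{\log(1/\delta_j)}{2n}},
\]
treating the two-sided deviation by absorbing the sign into $\F_j$ (or equivalently by applying the one-sided bound to $\F_j$ and $-\F_j$ and combining).

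Finally, I would apply a union bound across $j \in [d]$ with $\delta_j = \delta/d$, so that all $d$ coordinatewise events hold simultaneously with probability at least $1-\delta$. Summing the resulting bounds and using $\sum_j \rademacher_n(\F_j) = \rademacher_n(\F)$ yields
\[
\sup_{f \in \F}\bigg\|\E{}{f(x)} - \frac{1}{n}\sum_{i=1}^n f(x_i)\bigg\|_2 \;\le\; 2\rademacher_n(\F) + bd\sqrt{\frac{\log(d/\delta)}{2n}},
\]
which is the claimed inequality. There is no real obstacle here beyond bookkeeping: the main choices are the norm-relaxation $\|\cdot\|_2 \le \|\cdot\|_1$ (which is what produces the multiplicative $d$ in front of the square root, and the additive decomposition of $\rademacher_n(\F)$ across coordinates) and the distribution of the failure budget uniformly over coordinates in the union bound.
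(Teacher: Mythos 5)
Your proposal is correct and follows essentially the same route as the paper's proof: bound the $\ell_2$ deviation by the $\ell_1$ sum of coordinatewise deviations, apply the standard scalar Rademacher generalization bound (Mohri et al., Theorem 3.3) to each $\F_j$, and union bound over the $d$ coordinates with budget $\delta/d$. If anything, you are slightly more explicit than the paper about handling the two-sided deviation (via $\F_j$ and $-\F_j$), which the paper glosses over.
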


\begin{proof}
Using the triangle inequality,
\begin{align*}
\bigg\|\E{}{f(x)} - \frac{1}{n} \sum_{i = 1}^n f(x_i) \bigg\|  = \bigg(\sum_{j = 1}^d \Big(\E{}{f(x)[j]} - \frac{1}{n} \sum_{i = 1}^n f(x_i)[j]\Big)^2\bigg)^{1/2} \le \sum_{j = 1}^d \bigg|\E{}{f(x)[j]} - \frac{1}{n} \sum_{i = 1}^n f(x_i)[j] \bigg|.
\end{align*}

Using Theorem 3.3 of~\citet{mohri2018foundations}, we know that with probability at least $1 - \delta$,
\begin{align*}
\E{}{f(x)[j]} \le \frac{1}{n} \sum_{i = 1}^n f(x_i)[j] + 2\rademacher_n(\F_j) + b\sqrt{\frac{2\log(1 /\delta)}{n}}.
\end{align*}

Applying a union bound, we have that with probability at least $1 - \delta$, 
\begin{align*}
\bigg\|\E{}{f(x)} - \frac{1}{n} \sum_{i = 1}^n f(x_i) \bigg\| \le \sum_{j = 1}^d \bigg(2\rademacher_n(\F_j) + b\sqrt{\frac{\log(d /\delta)}{2n}} \bigg) = 2 \rademacher_n(\F) + b d \sqrt{\frac{2\log(d /\delta)}{n}}.
\end{align*}

\end{proof}

\section{Additional Experimental Details}
\label{sec:supp_details}
We describe details about the datasets, model architectures, and hyperparameters.

\subsection{Datasets}

We first describe all the datasets in more detail:
\begin{itemize}
\item
\textbf{CIFAR10}, \textbf{CIFAR100}, and \textbf{MNIST} are all the standard
computer vision datasets.

\item
\textbf{CIFAR10-Coarse} consists of two superclasses: animals (dog, cat, deer,
horse, frog, bird) and vehicles (car, truck, plane, boat).

\item
\textbf{CIFAR100-Coarse} consists of twenty superclasses.
We artificially imbalance subclasses to create \textbf{CIFAR100-Coarse-U}.
For each superclass, we select one subclass to keep all $500$ points, select one
subclass to subsample to $250$ points, select one subclass to subsample to $100$ points, and
select the remaining two to subsample to $50$ points.
We use the original CIFAR100 class index to select which subclasses to subsample:
the subclass with the lowest original class index keeps all $500$ points, the
next subclass keeps $250$ points, etc.

\item
\textbf{TinyImageNet-Coarse}~\citep{le2015tiny} consists of $67$ superclasses constructed from the ImageNet class hierarchy~\citep{bostock2018imagenet}.
The 67 superclasses are as follows: arachnid, armadillo, bear, bird, bug, butterfly, cat, 
coral, crocodile, crustacean, dinosaur, dog, echinoderms, ferret, fish, flower, frog, fruit, fungus, hog,
lizard, marine mammals, marsupial, mollusk, mongoose, monotreme, person, plant, primate, rabbit, rodent, 
salamander, shark, sloth, snake, trilobite, turtle, ungulate, vegetable, wild cat, wild dog, accessory, aircraft, ball, boat, 
building, clothing, container, cooking, decor, electronics, fence, food, furniture, hat, instrument, lab equipment, other, outdoor scene, 
paper, sports equipment, technology, tool, toy, train, vehicle and weapon.

\item
\textbf{MNIST-Coarse} consists of two superclasses: $<$5 and $\geq$5.

\item
\textbf{Waterbirds}~\citep{sagawa2019groupdro} is a robustness dataset designed
to evaluate the effects of spurious correlations on model performance.
The waterbirds dataset is constructed by cropping out birds from photos in the
Caltech-UCSD Birds dataset~\citep{WelinderEtal2010}, and pasting them on backgrounds
from the Places dataset~\citep{zhou2014learning}.
It consists of two categories: water birds and land birds.
The water birds are heavily correlated with water backgrounds and the land birds
with land backgrounds, but 5\% of the water birds are on land backgrounds, and
5\% of the land birds are on water backgrounds.
These form the (imbalanced) hidden strata.

\item
\textbf{ISIC} is a public skin cancer dataset for classifying skin
lesions~\citep{codella2019skin} as malignant or benign.
48\% of the benign images contain a colored patch, which form the hidden strata.

\item
\textbf{CelebA} is an image dataset commonly used as a robustness
benchmark~\citep{liu2015faceattributes,sagawa2019groupdro}.
The task is blonde/not blonde classification.
Only 6\% of blonde faces are male, which creates a rare stratum in the blonde
class.
\end{itemize}

\subsection{Model Architectures}
We use a ViT model~\citep{dosovitskiy2020image} (4 x 4 patch size, 7 multi-head attention layers with 8 attention heads and hidden MLP size of 256, final embedding size of 128) as the
encoder for the transfer learning experiments and a ResNet50 for the robustness experiments.
For the ViT models, we jointly optimize the contrastive loss with a cross-entropy 
loss head. For the ResNets, we train the contrastive loss on its own and use linear probing on the final layer.

For the autoencoder, we use the same encoder backbone as the main model, and use a ResNet18 in reverse order for the decoder.
The convolutions are replaced with resize convolutions.
We use the implementation in PyTorch Lightning Bolts\footnote{\url{https://github.com/PyTorchLightning/lightning-bolts/blob/master/pl_bolts/models/autoencoders/components.py}}~\citep{falcon2020framework}.

\subsection{Hyperparameters}

For the coarse dataset training, all models were trained for $600$ epochs with an initial learning rate of $0.0003$,
a cosine annealing learning rate scheduler with $T_{max}$ set to $100$ and the AdamW optimizer.
A dropout rate of $0.05$ was used. We did not use weight decay. For each coarse dataset, we trained $5$ separate models which jointly
optimize a cross-entropy loss head with either a contrastive loss
(InfoNCE, SupCon, SupCon + InfoNCE, SupCon + Class-conditional InfoNCE) or a reconstruction loss (mean squared error).

In the coarse-to-fine transfer experiments, we trained $5$ separate models for each of the configurations reported in Table~\ref{table:transfer} using
$5$ random seeds ($42$, $32$, $64$, $128$ and $72$). All models were trained
for $100$ epochs with an initial learning rate of $0.001$, a cosine annealing learning rate scheduler with $T_{max}$
set to $100$ and the AdamW optimizer. All transfer experiments were run using Tesla V100 machines.

All experiments were run using a batch size of $128$ for both training and evaluation.

\section{Additional Experimental Results}
\label{sec:supp_additional_results}

We present additional experimental results on end model accuracy, transfer with cross entropy, more datasets, more baselines, and full ablations.

\subsection{End Model Accuracy}

\begin{table}[t]
     \centering
     \begin{tabular}{lccclccc}
         \toprule                   & \multicolumn{3}{c}{\textbf{End Model Perf.}}  \\ \cmidrule(lr){2-4}
         \textbf{Dataset}           & InfoNCE        & $\supcon$       & $\lspread$  \\ \midrule
         \textbf{CIFAR10}           & 89.7           & 90.9           & \textbf{91.5} \\
         \textbf{CIFAR10-Coarse}    & 97.7           & 96.5           & \textbf{98.1} & \\
         \textbf{CIFAR100}          & 68.0           & 67.5           & \textbf{69.1} \\
         \textbf{CIFAR100-Coarse}   & 76.9           & 77.2           & \textbf{78.3} & \\
         \textbf{CIFAR100-Coarse-U} & 72.1           & 71.6           & \textbf{72.4} & \\
         \textbf{MNIST}             & 99.1           & \textbf{99.3}  & 99.2          \\
         \textbf{MNIST-Coarse}      & 99.1           & \textbf{99.4}  & \textbf{99.4} & \\
         \textbf{Waterbirds}        & 77.8           & 73.9           & \textbf{77.9} & \\
         \textbf{ISIC}              & 87.8           & 88.7           & \textbf{90.0}  &\\
         \bottomrule
     \end{tabular}
\caption{
    End model performance training with $\lspread$ on various datasets compared
    against contrastive baselines.
    All metrics are accuracy except for ISIC (AUROC).
    $\lspread$ produces the best performance in \num{7} out of \num{9}
    cases, and matches the best performance in \num{1} case.
}
\label{table:coarse-quality}
\end{table}

See Table~\ref{table:coarse-quality} for raw accuracy.
We confirm that using $\lspread$ instead of $\supcon$ does not degrade end model performance.

\subsection{Additional Transfer Results}

\begin{table*}[t]
  \centering
  \caption{Coarse-to-fine transfer learning performance (expanded table). Best in bold.}
  \small
  \begin{tabular}{llcccccccccc}
    \toprule
    & \textbf{Method} & \textbf{CIFAR10} & \textbf{CIFAR100} & \textbf{CIFAR100-U} & \textbf{MNIST} & \textbf{TinyImageNet} \\
    \midrule
    \multirow{4}{*}{\rotatebox[origin=c]{90}{\scriptsize{Baselines}}}
    & Cross Entropy            & 71.1 $\pm$ 0.2   & 54.2 $\pm$ 0.2    & 56.4 $\pm$ 0.4      & 98.7 $\pm$ 0.1 & 44.4 $\pm$ 0.1 \\
    & InfoNCE~\citep{chen2020simple} 
                      & 77.6 $\pm$ 0.1   & 60.5 $\pm$ 0.1    & 56.4 $\pm$ 0.3      & 98.4 $\pm$ 0.1  & 44.9 $\pm$ 0.1 \\
    & SupCon~\citep{khosla2020supervised} 
                      & 51.8 $\pm$ 1.2   & 56.1 $\pm$ 0.1    & 49.8 $\pm$ 0.3      & 95.4 $\pm$ 0.1  & 43.9 $\pm$ 0.1 \\
    & SupCon + InfoNCE~\citep{islam2021broad}
                      & 77.6 $\pm$ 0.1   & 55.7 $\pm$ 0.1    & 48.0 $\pm$ 0.2      & 98.6 $\pm$ 0.1  & 46.1 $\pm$ 0.1 \\
    \cmidrule(lr){2-7}
    \multirow{4}{*}{\rotatebox[origin=c]{90}{\scriptsize{Ours}}}
    & cAuto            & 71.4 $\pm$ 0.1   & 62.9 $\pm$ 0.1    & 58.7 $\pm$ 0.5      & 98.7 $\pm$ 0.1  & 47.1 $\pm$ 0.1 \\
    & SupCon + cNCE ($\lspread$)
                      & 77.1 $\pm$ 0.1   & 58.7 $\pm$ 0.2    & 53.5 $\pm$ 0.4      & 98.5 $\pm$ 0.1  & 45.8 $\pm$ 0.1 \\
    & SupCon + cAuto  & 71.7 $\pm$ 0.1   & 63.8 $\pm$ 0.6    & \textbf{59.8 $\pm$ 0.3} & 98.7 $\pm$ 0.1 & 49.3 $\pm$ 0.1  \\
    & SupCon + cNCE + cAuto (\textbf{\sysname})
                    & \textbf{79.1 $\pm$ 0.2}  & \textbf{65.0 $\pm$ 0.2} & 59.7 $\pm$ 0.3 & \textbf{99.0 $\pm$ 0.1} & \textbf{49.6 $\pm$ 0.1} \\                     
    \bottomrule
  \end{tabular}
  \label{table:transfer-with-cep}
\end{table*}

We reproduce Table~\ref{table:transfer} and additional report the performance of training with cross entropy loss (Table~\ref{table:transfer-with-cep}).

\subsection{Additional Datasets}

\begin{table*}[t]
    \centering
    \caption{Coarse-to-fine transfer learning performance on two additional datasets. Best in bold.}
    \small
    \begin{tabular}{lcccccccccc}
      \toprule
      \textbf{Method} & \textbf{Caltech-UCSD Birds} & \textbf{Stanford Dogs}  \\
      \midrule
      Cross Entropy   & 8.2   & 14.9    \\
      SupCon~\citep{khosla2020supervised} 
                      & 7.8   & 15.0     \\
      cAuto           & 8.8   & 17.7    \\
      SupCon + cNCE ($\lspread$)
                      & 7.5   & 16.5   \\
      SupCon + cAuto  & \textbf{9.1}   & 19.8   \\
      SupCon + cNCE + cAuto (\textbf{\sysname})
                      & 8.8   & \textbf{20.8} \\                     
      \bottomrule
    \end{tabular}
    \label{table:transfer-cub-dogs}
  \end{table*}
  
Table~\ref{table:transfer-cub-dogs} report the performance of C2F transfer on two additional datasets, Caltech-UCSD Birds~\citep{welinder2010caltech} and Stanford Dogs~\citep{khosla2011novel}.

\subsection{Additional Baselines}

\begin{table*}[t]
    \centering
    \caption{Coarse-to-fine transfer learning performance on CIFAR10 with two additional baselines. Best in bold.}
    \small
    \begin{tabular}{lcccccccccc}
      \toprule
      \textbf{Method} & \textbf{CIFAR10}   \\
      \midrule
      Clip Positives  & 35.3    \\
      Weighted Pos in Denom. 
                      & 56.3   \\
      \textbf{\sysname}
                      & \textbf{79.1} \\                     
      \bottomrule
    \end{tabular}
    \label{table:transfer-more-baselines}
  \end{table*}
  
Table~\ref{table:transfer-more-baselines} reports C2F transfer performance with two additional baselines---a) clipping the values of the positives in the numerator $\supcon$, and b) upweighting the negatives in the denominator of $\supcon$.
Both these methods underperform \sysname.

\subsection{Ablations and Sensitivity Studies}
\label{subsec:supp_ablations}
In this section, we validate our specific theoretical claims on the class-conditional autoencoder, data augmentation, and the Lipschitzness of the decoder.

\begin{table}[t]
    \centering
    \caption{Ablations on the autoencoder and data augmentation on CIFAR10 coarse-to-fine transfer.}
    \begin{tabular}{lccccccccccc}
      \toprule
      \multicolumn{2}{l}{\textbf{General vs. Class-Conditional Autoencoder}} \\
      \cmidrule(lr){1-2}
      gAuto            & 41.4 $\pm$ 0.2 \\
      cAuto            & 71.4 $\pm$ 0.1 \\
      \cmidrule(lr){1-2}
      SupCon           & 51.8 $\pm$ 1.2 \\
      SupCon + gAuto   & 55.4 $\pm$ 0.4 \\
      SupCon + cAuto   & 71.7 $\pm$ 0.1 \\
      \cmidrule(lr){1-2}
      SupCon + cNCE    & 77.1 $\pm$ 0.1 \\
      SupCon + cNCE + gAuto 
                       & 77.4 $\pm$ 0.1 \\
      SupCon + cNCE + cAuto (\textbf{\sysname})
                       & \textbf{79.1 $\pm$ 0.2} \\
      \midrule
      \multicolumn{2}{l}{\textbf{cNCE With and Without Augmentation}} \\
      \cmidrule(lr){1-2}
      SupCon + cNCE - augmentation
                      & 41.7 $\pm$ 0.2               \\
      SupCon + cNCE   & \textbf{77.1 $\pm$ 0.1}      \\               
      \bottomrule
    \end{tabular}
    \label{table:ablations}
\end{table}

We use two ablations to validate our claims that the class-conditional autoencoder outperforms a generic autoencoder, and that data augmentation in the class-conditional InfoNCE loss is critical for inducing subclass clustering.
Table~\ref{table:ablations} reports the results:
\begin{itemize}[itemsep=0.5pt,topsep=0pt,leftmargin=*]
    \item Lemma~\ref{lemma:autoencoder} claims that a class-conditional autoencoder should outperform a generic autoencoder in coarse-to-fine transfer.
    Indeed, we find that using a generic autoencoder underperforms a class-conditional autoencoder by \num{30.0} points on CIFAR10 coarse-to-fine transfer.
    Furthermore, the generic autoencoder does not improve performance of SupCon or its variants as well; we observe average lift of \num{2.0} points, compared to \num{11.0} points for the class-conditional autoencoder.

    \item Lemma~\ref{lemma:augmentation} claims that data augmentation in the class-conditional InfoNCE loss is key to break the permutation invariance.
    Removing data augmentation degrades performance by \num{35.4} points (and produces the permutation shown in Figure~\ref{fig:banner}).
\end{itemize}

\begin{figure}[t]
    \centering
    \includegraphics[width=0.45\textwidth]{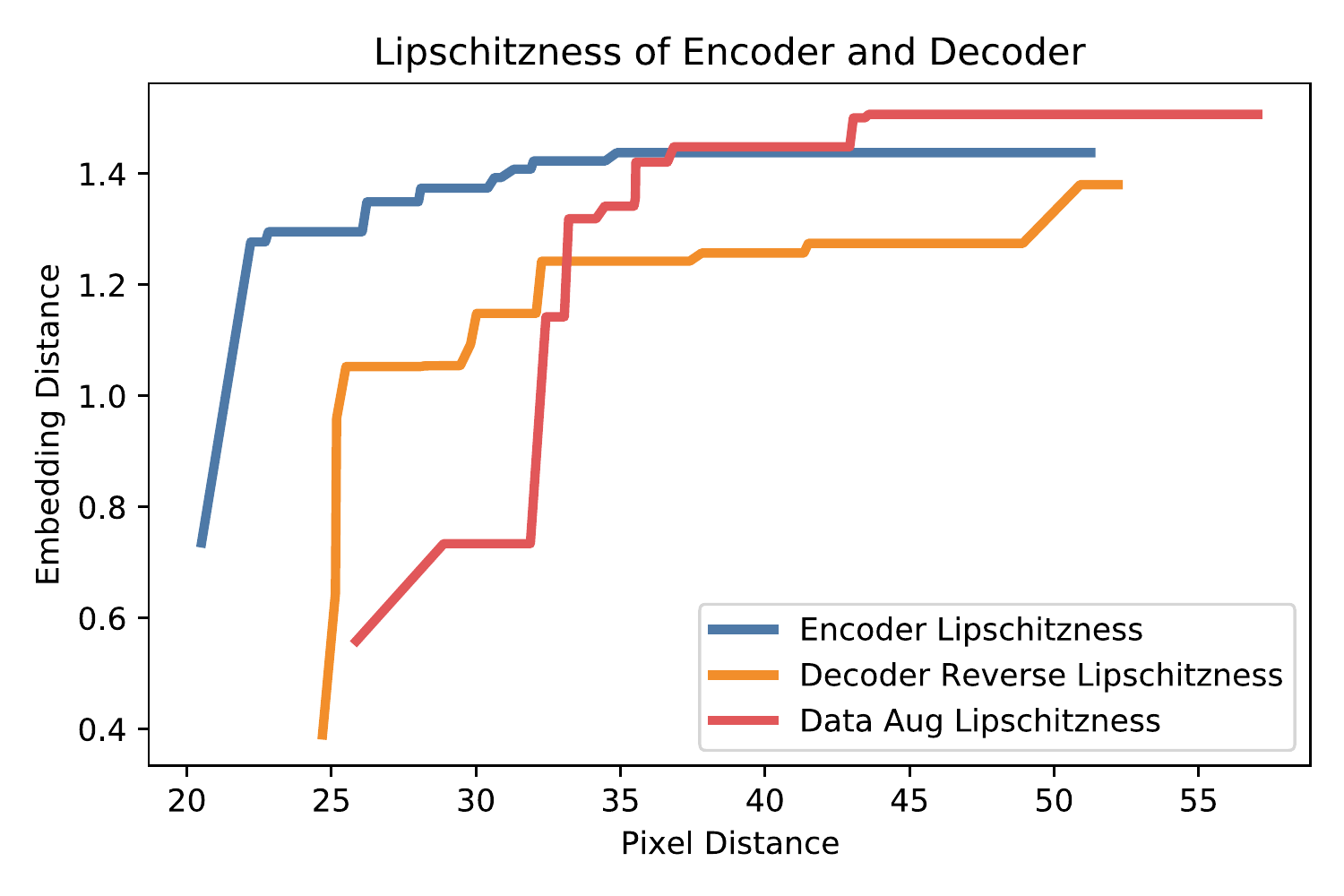}
    \caption{
        Measures of Lipschitzness for three ways to break permutation invariance.
        Encoder Lipschitzness reports pixel distance on the X axis and embedding distance on the Y axis.
        The Decoder reports pixel distance of the reconstruction.
        For augmentations, we run ten augmentations and pick the pair with the smallest ratio of pixel distance to embedding distance.
        The decoder is more Lipschitz than the encoder, and the encoder is more Lipschitz under augmentations than under traditional Lipschitzness.
    }
    \label{fig:lipschitzness}
\end{figure}

Finally, Figure~\ref{fig:lipschitzness} measures the Lipschitzness of an encoder trained with $\lspread$ and the reverse Lipschitzness of the decoder from a class-conditional autoencoder.
The encoder displays a high Lipschitzness constant (not very Lipschitz).
However, it displays a low Lipschitzness constant over augmentations.
The decoder displays a lower reverse Lipschitzness constant.
This suggests that the assumptions in Lemmas~\ref{lemma:autoencoder} and~\ref{lemma:augmentation} are reasonable.

To measure Lipschitzness of an encoder, we measure distance in embedding space of an encoder trained with $\lspread$ vs. distance in pixel space of two images (blue line).
To measure reverse Lipschitzness of a decoder, we make the same measurement, but over pixel distance of decoded images from an autoencoder (orange line).
To measure Lipschitzness under data augmentations, we measure the minimum ratio between embedding distance and pixel distance for 10 randomly-generated augmentations of two images (red line).
The Lipschitzness constants $K_L$, $K_g$, and $K_{aug}$ in Table~\ref{table:inducing_bias} are the slopes of the lines tangent to each of the curves in Figure~\ref{fig:lipschitzness} from the origin.

\section{Synthetic Experiments}
\label{sec:supp_synthetics}

We conduct synthetic experiments to understand the optimal geometry that minimizes the asymptotic loss $\lspread(\bm{\mu}, \alpha)$ as defined in Section~\ref{subsec:asymptotic}.

\paragraph{Setup} We minimize an empirical estimate of the asymptotic loss over a set of unit vectors $\{u_i\}_{i = 1}^{Kn_y}$. Denote $u$ as a unit vector, and denote $h(u)$ as its class label. The loss we minimize is 
\begin{align}
&(1 - \alpha) \frac{1}{K n_y} \sum_{i = 1}^{K n_y} \log \bigg(\frac{1}{(K-1)n_y} \quad \;\;\;\;\; \sum_{\mathclap{j: h(u_i) \neq h(u_j)}} \;\; \exp(-\|u_i - u_j \|^2 / 2\tau)\bigg) \\
&+ \alpha \cdot \frac{1}{K n_y} \sum_{i = 1}^{K n_y} \log \bigg(\frac{1}{n_y} \quad \;\;\;\;\; \sum_{\mathclap{j: h(u_i) = h(u_j)}} \;\; \exp(-\|u_i - u_j \|^2 / 2\tau)\bigg) + (1 - \alpha) \frac{1}{Kn_y^2} \sum_{h(u) = h(u')} \|u - u' \|^2 / 2\tau.
\end{align}

We use scipy.minimize and the Sequential Least Squares Programming (SLSQP) option. We report the set of vectors that obtain the lowest loss over $5$ runs with random initializations (seeds $0-4$) as the optimal geometry.

We compute an empirical estimate of $s_f(y)$ as $\frac{1}{n_y} \sum_{u: h(u) = y} \Big\|u - \frac{1}{n_y} \sum_{u': h(u') = y} u' \Big\|$, and average over all classes.

\begin{figure}
\centering
\includegraphics[width=3in]{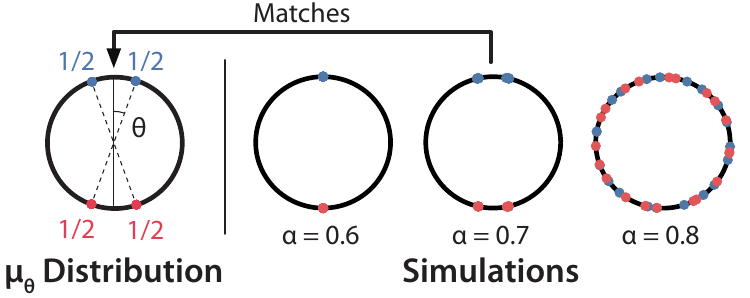}
\caption{Left: Distribution family $\bm{\mu_{\theta}}$. Right: Simulations of optimal geometry for binary setting on $S^1$.}
\label{fig:synthetic_fig}
\end{figure}

\paragraph{Matching $\bm{\mu}$ and the optimal geometry}

Figure~\ref{fig:synthetic_fig} displays our constructed distribution $\bm{\mu}_\theta$ as well as simulations on $S^1$ for $K = 2$. In particular, the right figure consists of the optimal geometry for $n_y = 20$, $\tau = 0.5$. We see that for $\alpha = 0.6$ (which is below our threshold in Theorem~\ref{thm:geometry}), that the optimal geometry is collapsed. For $\alpha = 0.7$, the optimal geometry appears to closely match the parametrization of $\bm{\mu_{\theta}}$. For $\alpha = 0.8$, which is above the theoretical threshold, the optimal geometry is uniform per class.

\begin{figure}[h]
\centering
\includegraphics[scale=0.5]{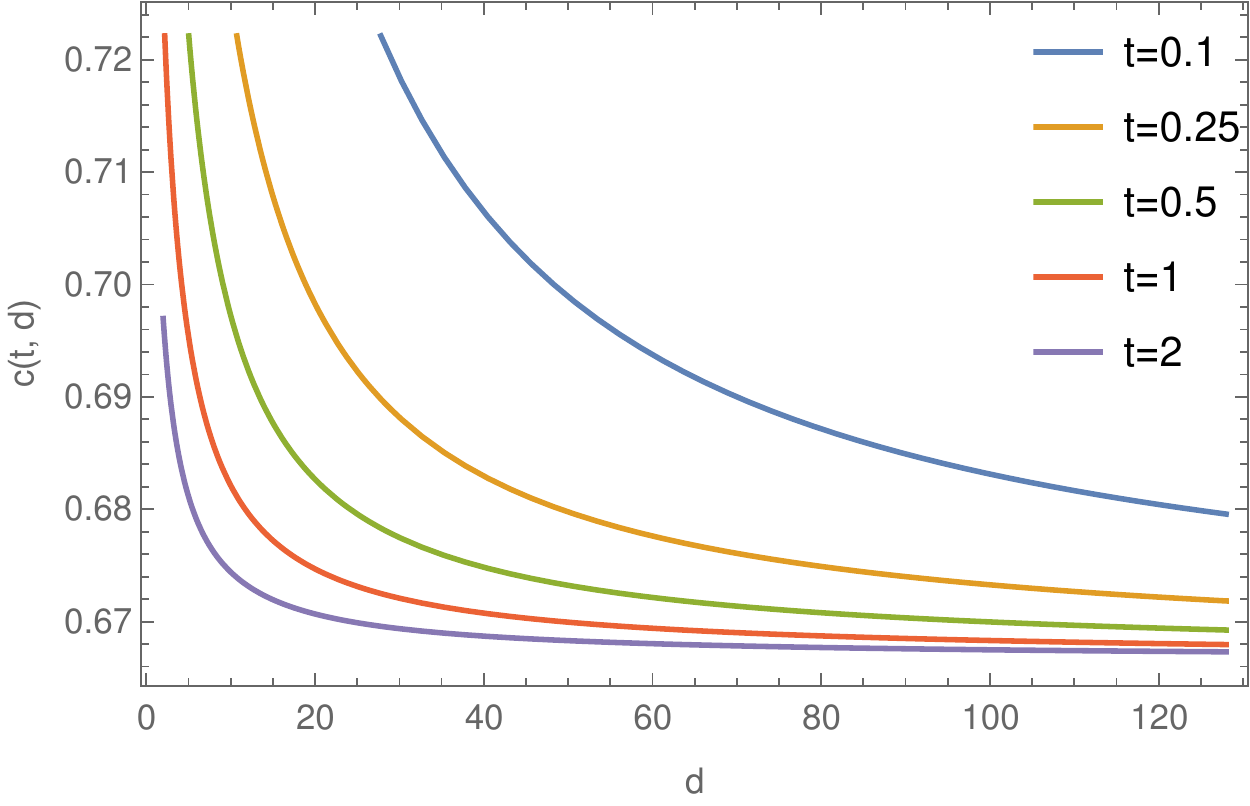}
\caption{The value of $c(\tau, d)$, which determines the range of $\alpha$ for which $\bm{\mu_{\theta}}$ obtains lower loss than $\bm{\delta_v}$ and $\bm{\sigma_{d-1}}$.}
\label{fig:alpha_ub}
\end{figure}

\paragraph{Computing $c(\tau, d)$} Next, we compute the value of $c(\tau, d)$, the constant in Theorem~\ref{thm:geometry}, over values of $\tau$ and $d$ to verify that $\alpha \in (2/3, c(\tau, d)$ is a valid range for which the optimal geometry is neither collapsed nor uniform. This quantity depends on the Wiener constant of the Gaussian $\frac{1}{2\tau}$-energy on $\mathcal{S}^{d-1}$, which does not have a closed form expression (see~\eqref{eq:wiener}). Figure~\ref{fig:alpha_ub} shows that $c(\tau, d) > 2/3$ for $d$ up to $128$ (which is the dimension of our embedding space) and for $\tau = 0.1, 0.25, 0.5, 1, 2$.

\paragraph{Varying $\tau$ and $d$ for $K = 2$}

\begin{figure}[h]
\centering
\includegraphics[scale=0.5]{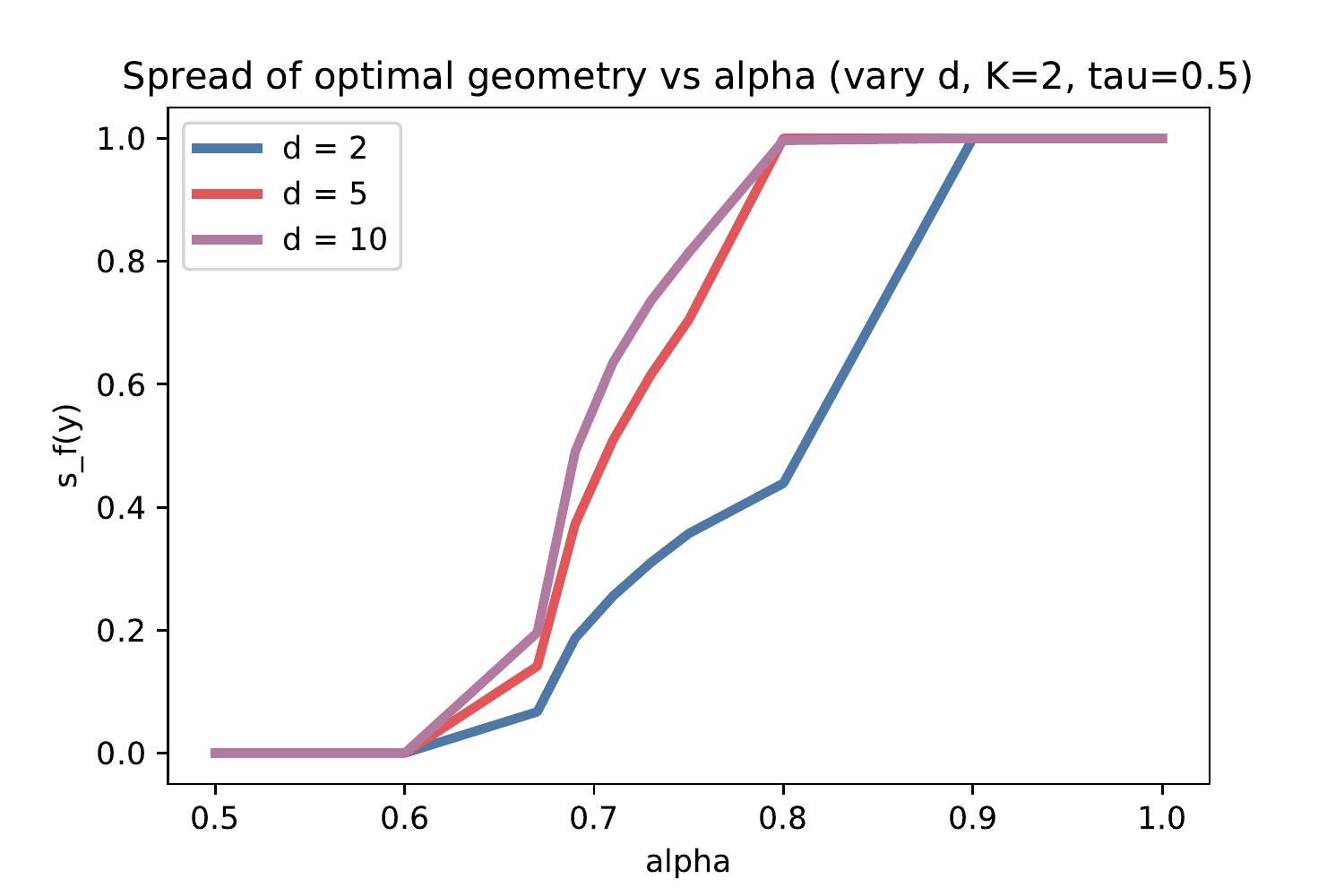}
\includegraphics[scale=0.5]{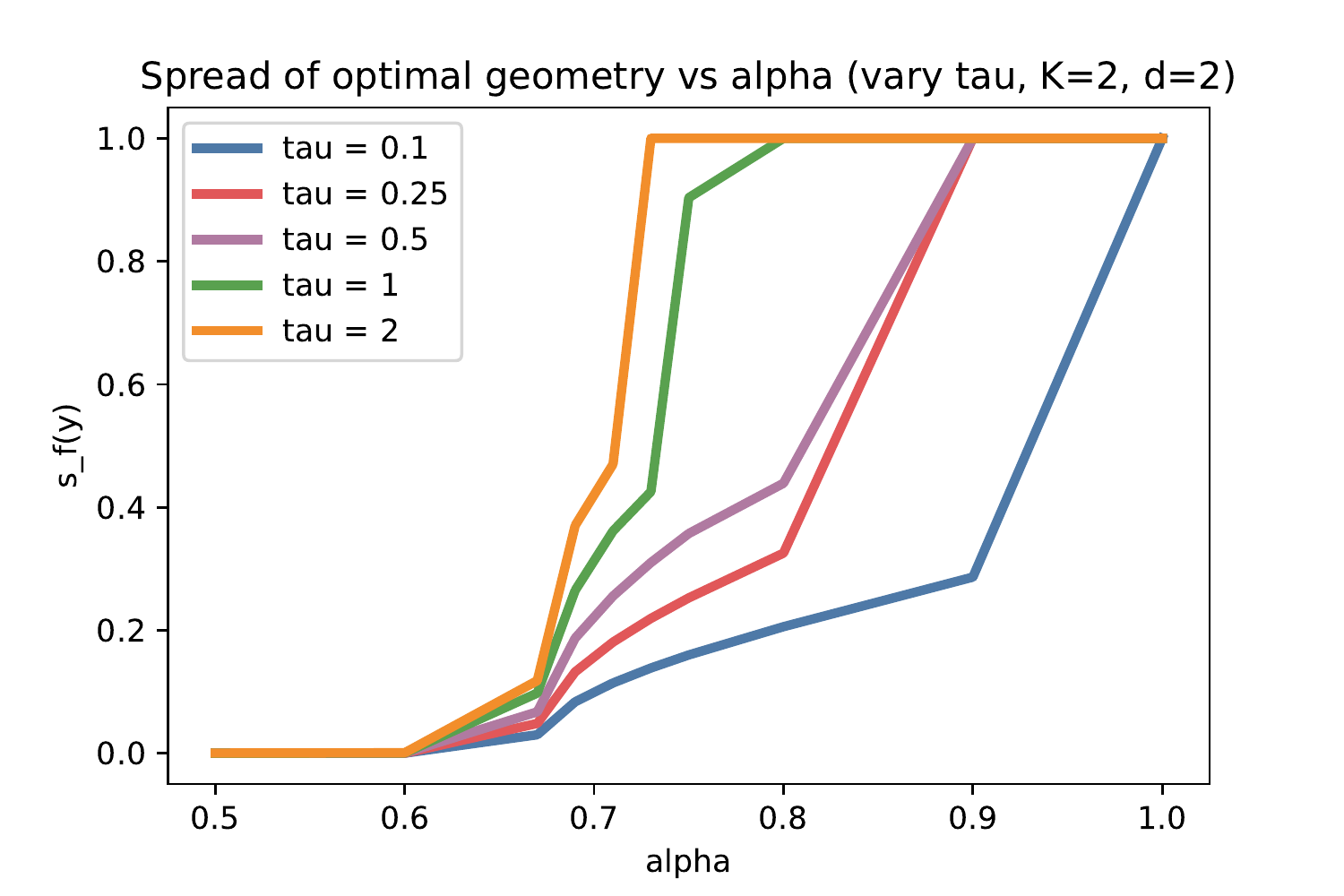}
\caption{The spread $s_f(y)$ of the optimal geometry for a given $\alpha$ in the binary setting, $n_y = 8$. Left: how optimal spread changes based on dimension $d$ of the embedding space. Right: how optimal spread changes based on the temperature hyperparameter $\tau$.}
\label{fig:k2}
\end{figure}

We plot $\alpha$ versus spread $s_f(y)$ for the optimal geometry in the binary setting, and in particular we vary $\tau$ and $d$ in Figure~\ref{fig:k2}. We compute the optimal geometries over $\alpha = 0.5, 0.6, 0.67, 0.69, 0.71, 0.73, 0.75,$ $0.8, 0.9$. Figure~\ref{fig:k2} left shows how the spread changes as $\alpha$ increases for dimensions $d = 2, 5, 10$ and $n_y = 8$ samples with $\tau = 0.25$, and the right shows how the spread changes as $\alpha$ increases for $\tau = 0.1, 0.25, 0.5, 1, 2$ with $d = 2$.
Note that for all dimensions and all $\tau$, the optimal geometry has nonzero spread starting at $\alpha = 0.67$, matching our theoretical findings. The point at which the uniform distribution becomes optimal is less clear but follows the trends we note from Figure~\ref{fig:alpha_ub} for $c(\tau, d)$. This figure matches our findings that an $\alpha$ that induces appropriate spread exists over a certain range, outside of which behavior is strictly collapsed or uniform.

\paragraph{Multiclass Analysis for $K = 3$}

\begin{figure}[h]
\centering
\includegraphics[scale=0.5]{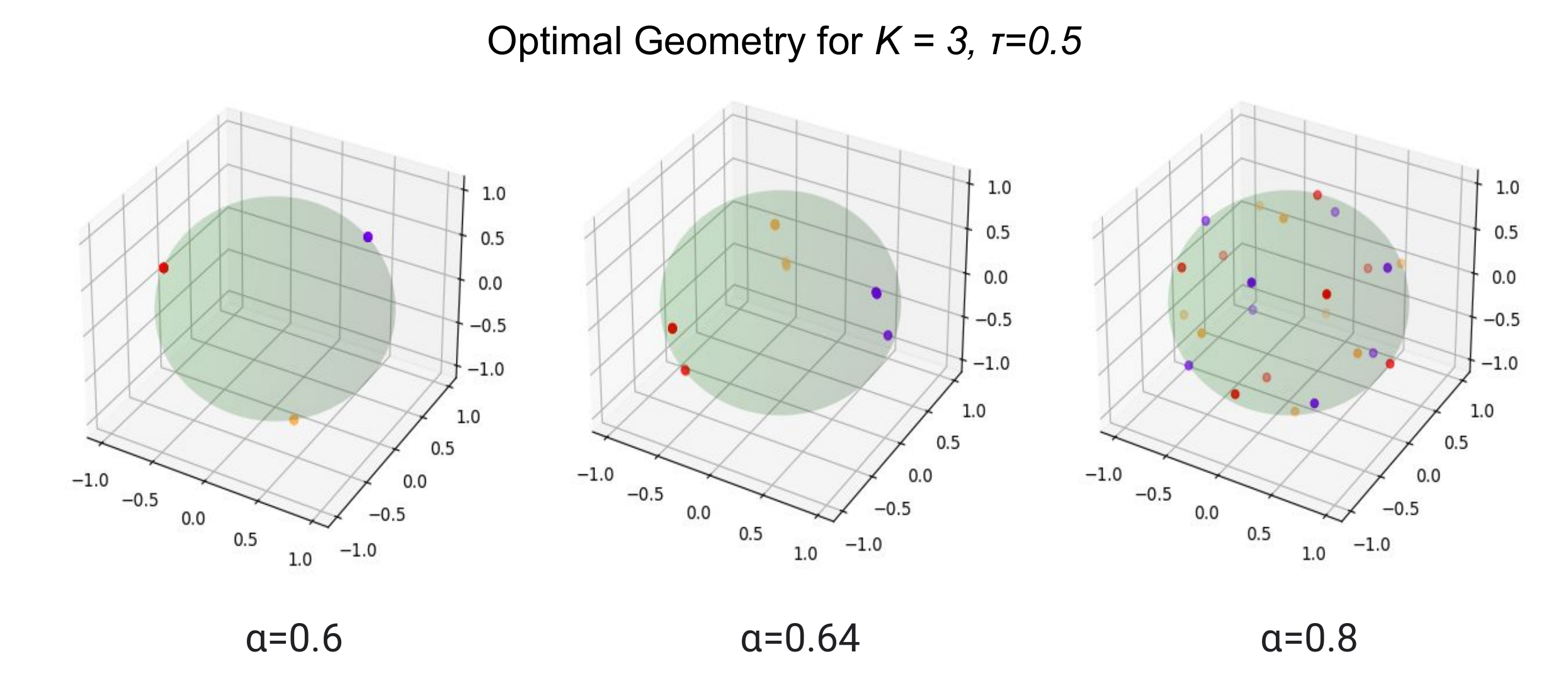}
\caption{Visualizations of the optimal geometry for $n_y = 8, K = 3$ in $\mathcal{S}^2$ across various $\alpha$.}
\label{fig:sphere}
\end{figure}

Similar to Figure~\ref{fig:synthetic_fig}, we show for $K = 3$ and $\mathcal{S}^2$ that the optimal geometry is collapsed for low $\alpha$, sufficiently spread for a particular range, and uniform for high $\alpha$.  Figure~\ref{fig:sphere} displays the optimal geometry for $n_y = 8$, $\tau = 0.5$ across $\alpha = 0.6, 0.64, 0.8$, suggesting that the multiclass case exhibits similar behavior as $\alpha$ varies.

\begin{figure}[h]
\centering
\includegraphics[scale=0.5]{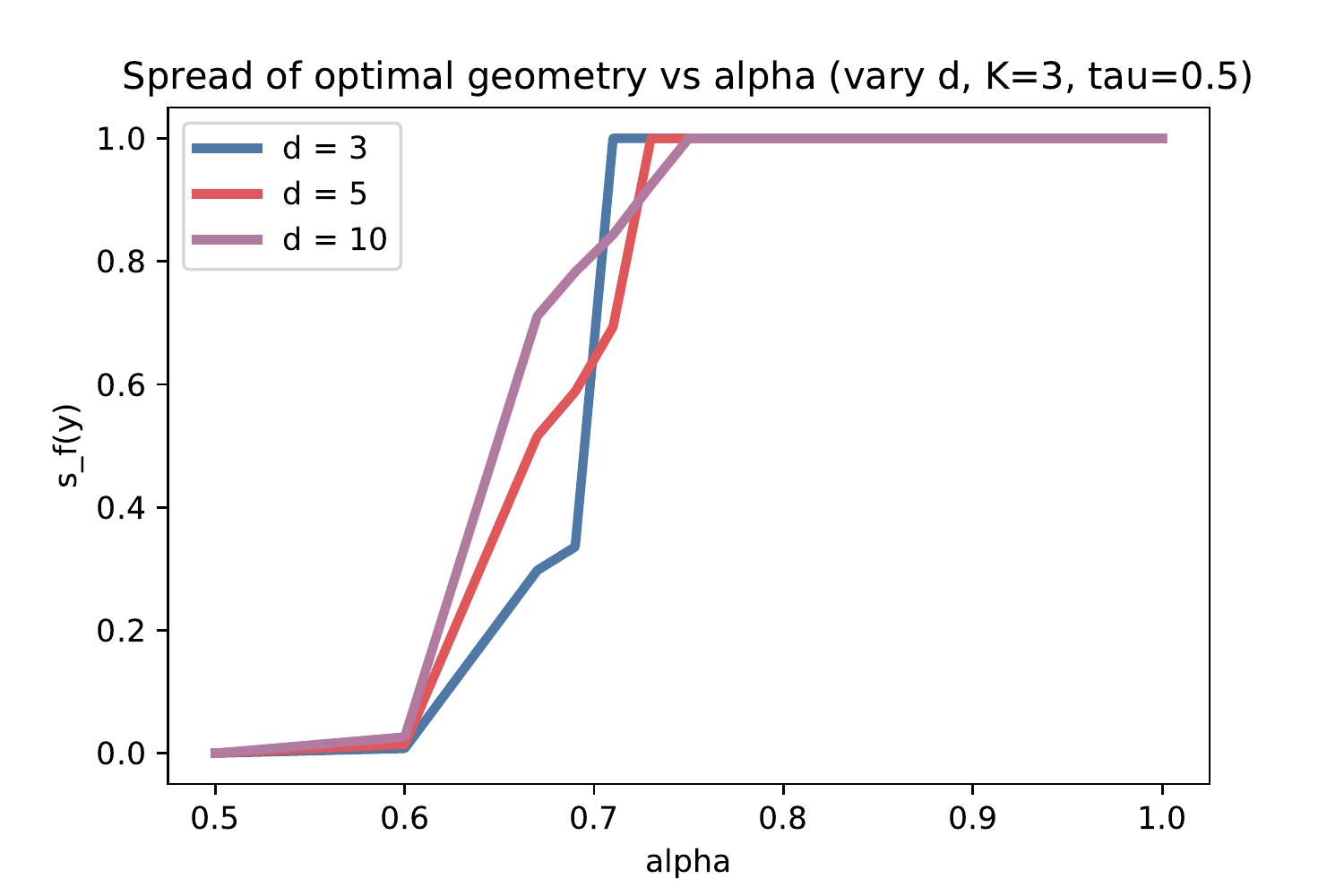}
\includegraphics[scale=0.5]{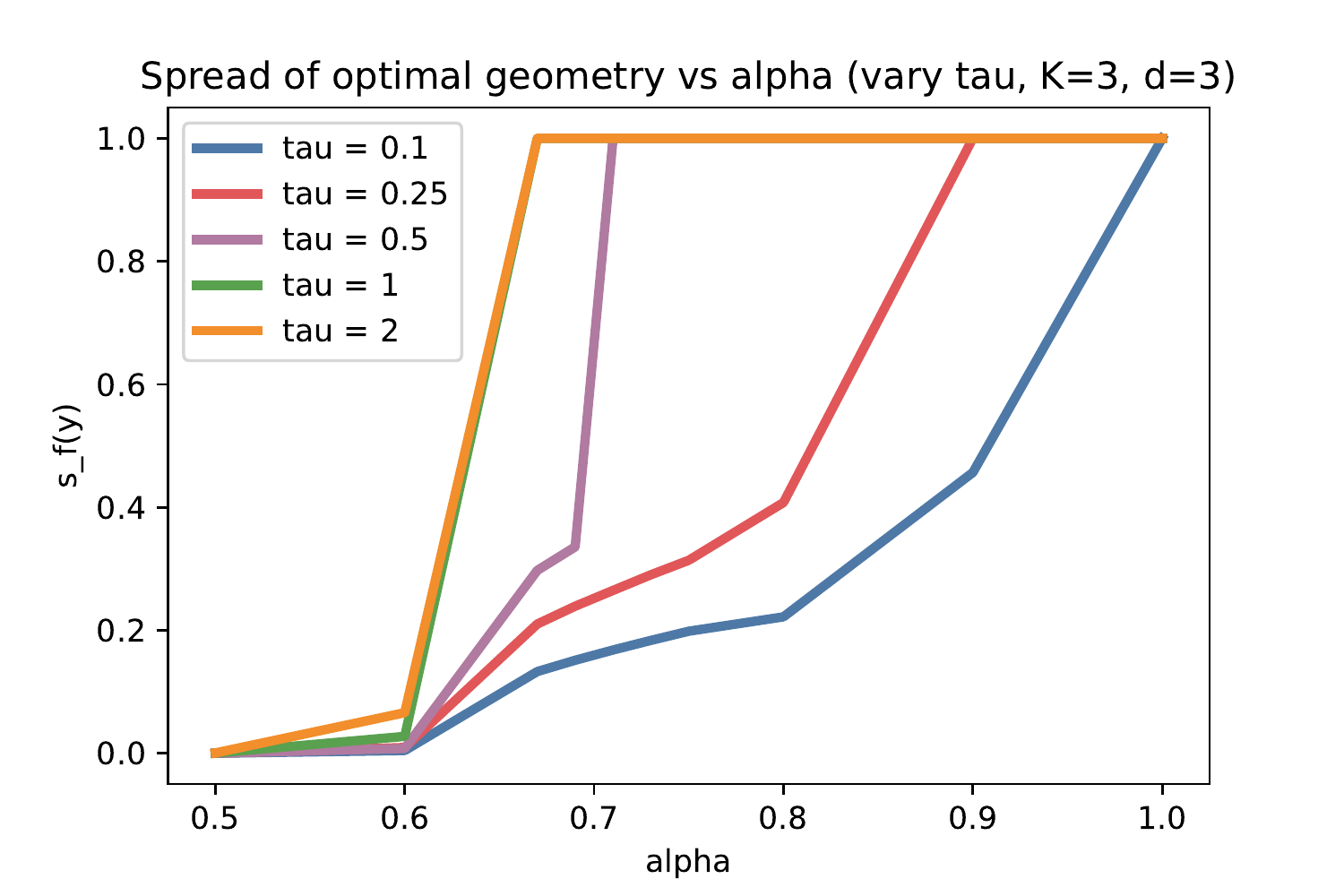}
\caption{The spread $s_f(y)$ of the optimal geometry for a given $\alpha$ when $K = 3, n_y = 8$. Left: how optimal spread changes based on dimension $d$ of the embedding space. Right: how optimal spread changes based on the temperature hyperparameter $\tau$.}
\label{fig:k3}
\end{figure}

We plot $\alpha$ versus spread $s_f(y)$ for the optimal geometry in the multiclass setting, and again we vary $\tau$ and $d$ in Figure~\ref{fig:k3}. We see that the behavior of the optimal geometry's spread $s_f(y)$ across $\alpha$ is roughly similar to that of $K = 2$ in Figure~\ref{fig:k2}.

\end{document}